\definecolor{mydarkblue}{rgb}{0,0.08,0.45}
\newtheorem{proposition}{Proposition}
\newtheorem{corollary}{Corollary}
\newtheorem{assumption}{Assumption}
\newtheorem{definition}{Definition}
\newtheorem{lemma}{Lemma}
\newtheorem{remark}{Remark}
\DeclareMathOperator*{\argmin}{arg\,min}
\DeclareMathOperator{\tr}{tr}
\DeclareMathOperator{\supp}{supp}
\DeclareMathOperator{\rank}{rank}
\newcommand{\subjectto}{\mathrm{subject\ to}}
\newcommand{\andrm}{\mathrm{and}}
\newdimen\arrowsize
\tikzset{>=arcsq}
\title{On the Identifiability of Sparse ICA without\\Assuming Non-Gaussianity}
\author{
Ignavier Ng$^{*1}$,~~Yujia Zheng\thanks{Equal contribution.} \,$^{1}$,~~Xinshuai Dong$^{1}$,~~Kun Zhang$^{1,2}$\\
$^1$ Carnegie Mellon University\\
$^2$ Mohamed bin Zayed University of Artificial Intelligence\\
\texttt{\{ignavierng, yujiazh, dongxinshuai, kunz1\}@cmu.edu}
}
\begin{document}
\doparttoc 
\faketableofcontents 


\maketitle

\begin{abstract}
\vspace{-0.3em}
\looseness=-1
Independent component analysis (ICA) is a fundamental statistical tool used to reveal hidden generative processes from observed data. However, traditional ICA approaches struggle with the rotational invariance inherent in Gaussian distributions, often necessitating the assumption of non-Gaussianity in the underlying sources. This may limit their applicability in broader contexts. To accommodate Gaussian sources, we develop an identifiability theory that relies on second-order statistics without imposing further preconditions on the distribution of sources, by introducing novel assumptions on the connective structure from sources to observed variables. Different from recent work that focuses on potentially restrictive connective structures, our proposed assumption of structural variability is both considerably less restrictive and provably necessary. Furthermore, we propose two estimation methods based on second-order statistics and sparsity constraint. Experimental results are provided to validate our identifiability theory and estimation methods.
\end{abstract}
\vspace{-0.8em}
\section{Introduction}\label{sec:introduction}
\vspace{-0.1em}
Independent component analysis (ICA) \citep{comon1994independent} has emerged as an essential statistical tool in the scientific community, with application in various disciplines including neuroscience \citep{jung2001imaging}, biology \citep{teschendorff2007elucidating,biton2014independent}, and Earth science \citep{moulin2022river}. It aims to uncover the hidden generative processes that govern the observed data and separate mixed signals into independent sources. However, it is known that the traditional approaches of ICA struggle with Gaussian sources, which may limit their applicability in a wide variety of contexts. For instance, several biological traits and measurements such as height, blood pressure, and measurement errors in genetics, as well as the thermal noises in electronic circuits, may often be normally distributed \citep{lynch1998genetics,ott1988noise}. Classical identifiability results in ICA rely on higher-order statistics (e.g., Kurtosis) \citep{hyvarinen2000independent}, and cannot provide desired theoretical guarantees when there is more than one Gaussian source. Identifiability based on second-order statistic is thus essential in these scenarios.\looseness=-1

\looseness=-1
The primary hurdle in applying ICA to Gaussian sources lies in the rotational invariance of Gaussian distribution \citep{hyvarinen2001independent}. To address this issue, earlier studies \citep{matsuoka1995neural, belouchrani1997blind, pham2001blind} incorporated additional information by assuming that the sources are nonstationary. However, this extra information may not always be readily available, limiting the generalizability of these approaches. Thus, recent research \citep{zheng2022identifiability,abrahamsen2018sparse} has started to delve into the connective structure between the sources and the observed variables, as opposed to solely focusing on distributional assumptions (e.g., non-Gaussianity and nonstationarity). This shift of focus is motivated by a key observation: despite the rotational invariance in the distribution of Gaussian sources, the sparsity of mixing matrix undergoes noticeable changes, i.e., it may be denser after rotation \citep{zheng2022identifiability}. Building on this insight, \citet{zheng2022identifiability} introduced two assumptions on the support of the mixing matrix to achieve identifiability of Gaussian sources, leading to a novel perspective for tackling this long-standing challenge in the field of ICA. On the other hand, \citet{abrahamsen2018sparse} assumed that the mixing matrix is generated from a sparse Bernoulli-Gaussian ensemble.

Although the rotational invariance of Gaussian distribution can be resolved by the structural assumptions on the mixing matrix proposed by \citet{zheng2022identifiability}, they may be deemed overly restrictive. For instance, both of their structural assumptions cannot deal with the case where the set of observed variables influenced by one source is a subset of those affected by another source, which may not be uncommon in practice. This may limit the applicability of ICA in complex real-world scenarios, thus underscoring the need for a weaker and more flexible structural assumption that is capable of addressing the rotational invariance in a more universally applicable manner.

To enhance the applicability with Gaussian sources, we develop an identifiability theory of ICA from second-order statistics under more flexible structural constraints. We introduce a novel assumption, namely \textit{structural variability}, that is considerably weaker than existing ones. Notably, this assumption is proved to be among the necessary conditions for identifying Gaussian sources by focusing on the connective structure (i.e., the support of the mixing matrix). Moreover, we propose two estimation methods grounded in sparsity regularization and continuous constrained optimization. The efficacy of our proposed methods has been validated through experiments, which also reaffirm the validity of our theoretical result. Lastly, as a matter of independent interest, we establish the connection between our identifiability result of ICA with causal discovery from second-order statistics; our finding further bridges the gap between these two fields and provides insights into the interpretation of our result.
\section{Problem Setting}\label{sec:problem_setting}
We consider the ICA setup given by
$
\mathbf{x} = \tilde{\mathbf{A}}\mathbf{s},
$
where $\mathbf{x}\in\mathbb{R}^n$ denotes the observed random vector, $\mathbf{s}\in\mathbb{R}^n$ is the latent random vector representing the independent components, also called sources, and $\tilde{\mathbf{A}}=[\tilde{\mathbf{a}}_1|\cdots|\tilde{\mathbf{a}}_n]\in\mathbb{R}^{n\times n}$ denotes the unknown mixing matrix. We assume here that $\tilde{\mathbf{A}}$ has full rank and all sources are standardized. For the estimated mixing matrix $\hat{\mathbf{A}}$ and the ground-truth one $\tilde{\mathbf{A}}$, we write $\hat{\mathbf{A}}\sim\tilde{\mathbf{A}}$ if they differ only in column permutations and sign changes of columns, and $\hat{\mathbf{A}}\not\sim\tilde{\mathbf{A}}$ vice versa. The goal is then to estimate $\hat{\mathbf{A}}$ such that $\hat{\mathbf{A}}\sim\tilde{\mathbf{A}}$; in this case, one could identify a one-on-one mapping between the ground truth sources and the estimated ones, i.e., the unknown mixing process has been demixed during estimation. Since the support of a mixing matrix $\mathbf{A}$ essentially represents the connective structure between sources and observed variables, we have the following definition for the ease of reference.
\begin{definition}[Connective Structure]\label{def:connective_structure}
Given a mixing matrix $\mathbf{A}$, we define its connective structure as a directed bipartite graph $\mathcal{G}_\mathbf{A}=(\mathcal{V}_\mathbf{A},\mathcal{E}_\mathbf{A})$ from sources $\mathbf{s}$ to observed variables $\mathbf{x}$, where the nodes and edges are defined as $\mathcal{V}_\mathbf{A}\coloneqq\{s_i\}_{i=1}^n \cup \{x_i\}_{i=1}^n$ and $\mathcal{E}_\mathbf{A}\coloneqq\{(s_j, x_i):a_{ij}\neq 0\}$, respectively.
\end{definition}

{\bf Notations.} \ \ \ 
We use bold capital letters (e.g.,
$\mathbf{A}$), bold lowercase letters (e.g., $\mathbf{a}$), and italic letters
(e.g., $a$) to denote matrices, vectors, and scalar quantities, respectively. For any matrix $\mathbf{A}$, we denote its $j$-th column by $\mathbf{a}_j$, $i$-th row by $\mathbf{a}_{i,:}$, and $(i,j)$-th entry by $a_{i,j}$. We also denote by $\mathbf{A}_{\mathcal{J}}$ the submatrix of $\mathbf{A}$ by obtaining the columns indexed by set $\mathcal{J}$. For any vector $\mathbf{a}$, we denote its $i$-th entry by $a_i$. We define the support set of matrix $\mathbf{A}$ as $\supp(\mathbf{A})\coloneqq\{(i,j):a_{i,j}\neq 0\}$, and its support matrix as $\boldsymbol{\xi}_\mathbf{A}$ which is of the same size as $\mathbf{A}$, where $(\boldsymbol{\xi}_\mathbf{A})_{i,j}=\times$ if $a_{i,j}\neq 0$ and $(\boldsymbol{\xi}_\mathbf{A})_{i,j}=0$ otherewise. The notations of support set and support matrix are similarly defined for vector $\mathbf{a}$. We denote by $\|\mathbf{A}\|_0$ the number of nonzero entries in $\mathbf{A}$, and we have $\|\mathbf{A}\|_0=|\supp(\mathbf{A})|=\|\boldsymbol{\xi}_\mathbf{A}\|_0$. Furthermore, we denote the $n\times n$ identity matrix and $m\times n$ zero matrix by $\mathbf{I}_n$ and $\mathbf{0}_{m\times n}$, respectively; to lighten the notation, we drop their subscripts when the context is clear. We also use $[n]$ to denote $\{1,2,\dots,n\}$.
\section{Identifiability Result without Assuming Non-Gaussianity}\label{sec:identifiability_result_second_order}
By exploiting the non-Gaussianity of the sources, such as fourth-order cumulant, existing approaches are able to estimate the true mixing matrix $\tilde{\mathbf{A}}$ up to signed column permutation when there is at most one Gaussian source \citep{hyvarinen1997fastica,albera2005icar}. However, these approaches typically fail in the presence of more than one Gaussian source, because higher-order statistics cannot be utilized for full identifiability. The primary challenge of achieving identifiability for Gaussian sources lies in the rotational invariance of the Gaussian distribution. More specifically, the second-order statistics (or, specifically, population-level covariance matrix) $\tilde{\mathbf{\Sigma}}=\tilde{\mathbf{A}}\tilde{\mathbf{A}}^\top$ remains unchanged if one replaces $\tilde{\mathbf{A}}$ with $\tilde{\mathbf{A}}\mathbf{U}$ for any orthogonal matrix $\mathbf{U}$. Therefore, considering only second-order statistics, the true mixing matrix $\tilde{\mathbf{A}}$ is, generally speaking, only identifiable up to right orthogonal transformation without further assumptions. In this section, we adopt a different perspective that departs from traditional distributional assumptions (i.e., non-Gaussianity and fourth-order cumulant), and instead introduces novel and precise assumptions on the connective structure, specifically the support of the mixing matrix. These assumptions enable identification of the true mixing matrix $\tilde{\mathbf{A}}$ up to signed column permutation using second-order statistics and sparsity constraint. Roughly speaking, with the population-level covariance matrix $\tilde{\mathbf{\Sigma}}$, we consider the following formulation:
\begin{equation}\label{eq:sparsity_optimization_high_level}
\min_{\mathbf{A}\in\mathbb{R}^{n\times n}}  \|\mathbf{A}\|_0 \quad \subjectto  \quad \mathbf{A}\mathbf{A}^\top=\tilde{\mathbf{\Sigma}}=\tilde{\mathbf{A}}\tilde{\mathbf{A}}^\top.
\end{equation}
This approach aligns with the underlying principles of several works that incorporate sparsity constraints on the decoder function \citep{moran2022identifiable,brady2023provably,xi2023indeterminacy}. Note that we start with the assumption of $\mathbf{A}\mathbf{A}^\top=\tilde{\mathbf{\Sigma}}$ in the formulation above and our identifiability result (e.g., in Theorem \ref{thm:identifiability_ica}); such an assumption can be obtained, e.g., from the equality of Gaussian likelihoods in the large sample limit.  This also inspires our sparsity-regularized likelihood-based estimation method that will be described in Section \ref{sec:estimation_method}, which is more inline with, e.g.,  model selection approaches based on sparsity-regularized likelihood \citep{schwarz1978estimating,friedman2008sparse}.\looseness=-1

In Section \ref{sec:distributional_constraints}, we first examine various types of constraints arising from second-order statistics. We then present the main identifiability result in Section \ref{sec:identifiability_result}. We show that our assumptions are strictly weaker than existing sparsity assumptions in Section \ref{sec:related_assumptions}, and establish the connection between our identifiability theory with causal discovery in Section \ref{sec:connection_causal_discovery}. All proofs are given in Appendices \ref{sec:proof_thm_identifiability_ica} and \ref{app:proofs}.\looseness=-1

\vspace{-0.3em}
\subsection{Semialgebraic Constraints Arising from Second-Order Statistics}\label{sec:distributional_constraints}
\vspace{-0.1em}
We first discuss various notions related to constraints arising from covariance matrices of observed variables $\mathbf{x}$, which serve as a fundamental basis for introducing our assumptions and identifiability result of ICA in Section \ref{sec:identifiability_result}. These notions have been studied in the field of algebraic statistics \citep{drton2009lectures}, factor analysis \citep{drton2005algebraic}, graphical models \citep{geiger2001stratified,sullivant2008algebraic}, and causality \citep{drton2018algebraic,ghassami2020characterizing}.

We begin with the following definition on the set of covariance matrices entailed by $\boldsymbol{\xi}$ for different values of the free parameters in $\boldsymbol{\xi}$.
\begin{definition}[Covariance Set]
The covariance set of support matrix $\boldsymbol{\xi}$ is defined as
\[
\mathbf{\Sigma}(\boldsymbol{\xi})\coloneqq\{\mathbf{A}\mathbf{A}^\top: \mathbf{A}\in\mathbb{R}^{n\times n} ,\supp(\mathbf{A})\subseteq\supp(\boldsymbol{\xi}),\mathbf{A}\text{ is non-singular}\}.
\]
\end{definition}
\vspace{-0.05em}
The support $\boldsymbol{\xi}$ of mixing matrix $\mathbf{A}$ imposes certain constraints on the entries of the covariance matrix $\mathbf{\Sigma}=\mathbf{A}\mathbf{A}^\top$, which, by Tarski–Seidenberg theorem (see \citep{benedetti1990real}), correspond to \emph{semialgebraic constraints}, i.e., equality and inequality constraints. The covariance set $\mathbf{\Sigma}(\boldsymbol{\xi})$ is then said to be a \emph{semialgebraic set}, i.e., a set that can be described with a finite number of polynomial equations and inequalities~\citep{benedetti1990real}. Clearly, if a covariance matrix $\mathbf{\Sigma}$ belongs to the covariance set $\mathbf{\Sigma}(\boldsymbol{\xi})$, then $\mathbf{\Sigma}$ satisfies the semialgebraic constraints imposed by $\boldsymbol{\xi}$.

For an equality constraint, the set of values satisfying the constraint has zero Lebesgue measure over the parameter space involved. Given a support matrix $\boldsymbol{\xi}$, we denote by $H(\boldsymbol{\xi})$ the set of equality constraints it imposes on the corresponding covariance matrices. On the other hand, the set of values satisfying an inequality constraint has nonzero Lebesgue measure. To illustrate these constraints, we provide a three-variable example below. Note that the example only serves as illustrations of the constraints; our estimation methods (in Section \ref{sec:estimation_method_second_order}) do not require deriving them in practice.

\begin{restatable}[Semialgebraic Constraints]{example}{ExampleDistributionalConstraint}\label{example:distributional_constraint}
Consider support matrices
\[
\boldsymbol{\xi}_1=\begin{bmatrix}
\times & 0 & 0 \\
\times & \times & 0 \\
\times & 0 & \times \\
\end{bmatrix}
\qquad \text{and} \qquad 
\boldsymbol{\xi}_2=\begin{bmatrix}
\times & 0 & \times \\
\times & \times & 0 \\
0 & \times & \times \\
\end{bmatrix}.
\]
The equality constraints imposed by $\boldsymbol{\xi}_1$ include
\[
\Sigma_{1,1}\Sigma_{2,3}-\Sigma_{1,2}\Sigma_{1,3}=0,
\]
while the inequality constraints imposed by $\boldsymbol{\xi}_2$ include
\[
(\Sigma_{1,1}\Sigma_{2,2}\Sigma_{3,3}+\Sigma_{1,1}\Sigma_{2,3}^2-\Sigma_{2,2}\Sigma_{1,3}^2-\Sigma_{3,3}\Sigma_{1,2}^2)^2-4(\Sigma_{1,1}\Sigma_{2,2}-\Sigma_{1,2}^2)(\Sigma_{1,1}\Sigma_{3,3}\Sigma_{2,3}^2-\Sigma_{1,3}^2\Sigma_{2,3}^2)\geq 0.
\]
\end{restatable}
\vspace{-0.05em}
The detailed derivation can be found in Appendix \ref{app:proof_distributional_constraint} and provides insights into how such constraints arise from the corresponding support matrices. In the example above, the covariance matrix $\mathbf{\Sigma}$ generated by any mixing matrix $\mathbf{A}$ with support $\boldsymbol{\xi}_1$ must satisfy the corresponding equality constraint; similarly, the covariance matrix $\mathbf{\Sigma}$ generated by any mixing matrix $\mathbf{A}$ with support $\boldsymbol{\xi}_2$ must satisfy the above inequality constraint. These constraints serve as footprints of the mixing matrix on the covariance matrix, and can be exploited for its identifiability, which we explain in the next section.

\subsection{Identifiability Result from Second-Order Statistics}\label{sec:identifiability_result}
In this section, we present our identifiability result of ICA from second-order statistics. The core idea is to introduce precise and mild assumptions on the connective structure from sources to observed variables. These assumptions facilitate the identification of the mixing matrix through the application of a sparsity constraint, formulated in Problem \eqref{eq:sparsity_optimization_high_level}. To begin, we describe our primary assumption concerning the connective structure as follows.

\begin{assumption}[Structural Variability]
\label{assumption:structural_variability}
Every pair of the columns in the support matrix of $\mathbf{A}$ differ in more than one entry. That is, for every $i,j\in [n]$ and $i\neq j$, we have
\begin{equation*} \label{eq:structural_variability}
    |\supp(\mathbf{a}_i)\cup \supp(\mathbf{a}_j)|-|\supp(\mathbf{a}_i)\cap \supp(\mathbf{a}_j)|>1.
\end{equation*}
\end{assumption}
\vspace{-0.2em}
The assumption above implies that every pair of sources should influence more than one different observed variable. Notably, in the field of nonlinear ICA with auxiliary variable, \citet{hyvarinen2016unsupervised,hyvarinen2019nonlinear} have adopted the assumption of \emph{sufficient variability} which requires that the auxiliary variable has a sufficiently diverse effect on the distributions of sources; specifically, the conditional distributions of the sources given the auxiliary variable must vary sufficiently. In contrast, our assumption of \emph{structural variability} requires that every pair of sources influence sufficiently diverse sets of observed variables, facilitating the disentanglement of each source.\looseness=-1

We provide several examples in Appendix \ref{app:examples} to illustrate the broad applicability of the assumption above. Furthermore, the following proposition justifies such an assumption because it is a necessary condition for identifiability via second-order statistics and sparsity. The intuition is that if Assumption \ref{assumption:structural_variability} is violated, there exists a rotation that maps matrix $\tilde{\mathbf{A}}$ to another matrix $\hat{\mathbf{A}}$ which has equal or smaller number of nonzero entries and is not a column permutation of $\tilde{\mathbf{A}}$.

\begin{restatable}{proposition}{PropositionNecessaryCondition}\label{proposition:necessary_condition}
If the true mixing matrix $\tilde{\mathbf{A}}$ does not satisfy Assumption \ref{assumption:structural_variability}, then there exists a solution $\hat{\mathbf{A}}$ to Problem \eqref{eq:sparsity_optimization_high_level} such that $\hat{\mathbf{A}}\not\sim\tilde{\mathbf{A}}$.
\end{restatable}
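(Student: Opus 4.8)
The plan is to show that, whenever Assumption~\ref{assumption:structural_variability} fails, a single Givens rotation applied to $\tilde{\mathbf{A}}$ produces a feasible point of Problem~\eqref{eq:sparsity_optimization_high_level} that has no more nonzeros than $\tilde{\mathbf{A}}$ and is not a signed column permutation of it. Since $\tilde{\mathbf{A}}$ is non-singular, the feasible set of Problem~\eqref{eq:sparsity_optimization_high_level} is exactly $\{\tilde{\mathbf{A}}\mathbf{U}:\mathbf{U}\mathbf{U}^\top=\mathbf{I}\}$, which is nonempty, and $\|\cdot\|_0$ is integer-valued, so a global minimizer $\mathbf{A}^\star$ exists and satisfies $\|\mathbf{A}^\star\|_0\le\|\tilde{\mathbf{A}}\|_0$. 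The reduction I will use is: if the constructed point $\hat{\mathbf{A}}$ satisfies $\hat{\mathbf{A}}\not\sim\tilde{\mathbf{A}}$ and $\|\hat{\mathbf{A}}\|_0\le\|\tilde{\mathbf{A}}\|_0$, then either $\|\mathbf{A}^\star\|_0<\|\tilde{\mathbf{A}}\|_0$, in which case $\mathbf{A}^\star\not\sim\tilde{\mathbf{A}}$ because $\sim$ preserves $\|\cdot\|_0$, or $\|\mathbf{A}^\star\|_0=\|\tilde{\mathbf{A}}\|_0$, in which case $\|\hat{\mathbf{A}}\|_0=\|\mathbf{A}^\star\|_0$ so $\hat{\mathbf{A}}$ is itself a minimizer with $\hat{\mathbf{A}}\not\sim\tilde{\mathbf{A}}$. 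Either way the proposition follows, so all the work is in the construction.

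For the construction, let $i\ne j$ be columns witnessing the failure of Assumption~\ref{assumption:structural_variability}, write $S_i=\supp(\tilde{\mathbf{a}}_i)$ and $S_j=\supp(\tilde{\mathbf{a}}_j)$, and note that the failure is precisely $|S_i\cup S_j|-|S_i\cap S_j|=|S_i\setminus S_j|+|S_j\setminus S_i|\le 1$. I take $\hat{\mathbf{A}}=\tilde{\mathbf{A}}\mathbf{U}(\theta)$ where $\mathbf{U}(\theta)$ is the $n\times n$ Givens rotation in the $(i,j)$-plane, i.e.\ $\hat{\mathbf{a}}_i=\cos\theta\,\tilde{\mathbf{a}}_i+\sin\theta\,\tilde{\mathbf{a}}_j$, $\hat{\mathbf{a}}_j=-\sin\theta\,\tilde{\mathbf{a}}_i+\cos\theta\,\tilde{\mathbf{a}}_j$, and $\hat{\mathbf{a}}_m=\tilde{\mathbf{a}}_m$ for $m\ne i,j$; then $\hat{\mathbf{A}}\hat{\mathbf{A}}^\top=\tilde{\mathbf{\Sigma}}$ automatically for every $\theta$ and $\hat{\mathbf{A}}$ is non-singular, hence feasible. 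There are two cases. If $|S_i\setminus S_j|+|S_j\setminus S_i|=0$ then $S_i=S_j$, and since the columns of $\tilde{\mathbf{A}}$ are linearly independent they cannot be proportional, forcing $|S_i|\ge 2$; I pick any $k\in S_i$ and set $\tan\theta=-\tilde{a}_{k,i}/\tilde{a}_{k,j}$, which is finite and nonzero, so that $(\hat{\mathbf{a}}_i)_k=0$, giving $\supp(\hat{\mathbf{a}}_i)\subseteq S_i\setminus\{k\}$ and $\supp(\hat{\mathbf{a}}_j)\subseteq S_i$, hence $\|\hat{\mathbf{a}}_i\|_0+\|\hat{\mathbf{a}}_j\|_0\le 2|S_i|-1<\|\tilde{\mathbf{a}}_i\|_0+\|\tilde{\mathbf{a}}_j\|_0$. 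If $|S_i\setminus S_j|+|S_j\setminus S_i|=1$ then, relabeling $i$ and $j$ if necessary, $S_i\subsetneq S_j$ with $S_j\setminus S_i$ a singleton; I pick any $\ell\in S_i$ (nonempty since $\tilde{\mathbf{a}}_i\ne\mathbf{0}$), note $\tilde{a}_{\ell,i},\tilde{a}_{\ell,j}\ne 0$, and set $\tan\theta=\tilde{a}_{\ell,j}/\tilde{a}_{\ell,i}$, so that $(\hat{\mathbf{a}}_j)_\ell=0$, giving $\supp(\hat{\mathbf{a}}_j)\subseteq S_j\setminus\{\ell\}$ and $\supp(\hat{\mathbf{a}}_i)\subseteq S_i\cup S_j=S_j$, hence $\|\hat{\mathbf{a}}_i\|_0+\|\hat{\mathbf{a}}_j\|_0\le |S_j|+(|S_j|-1)=|S_i|+|S_j|=\|\tilde{\mathbf{a}}_i\|_0+\|\tilde{\mathbf{a}}_j\|_0$. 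Since the remaining columns are unchanged, $\|\hat{\mathbf{A}}\|_0\le\|\tilde{\mathbf{A}}\|_0$ in both cases.

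It remains to verify $\hat{\mathbf{A}}\not\sim\tilde{\mathbf{A}}$. In both cases $\tan\theta$ is finite and nonzero, so $\cos\theta\ne 0$ and $\sin\theta\ne 0$; combined with linear independence of $\tilde{\mathbf{a}}_1,\dots,\tilde{\mathbf{a}}_n$, the vector $\hat{\mathbf{a}}_i=\cos\theta\,\tilde{\mathbf{a}}_i+\sin\theta\,\tilde{\mathbf{a}}_j$ cannot be a scalar multiple of any column of $\tilde{\mathbf{A}}$, whereas $\hat{\mathbf{A}}\sim\tilde{\mathbf{A}}$ would require every column of $\hat{\mathbf{A}}$ to equal some column of $\tilde{\mathbf{A}}$ up to sign. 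Together with the reduction in the first paragraph, this completes the argument.

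I expect the only real friction to be bookkeeping rather than depth: splitting cleanly into the two Hamming-distance regimes, discharging the degenerate sub-configurations that linear independence of the columns of $\tilde{\mathbf{A}}$ already excludes, and, crucially in the distance-one regime where the nonzero count merely fails to increase, routing the conclusion through the ``not a signed column permutation'' property plus the observation that a feasible point with at most $\|\tilde{\mathbf{A}}\|_0$ nonzeros is either already optimal or is dominated by an optimum that differs from $\tilde{\mathbf{A}}$ in sparsity, rather than through a naive ``$\tilde{\mathbf{A}}$ is suboptimal'' claim, which is false when $S_i\subsetneq S_j$.
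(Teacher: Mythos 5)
Your proposal is correct and follows essentially the same route as the paper: split on whether the two offending columns' supports differ in zero or one entry, apply the corresponding Givens rotation (the paper's ``reduction'' and ``reversible acute rotation'' from its support-rotation remark, which you instantiate explicitly), and then conclude via the observation that the constructed feasible point is either itself a minimizer or dominated by a strictly sparser one, either of which is $\not\sim\tilde{\mathbf{A}}$. The only (harmless) difference is that you certify $\hat{\mathbf{A}}\not\sim\tilde{\mathbf{A}}$ via linear independence of the columns rather than the paper's row-support-cardinality argument, and you correctly claim only $\|\hat{\mathbf{A}}\|_0\le\|\tilde{\mathbf{A}}\|_0$ where the paper asserts equality.
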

\begin{remark}[Necessary Condition]
Assumption \ref{assumption:structural_variability} is a necessary condition for identifiability of ICA via second-order statistics and under sparsity constraint.
\end{remark}

We also adopt the following assumption on the mixing matrix for the identifiability of ICA.
\begin{assumption}[Permutations to Lower Triangular Matrix]\label{assumption:lower_triangular}
The matrix $\mathbf{A}$ can be permuted by separate row and column permutations to be lower triangular. That is, there exist permutation matrices $\mathbf{P}_1$ and $\mathbf{P}_2$ such that $\mathbf{P}_1^\top \mathbf{A} \mathbf{P}_2$ is lower triangular.
\end{assumption}
\vspace{-0.1em}
As we show in the proof of identifiability result in Theorem \ref{thm:identifiability_ica}, Assumption \ref{assumption:lower_triangular}, loosely speaking, ensures that the resulting covariance matrix does not contain ``nontrivial'' inequality constraints. In Example \ref{example:distributional_constraint}, support matrix $\boldsymbol{\xi}_1$ satisfies Assumption \ref{assumption:lower_triangular} and leads to an equality constraint, while matrix $\boldsymbol{\xi}_2$ fails to meet this assumption, resulting in an inequality constraint. The Lebesgue measure of the parameters leading to such inequality constraint is not zero, thus requiring additional assumptions to handle such cases. Therefore, we adopt Assumption \ref{assumption:lower_triangular} in this work and focus on equality constraints.

A key ingredient of our identifiability result based on sparsity is the dimension of the covariance set $\mathbf{\Sigma}(\boldsymbol{\xi})$. It may be natural to expect that the dimension of $\mathbf{\Sigma}(\boldsymbol{\xi})$, denoted as $\dim(\mathbf{\Sigma}(\boldsymbol{\xi}))$, equals the number of parameters used to specify the mixing matrices, i.e., $\|\boldsymbol{\xi}\|_0$. This is not the case for general mixing matrices, but we show that such property holds under Assumption \ref{assumption:lower_triangular}.
\begin{restatable}[Dimension of Covariance Set]{proposition}{PropositionDimension}\label{proposition:covariance_matrix_dimension}
Let $\boldsymbol{\xi}$ be a support matrix that satisfies Assumption~\ref{assumption:lower_triangular}. Then, its covariance set has a dimension of $\|\boldsymbol{\xi}\|_0$, i.e., $\dim(\mathbf{\Sigma}(\boldsymbol{\xi}))=\|\boldsymbol{\xi}\|_0$.
\end{restatable}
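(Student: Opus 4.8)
The plan is to realize $\mathbf{\Sigma}(\boldsymbol{\xi})$ as the image of the polynomial map $\phi(\mathbf{A})=\mathbf{A}\mathbf{A}^\top$ restricted to the non-singular matrices with support contained in $\supp(\boldsymbol{\xi})$, and then to compute its dimension from the rank of the differential $d\phi$. First I would reduce to the case where $\boldsymbol{\xi}$ itself is lower triangular: if $\mathbf{P}_1^\top\mathbf{A}\mathbf{P}_2$ is lower triangular as in Assumption~\ref{assumption:lower_triangular}, then replacing $\boldsymbol{\xi}$ by the support of $\mathbf{P}_1^\top\boldsymbol{\xi}\mathbf{P}_2$ changes neither $\|\boldsymbol{\xi}\|_0$ nor $\dim(\mathbf{\Sigma}(\boldsymbol{\xi}))$, since $\mathbf{A}\mapsto\mathbf{P}_1^\top\mathbf{A}\mathbf{P}_2$ is a support-permuting, non-singularity-preserving bijection, and it conjugates the covariance set by the fixed permutation $\mathbf{P}_1$, which is a linear automorphism of the space of symmetric matrices. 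Note also that any non-singular lower-triangular matrix has nonzero diagonal, so after this reduction $\supp(\boldsymbol{\xi})$ contains the full diagonal (otherwise $\mathbf{\Sigma}(\boldsymbol{\xi})=\emptyset$ and there is nothing to prove).

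With $\boldsymbol{\xi}$ lower triangular, write $k=\|\boldsymbol{\xi}\|_0$, identify the linear space $\mathcal{L}(\boldsymbol{\xi})\coloneqq\{\mathbf{A}:\supp(\mathbf{A})\subseteq\supp(\boldsymbol{\xi})\}$ with $\mathbb{R}^k$, and let $U\subseteq\mathcal{L}(\boldsymbol{\xi})$ be the Zariski-open dense subset of non-singular matrices, so that $\mathbf{\Sigma}(\boldsymbol{\xi})=\phi(U)$. Since $\phi$ is polynomial, $\dim(\mathbf{\Sigma}(\boldsymbol{\xi}))\le\dim U=k$, which is one half of the claim. The other half reduces to showing that the differential $d\phi_{\mathbf{A}}(\mathbf{H})=\mathbf{A}\mathbf{H}^\top+\mathbf{H}\mathbf{A}^\top$, viewed as a linear map $\mathcal{L}(\boldsymbol{\xi})\to\mathrm{Sym}(n)$, is injective for every $\mathbf{A}\in U$ (or even at a single point such as $\mathbf{A}=\mathbf{I}$).

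This injectivity is the technical heart. Suppose $\mathbf{A}\mathbf{H}^\top+\mathbf{H}\mathbf{A}^\top=0$ with $\mathbf{A}\in U$ and $\mathbf{H}\in\mathcal{L}(\boldsymbol{\xi})$. Then $\mathbf{B}\coloneqq\mathbf{A}^{-1}\mathbf{H}$ is skew-symmetric and $\mathbf{H}=\mathbf{A}\mathbf{B}$, where $\mathbf{A}$ is non-singular lower triangular and $\mathbf{H}$ is lower triangular. A row-by-row induction then forces $\mathbf{B}=0$: the first row of $\mathbf{A}\mathbf{B}$ equals $a_{1,1}$ times the first row of $\mathbf{B}$, so the vanishing of its strictly upper part (lower-triangularity of $\mathbf{H}$) together with $a_{1,1}\ne 0$ gives $b_{1,j}=0$ for $j>1$, while $b_{1,1}=0$ by skew-symmetry; and once rows $1,\dots,i-1$ of $\mathbf{B}$ are known to vanish, $(\mathbf{A}\mathbf{B})_{i,j}=\sum_{\ell\le i}a_{i,\ell}b_{\ell,j}=a_{i,i}b_{i,j}$ for $j>i$, which must be zero, while $b_{i,j}=-b_{j,i}=0$ for $j<i$ by the inductive hypothesis, so row $i$ of $\mathbf{B}$ vanishes as well. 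Hence $\mathbf{H}=\mathbf{A}\mathbf{B}=0$, i.e.\ $\ker d\phi_{\mathbf{A}}=\{0\}$.

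To conclude, since $d\phi$ has constant rank $k$ on the $k$-dimensional manifold $U$, the constant-rank theorem shows that $\phi(U)$ locally contains a $k$-dimensional embedded submanifold; equivalently, the generic fibre of $\phi\colon U\to\overline{\phi(U)}$ is $0$-dimensional, so $\dim(\mathbf{\Sigma}(\boldsymbol{\xi}))=\dim\overline{\phi(U)}=k$. Combined with the upper bound, $\dim(\mathbf{\Sigma}(\boldsymbol{\xi}))=\|\boldsymbol{\xi}\|_0$. The main obstacle is precisely the kernel computation above: one must see why lower-triangularity of $\boldsymbol{\xi}$ (hence of both $\mathbf{A}$ and $\mathbf{H}$) forces the skew-symmetric factor $\mathbf{B}$ to vanish — this is exactly where Assumption~\ref{assumption:lower_triangular} enters, and it is sharp, since for a general support pattern $d\phi$ may acquire a kernel, coming from a positive-dimensional family of column rotations preserving the pattern, in which case the image drops below dimension $\|\boldsymbol{\xi}\|_0$. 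The only other point needing care is the standard passage from ``injective differential'' to ``the semialgebraic image has full dimension.''
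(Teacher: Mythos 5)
Your proof is correct, and it follows the same overall skeleton as the paper's --- reduce to a lower-triangular support, then show that the differential of $\mathbf{A}\mapsto\mathbf{A}\mathbf{A}^\top$ restricted to matrices with that support has full rank $\|\boldsymbol{\xi}\|_0$ --- but the two substantive steps are executed differently. The paper establishes the rank claim only at the single point $\mathbf{A}=\mathbf{I}_n$, by writing out the Jacobian entrywise (Lemma \ref{lemma:rank_identity}) and observing that for a lower-triangular support it decomposes into identity blocks; it then invokes the fact that $\dim(\mathbf{\Sigma}(\boldsymbol{\xi}))$ equals the maximum rank of this Jacobian, citing \citet[Theorem~10]{geiger2001stratified}. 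You instead prove the stronger statement that $d\phi_{\mathbf{A}}(\mathbf{H})=\mathbf{A}\mathbf{H}^\top+\mathbf{H}\mathbf{A}^\top$ has trivial kernel on $\mathcal{L}(\boldsymbol{\xi})$ at \emph{every} non-singular lower-triangular $\mathbf{A}$ with the given support, via the factorization $\mathbf{H}=\mathbf{A}\mathbf{B}$ with $\mathbf{B}$ skew-symmetric and a row-by-row induction forcing $\mathbf{B}=\mathbf{0}$; I checked the induction and it is sound. This buys you two things: the conclusion follows from the constant-rank/immersion theorem rather than an external algebraic-statistics result, and the argument makes transparent exactly where Assumption \ref{assumption:lower_triangular} enters (a non-triangular support pattern can admit a nonzero skew-symmetric $\mathbf{B}$, i.e.\ an infinitesimal column rotation preserving the pattern, which is precisely the degeneracy the assumption rules out). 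The only cosmetic loose end is your parenthetical about a lower-triangular $\boldsymbol{\xi}$ with a missing diagonal entry: there $\mathbf{\Sigma}(\boldsymbol{\xi})=\emptyset$ and the stated equality is degenerate rather than literally true, but the paper implicitly restricts to non-singular support matrices, so this does not affect the result.
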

\vspace{-0.1em}
Note that Assumption \ref{assumption:lower_triangular} allows separate row and column permutations, which thus may be rather mild especially for sparse mixing matrix. Below we provide an example of the connective structure that satisfies this assumption. We also introduce an efficient approach to verify whether a mixing matrix satisfies Assumption \ref{assumption:lower_triangular} in Appendix \ref{app:efficient_assumption_verification}.

\begin{restatable}{example}{ExamplePolytree}\label{example:polytree}
If the connective structure $\mathcal{G}_\mathbf{A}$ of mixing matrix $\mathbf{A}$ is a polytree, then matrix $\mathbf{A}$ satisfies Assumption \ref{assumption:lower_triangular}.
\end{restatable}

\vspace{-0.1em}
Finally, the following assumption is needed to ensure that the equality constraints arising from the covariance matrix are entailed by the true mixing matrix, rather than accidental parameter cancellations. This establishes a correspondence between equality constraints in the covariance matrix and those imposed by the support of the mixing matrix. Similar assumption has been employed in various tasks such as causal discovery \citep{spirtes2001causation,ghassami2020characterizing}, as discussed in Section \ref{sec:connection_causal_discovery}.
\begin{assumption}[Faithfulness]
\label{assumption:faithfulness}
For a mixing matrix $\mathbf{A}$ and the resulting covariance matrix $\mathbf{\Sigma}$, $\mathbf{\Sigma}$ satisfies an equality constraint $\kappa$ only if $\kappa\in H(\boldsymbol{\xi}_\mathbf{A})$.
\end{assumption}
\vspace{-0.2em}
The following proposition justifies Assumption \ref{assumption:faithfulness} and demonstrates that it is a generic property in the sense that it holds almost everywhere in the space of possible mixing matrices. In other words, it is only violated for a set of mixing matrices with zero Lebesgue measure.
\begin{restatable}[Generic Property]{proposition}{PropositionGenericProperty}\label{proposition:generic_property}
Suppose that the nonzero coefficients of matrix $\mathbf{A}$ are randomly drawn from a distribution that is absolutely continuous with respect to Lebesgue measure. Then, matrix $\mathbf{A}$ satisfies Assumption \ref{assumption:faithfulness} with probability one.
\end{restatable}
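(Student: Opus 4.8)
The plan is to show that the "unfaithful" mixing matrices — those whose covariance matrix $\mathbf{\Sigma} = \mathbf{A}\mathbf{A}^\top$ satisfies some equality constraint $\kappa \notin H(\boldsymbol{\xi}_\mathbf{A})$ — form a Lebesgue-null set in the parameter space $\mathbb{R}^{\|\boldsymbol{\xi}_\mathbf{A}\|_0}$ of free coefficients. The key object is the polynomial map $\phi_{\boldsymbol{\xi}}: \mathbb{R}^{\|\boldsymbol{\xi}\|_0} \to \mathbb{R}^{\binom{n+1}{2}}$ sending the vector of free entries of $\mathbf{A}$ to the entries of the (symmetric) covariance matrix $\mathbf{A}\mathbf{A}^\top$. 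The covariance set $\mathbf{\Sigma}(\boldsymbol{\xi})$ is (the non-singular part of) the image of this map, and $H(\boldsymbol{\xi})$ is, by definition, the vanishing ideal of its Zariski closure $\overline{\mathbf{\Sigma}(\boldsymbol{\xi})}$. An equality constraint $\kappa$ with $\kappa \notin H(\boldsymbol{\xi})$ is therefore a polynomial that does \emph{not} vanish identically on $\overline{\mathbf{\Sigma}(\boldsymbol{\xi})}$; the set of covariance matrices in the image on which it \emph{does} vanish is thus a proper Zariski-closed subset of the irreducible variety $\overline{\mathbf{\Sigma}(\boldsymbol{\xi})}$, hence lower-dimensional, hence null inside $\mathbf{\Sigma}(\boldsymbol{\xi})$.

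First I would pull this back along $\phi_{\boldsymbol{\xi}}$: the composition $\kappa \circ \phi_{\boldsymbol{\xi}}$ is a polynomial on $\mathbb{R}^{\|\boldsymbol{\xi}\|_0}$, and the bad parameter set is exactly its zero set. The crucial point is that $\kappa \circ \phi_{\boldsymbol{\xi}}$ is \emph{not the zero polynomial}: if it were, $\kappa$ would vanish on the entire image of $\phi_{\boldsymbol{\xi}}$ and hence on its Zariski closure, putting $\kappa \in H(\boldsymbol{\xi})$, a contradiction. A nonzero polynomial has a zero set of Lebesgue measure zero (standard, e.g. by induction on dimension / Fubini), so for each fixed $\kappa \notin H(\boldsymbol{\xi})$ the offending parameters are null. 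Since the domain $\mathbb{R}^{\|\boldsymbol{\xi}\|_0}$ is a finite-dimensional real algebraic set, only countably many polynomial equality constraints can be "essentially distinct" — more carefully, the collection of constraints satisfied by a \emph{particular} $\mathbf{\Sigma}$ beyond $H(\boldsymbol{\xi})$ generates, together with $H(\boldsymbol{\xi})$, an ideal strictly larger than $H(\boldsymbol{\xi})$, so it suffices to rule out, for each of the (finitely or countably many) irreducible components $W$ of $V(H(\boldsymbol{\xi}) + (\kappa))$ cut out as $\kappa$ ranges over a generating set of constraints, that $\phi_{\boldsymbol{\xi}}^{-1}(W)$ has full measure; by the previous sentence it does not. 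Taking a countable union over this family of null sets yields a null set of unfaithful parameters, so a random $\mathbf{A}$ with coefficients drawn from a distribution absolutely continuous with respect to Lebesgue measure satisfies Assumption \ref{assumption:faithfulness} with probability one.

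The main obstacle — and the step requiring care — is the bookkeeping of "how many constraints $\kappa$ must be excluded," i.e. reducing the a priori uncountable family of candidate polynomials $\kappa \notin H(\boldsymbol{\xi})$ to a countable (or finite) union of null events. The clean way around this is to work at the level of \emph{ideals} rather than individual polynomials: by Hilbert's basis theorem the real polynomial ring in $\binom{n+1}{2}$ variables is Noetherian, so $H(\boldsymbol{\xi})$ and every ideal containing it is finitely generated, and "$\mathbf{\Sigma}$ satisfies an extra equality constraint" is equivalent to "$\mathbf{\Sigma}$ lies in $V(J)$ for some finitely generated ideal $J \supsetneq H(\boldsymbol{\xi})$"; one then argues that $\phi_{\boldsymbol{\xi}}^{-1}\!\big(V(J) \cap \mathbf{\Sigma}(\boldsymbol{\xi})\big)$ is a proper subvariety of the (here, using Proposition \ref{proposition:covariance_matrix_dimension}, full-dimensional and irreducible) domain, hence null, and invokes that the relevant variety has only finitely many irreducible components to finish without any countability subtlety at all. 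A secondary point to state explicitly is that restricting to non-singular $\mathbf{A}$ (to stay inside $\mathbf{\Sigma}(\boldsymbol{\xi})$ proper) only removes a further null set $\{\det \mathbf{A} = 0\}$, which does not affect the conclusion.
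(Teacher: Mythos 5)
Your proof is correct and follows essentially the same route as the paper's: both reduce the event of unfaithfulness to a finite (or countable) union, over the extra equality constraints $\kappa \notin H(\boldsymbol{\xi}_\mathbf{A})$, of Lebesgue-null subsets of the parameter space of free coefficients. The paper simply asserts that the family of candidate constraints is finite and that each constraint's solution set is null ``by definition of equality constraint,'' whereas you justify both points explicitly --- the pullback $\kappa \circ \phi_{\boldsymbol{\xi}}$ being a nonzero polynomial whose zero set is Lebesgue-null, and Noetherianity controlling the family of constraints --- so your version is a more self-contained rendering of the same argument.
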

\vspace{-0.2em}
With the aforementioned assumptions, we present our main identifiability result of ICA: by making use of second-order statistics and sparsity constraint, we show that the true mixing matrix can be identified up to signed column permutation. The key intuition is as follows: although the covariance matrix remains the same after any orthogonal transformation of the mixing matrix, the number of nonzero entries in the resulting matrix, under the assumptions above, will be larger than that of the original mixing matrix, unless the orthogonal transformation is a column permutation. Note that the proof leverages the technical tools developed by \citet{ghassami2020characterizing} which involve different types of rotations.\looseness=-1
\begin{restatable}[Identifiability with Sparsity]{theorem}{ThmIdentifiabilityICA}\label{thm:identifiability_ica}
Suppose that the true mixing matrix $\tilde{\mathbf{A}}$ satisfies Assumptions \ref{assumption:structural_variability}, \ref{assumption:lower_triangular}, and \ref{assumption:faithfulness}. Let $\hat{\mathbf{A}}$ be a solution of the following problem:
\begin{equation}\label{eq:identifiability_ica}
\min_{\mathbf{A}\in\mathbb{R}^{n\times n}} \|\mathbf{A}\|_0 \quad \subjectto  \quad \mathbf{A}\mathbf{A}^\top=\tilde{\mathbf{A}}\tilde{\mathbf{A}}^\top  \quad \andrm \quad \mathbf{A} \text{\normalfont ~satisfies Assumption \ref{assumption:lower_triangular}.}
\end{equation}
Then, we have $\hat{\mathbf{A}}\sim\tilde{\mathbf{A}}$.
\end{restatable}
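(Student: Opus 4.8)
The plan is to show that any solution $\hat{\mathbf{A}}$ of Problem~\eqref{eq:identifiability_ica} must be a signed column permutation of $\tilde{\mathbf{A}}$. Since $\tilde{\mathbf{A}}$ is itself feasible (it trivially satisfies $\tilde{\mathbf{A}}\tilde{\mathbf{A}}^\top=\tilde{\mathbf{A}}\tilde{\mathbf{A}}^\top$ and Assumption~\ref{assumption:lower_triangular}), the minimizer satisfies $\|\hat{\mathbf{A}}\|_0\le\|\tilde{\mathbf{A}}\|_0$. Because both feasible matrices produce the same covariance matrix and are non-singular, there is an orthogonal matrix $\mathbf{U}$ with $\hat{\mathbf{A}}=\tilde{\mathbf{A}}\mathbf{U}$. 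The goal is therefore to prove: under Assumptions~\ref{assumption:structural_variability}, \ref{assumption:lower_triangular}, \ref{assumption:faithfulness}, if $\mathbf{U}$ is orthogonal, $\tilde{\mathbf{A}}\mathbf{U}$ satisfies Assumption~\ref{assumption:lower_triangular}, and $\|\tilde{\mathbf{A}}\mathbf{U}\|_0\le\|\tilde{\mathbf{A}}\|_0$, then $\mathbf{U}$ is a signed permutation matrix.

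The first key step is to translate the sparsity inequality into a statement about equality constraints on covariance matrices. Let $\hat{\boldsymbol{\xi}}=\boldsymbol{\xi}_{\hat{\mathbf{A}}}$ and $\tilde{\boldsymbol{\xi}}=\boldsymbol{\xi}_{\tilde{\mathbf{A}}}$. Since $\hat{\mathbf{A}}$ satisfies Assumption~\ref{assumption:lower_triangular}, Proposition~\ref{proposition:covariance_matrix_dimension} gives $\dim(\mathbf{\Sigma}(\hat{\boldsymbol{\xi}}))=\|\hat{\boldsymbol{\xi}}\|_0\le\|\tilde{\boldsymbol{\xi}}\|_0=\dim(\mathbf{\Sigma}(\tilde{\boldsymbol{\xi}}))$. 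The shared covariance matrix $\tilde{\mathbf{\Sigma}}$ lies in both $\mathbf{\Sigma}(\hat{\boldsymbol{\xi}})$ and $\mathbf{\Sigma}(\tilde{\boldsymbol{\xi}})$. Faithfulness (Assumption~\ref{assumption:faithfulness}) applied to both $\hat{\mathbf{A}}$ and $\tilde{\mathbf{A}}$ means the equality constraints satisfied by $\tilde{\mathbf{\Sigma}}$ are \emph{exactly} $H(\hat{\boldsymbol{\xi}})$ and \emph{exactly} $H(\tilde{\boldsymbol{\xi}})$, so $H(\hat{\boldsymbol{\xi}})=H(\tilde{\boldsymbol{\xi}})$; combined with the dimension count (fewer equality constraints should not decrease dimension), this forces the two covariance sets to coincide, $\mathbf{\Sigma}(\hat{\boldsymbol{\xi}})=\mathbf{\Sigma}(\tilde{\boldsymbol{\xi}})$, and in particular $\|\hat{\boldsymbol{\xi}}\|_0=\|\tilde{\boldsymbol{\xi}}\|_0$.

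The second, and main, step is the structural argument showing $\mathbf{U}$ is a signed permutation. Here I would invoke the rotation machinery of \citet{ghassami2020characterizing}: any orthogonal $\mathbf{U}$ that is not a signed permutation can be decomposed into (or contains) an elementary Givens-type rotation in some coordinate plane $(i,j)$, and I analyze the effect of such a rotation on the support of $\tilde{\mathbf{A}}$. Writing $\mathbf{a}_i,\mathbf{a}_j$ for the two affected columns, a nontrivial rotation replaces them by $\mathbf{a}_i\cos\theta+\mathbf{a}_j\sin\theta$ and $-\mathbf{a}_i\sin\theta+\mathbf{a}_j\cos\theta$; generically the support of each new column is $\supp(\mathbf{a}_i)\cup\supp(\mathbf{a}_j)$, except that cancellation can remove at most one entry per column at a carefully tuned angle (cancelling two entries simultaneously would over-determine $\theta$). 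Assumption~\ref{assumption:structural_variability} guarantees $|\supp(\mathbf{a}_i)\cup\supp(\mathbf{a}_j)|-|\supp(\mathbf{a}_i)\cap\supp(\mathbf{a}_j)|\ge 2$, i.e.\ the symmetric difference has size at least two, so the combined support of the rotated pair strictly exceeds the combined support of the original pair even after the at-most-one-entry cancellation in each column. This yields $\|\hat{\mathbf{A}}\|_0>\|\tilde{\mathbf{A}}\|_0$, contradicting optimality — unless $\mathbf{U}$ was a signed permutation to begin with. The delicate part of this step, and where I expect the real work, is handling a \emph{general} orthogonal $\mathbf{U}$ rather than a single Givens rotation: one must argue that the sparsest image over all orthogonal transformations is attained (only) at signed permutations, controlling how cancellations across a composition of rotations interact with Assumption~\ref{assumption:lower_triangular} (which must be preserved by $\hat{\mathbf{A}}$) and invoking faithfulness to rule out coincidental cancellations that are not "forced" by the support. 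Finally, once $\mathbf{U}$ is shown to be a signed permutation, $\hat{\mathbf{A}}=\tilde{\mathbf{A}}\mathbf{U}\sim\tilde{\mathbf{A}}$, completing the proof.
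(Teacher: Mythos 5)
Your setup (feasibility of $\tilde{\mathbf{A}}$, hence $\|\hat{\mathbf{A}}\|_0\le\|\tilde{\mathbf{A}}\|_0$, non-singularity of $\hat{\mathbf{A}}$, and the dimension count via Proposition~\ref{proposition:covariance_matrix_dimension}) matches the paper, but there are two genuine problems. First, you apply Assumption~\ref{assumption:faithfulness} to the \emph{estimated} matrix $\hat{\mathbf{A}}$ in order to get $H(\boldsymbol{\xi}_{\tilde{\mathbf{A}}})\subseteq H(\boldsymbol{\xi}_{\hat{\mathbf{A}}})$; the theorem only assumes faithfulness of the true $\tilde{\mathbf{A}}$, and the minimizer of a combinatorial problem need not be generic. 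The paper obtains only the one inclusion $H(\boldsymbol{\xi}_{\hat{\mathbf{A}}})\subseteq H(\boldsymbol{\xi}_{\tilde{\mathbf{A}}})$ from faithfulness of $\tilde{\mathbf{A}}$ (since $\tilde{\mathbf{\Sigma}}\in\mathbf{\Sigma}(\boldsymbol{\xi}_{\hat{\mathbf{A}}})$ satisfies every constraint in $H(\boldsymbol{\xi}_{\hat{\mathbf{A}}})$), and rules out strictness by the dimension argument: a strict inclusion would force $\dim(\mathbf{\Sigma}(\boldsymbol{\xi}_{\hat{\mathbf{A}}}))>\dim(\mathbf{\Sigma}(\boldsymbol{\xi}_{\tilde{\mathbf{A}}}))$, hence $\|\hat{\mathbf{A}}\|_0>\|\tilde{\mathbf{A}}\|_0$ by Proposition~\ref{proposition:covariance_matrix_dimension}, contradicting optimality. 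This repair uses only ingredients you already have on the table, so the first step is fixable.

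Second, and more seriously, your main structural step is the one you yourself flag as incomplete: showing that the sparsest image of $\tilde{\mathbf{A}}$ under a \emph{general} orthogonal $\mathbf{U}$ (subject to $\hat{\mathbf{A}}$ satisfying Assumption~\ref{assumption:lower_triangular}) is attained only at signed permutations. Your counting argument works for a single Givens rotation, but a general orthogonal matrix is a composition of such rotations whose cancellations can interact, and you offer no mechanism to control this. The paper never analyzes $\mathbf{U}$ directly. Instead, from $H(\boldsymbol{\xi}_{\hat{\mathbf{A}}})=H(\boldsymbol{\xi}_{\tilde{\mathbf{A}}})$ it deduces that the two supports are \emph{covariance equivalent} (Proposition~\ref{proposition:same_hard_constraints_imply_distribution_equivalent}, proved by mapping supports satisfying Assumption~\ref{assumption:lower_triangular} to DAGs and passing through Markov equivalence), then invokes \citet[Proposition~5]{ghassami2020characterizing} (restated as Proposition~\ref{proposition:column_permutations}): two covariance-equivalent supports, one of which has every pair of columns differing in more than one entry---this is exactly where Assumption~\ref{assumption:structural_variability} enters---must be column permutations of each other. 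Finally, equality of supports up to column permutation together with $\hat{\mathbf{A}}\hat{\mathbf{A}}^\top=\tilde{\mathbf{A}}\tilde{\mathbf{A}}^\top$ yields $\hat{\mathbf{A}}\sim\tilde{\mathbf{A}}$ via uniqueness of the Cholesky factor (Corollary~\ref{cor:permuted_support_implies_permuted_parameter}), a step your proposal also leaves implicit. Without some substitute for these covariance-equivalence results, the core of your argument does not go through.
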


\vspace{-0.3em}
\subsection{Related Sparsity Assumptions}\label{sec:related_assumptions}
\vspace{-0.1em}
As briefly discussed in Section \ref{sec:introduction}, \citet{zheng2022identifiability} have also 
provided identifiability result of ICA without assuming non-Gaussianity. Their approach, similar to ours, relies on assumptions related to the support of the mixing matrix, provided below. Note that Assumption \ref{assumption:structural_sparsity_2} was first proposed by \citet{lachapelle2021disentanglement} for the structure between auxiliary and latent variables in nonlinear ICA.
\begin{assumption}[\citet{zheng2022identifiability}]
\label{assumption:structural_sparsity_1}
For every set $\mathcal{I}\subseteq [n]$ where $|\mathcal{I}|>1$ and for all $i\in\mathcal{I}$, we have
\begin{equation}\label{eq:structural_sparsity}
    \bigg|\underset{j \in \mathcal{I}}{\bigcup}\operatorname{supp}(\mathbf{a}_{j})\bigg| -  \operatorname{rank}(\operatorname{overlap}(\mathbf{A}_{\mathcal{I}})) > \left|\operatorname{supp}(\mathbf{a}_{i})\right|.
\end{equation}
\end{assumption}
\vspace{-0.2em}
\begin{assumption}[\citet{zheng2022identifiability}]
\label{assumption:structural_sparsity_2}
For every $i \in [n]$, there exists set $\mathcal{I}\subset [n]$ such that
\begin{equation*}\label{eq:structural_sparsity_2}
    \bigcap_{j \in \mathcal{I}}
  \operatorname{supp}(\mathbf{a}_{j, :}) = \{i\}.
\end{equation*}
\end{assumption}
\vspace{-0.2em}
While Assumptions \ref{assumption:structural_sparsity_1} and \ref{assumption:structural_sparsity_2} shed light on resolving the long-standing challenge of the rotational indeterminacy of Gaussian sources, their general applicability remains unclear. One notable restriction is that they require each column of the support matrix to not be a subset of the other, formalized below. This may be overly restrictive in practical scenarios and is not the case for our Assumption \ref{assumption:structural_variability}.
\begin{assumption}[Column Subset]\label{assumption:column_subset}
Each column in the support of $\mathbf{A}$ is not a subset of the other.
\end{assumption}
\vspace{-0.1em}
Since the true generating process of real-world data is inaccessible, it is challenging to quantitatively evaluate the applicability of these sparsity assumptions. In light of this challenge, we demonstrate the significance and advantage of our Assumption \ref{assumption:structural_variability} by proving that it is strictly weaker than Assumptions~\ref{assumption:structural_sparsity_1} and \ref{assumption:structural_sparsity_2}. This also strengthens the validity of our result (i.e., Proposition \ref{proposition:necessary_condition}) that Assumption \ref{assumption:structural_variability} is a necessary condition for achieving identifiability with second-order statistics and sparsity constraint.\looseness=-1
\begin{restatable}{theorem}{ThmConnectionOne}\label{thm:connection_1}
For mixing matrix $\mathbf{A}$, we have the following chain of chain of implications:
\[
\text{Assumption \ref{assumption:structural_sparsity_1}} \implies \text{Assumption \ref{assumption:column_subset}} \implies \text{Assumption \ref{assumption:structural_variability}}.
\]
Furthermore, there exists a matrix $\mathbf{A}$ satisfying Assumption \ref{assumption:structural_variability} that does not satisfy Assumption \ref{assumption:structural_sparsity_1}. 
\end{restatable}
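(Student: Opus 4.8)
The plan is to prove the two implications in sequence and then exhibit a single matrix witnessing strictness; since composing the two implications gives Assumption~\ref{assumption:structural_sparsity_1} $\Rightarrow$ Assumption~\ref{assumption:structural_variability}, it will be enough, for the strictness claim, to produce one mixing matrix that satisfies Assumption~\ref{assumption:structural_variability} but violates Assumption~\ref{assumption:column_subset}, because the contrapositive of the first implication then forces it to violate Assumption~\ref{assumption:structural_sparsity_1} as well.

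First I would show Assumption~\ref{assumption:structural_sparsity_1} $\Rightarrow$ Assumption~\ref{assumption:column_subset} by contraposition. If Assumption~\ref{assumption:column_subset} fails, there are $i\neq j$ with $\supp(\mathbf{a}_i)\subseteq\supp(\mathbf{a}_j)$; I would then instantiate the inequality of Assumption~\ref{assumption:structural_sparsity_1} at the index set $\mathcal{I}=\{i,j\}$ (which has $|\mathcal{I}|>1$) and the index $j\in\mathcal{I}$. Since $\bigcup_{k\in\mathcal{I}}\supp(\mathbf{a}_k)=\supp(\mathbf{a}_i)\cup\supp(\mathbf{a}_j)=\supp(\mathbf{a}_j)$, the left-hand side becomes $|\supp(\mathbf{a}_j)|-\rank(\operatorname{overlap}(\mathbf{A}_{\mathcal{I}}))$ while the right-hand side is $|\supp(\mathbf{a}_j)|$; as any rank is nonnegative, the strict inequality cannot hold, so Assumption~\ref{assumption:structural_sparsity_1} is violated. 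This is the only place the $\operatorname{overlap}$ term intervenes, and only through $\rank(\cdot)\geq 0$; the lone bookkeeping point is that ``subset'' is meant inclusively, which the same argument covers.

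Next I would show Assumption~\ref{assumption:column_subset} $\Rightarrow$ Assumption~\ref{assumption:structural_variability}. Fixing $i\neq j$ and writing $S_i=\supp(\mathbf{a}_i)$, $S_j=\supp(\mathbf{a}_j)$, Assumption~\ref{assumption:column_subset} yields $S_i\not\subseteq S_j$ and $S_j\not\subseteq S_i$, so I can pick $p\in S_i\setminus S_j$ and $q\in S_j\setminus S_i$; these are distinct because $p\in S_i$ but $q\notin S_i$, and both lie in the symmetric difference $(S_i\cup S_j)\setminus(S_i\cap S_j)$, hence $|S_i\cup S_j|-|S_i\cap S_j|\geq 2>1$, which is precisely the inequality in Assumption~\ref{assumption:structural_variability}.

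Finally, for strictness I would take any nonsingular realization of the support matrix $\boldsymbol{\xi}_1$ of Example~\ref{example:distributional_constraint} (for instance, the lower-triangular matrix with all nonzero entries equal to $1$): its columns have supports $\{1,2,3\}$, $\{2\}$, $\{3\}$, so every pairwise symmetric difference has size $2$ and Assumption~\ref{assumption:structural_variability} holds, whereas $\{2\}\subseteq\{1,2,3\}$ makes Assumption~\ref{assumption:column_subset} fail, and thus, by the first implication, Assumption~\ref{assumption:structural_sparsity_1} fails as well. I do not anticipate a genuine obstacle here: the argument is elementary set counting, and the only subtlety is to note that the $\operatorname{overlap}/\rank$ correction in Assumption~\ref{assumption:structural_sparsity_1} can only decrease its left-hand side, so it never helps satisfy that assumption and can safely be ignored in both the implication and the counterexample.
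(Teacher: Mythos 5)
Your proposal is correct and follows essentially the same route as the paper: specialize Assumption~\ref{assumption:structural_sparsity_1} to $|\mathcal{I}|=2$ and use that the $\rank(\operatorname{overlap}(\cdot))$ term can only hurt, prove Assumption~\ref{assumption:column_subset} $\Rightarrow$ Assumption~\ref{assumption:structural_variability} by elementary set counting, and use the support matrix $\boldsymbol{\xi}_1$ of Example~\ref{example:distributional_constraint} as the separating example. The only cosmetic differences are that you argue the first implication contrapositively (needing only $\rank\geq 0$ where the paper invokes $\rank\geq 1$) and that you deduce the counterexample's failure of Assumption~\ref{assumption:structural_sparsity_1} via the failure of Assumption~\ref{assumption:column_subset} rather than by directly checking the inequality at $\mathcal{I}=\{1,2\}$; both are valid.
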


\begin{restatable}{theorem}{ThmConnectionTwo}\label{thm:connection_2}
For mixing matrix $\mathbf{A}$, we have the following chain of chain of implications:
\[
\text{Assumption \ref{assumption:structural_sparsity_2}} \implies \text{Assumption \ref{assumption:column_subset}} \implies \text{Assumption \ref{assumption:structural_variability}}.
\]
Furthermore, there exists a matrix $\mathbf{A}$ satisfying Assumption \ref{assumption:structural_variability} that does not satisfy Assumption \ref{assumption:structural_sparsity_2}. 
\end{restatable}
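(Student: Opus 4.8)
The plan is to prove the two implications in turn and then produce a separating example. The second implication, Assumption~\ref{assumption:column_subset} $\implies$ Assumption~\ref{assumption:structural_variability}, is exactly the one already appearing in the chain of Theorem~\ref{thm:connection_1}, so I would simply invoke it; for completeness, its one-line argument is that if neither of $\supp(\mathbf{a}_i),\supp(\mathbf{a}_j)$ is contained in the other, then each of the two sets contains a point lying outside the other, these two points are distinct, and both belong to the symmetric difference, so that $|\supp(\mathbf{a}_i)\cup\supp(\mathbf{a}_j)|-|\supp(\mathbf{a}_i)\cap\supp(\mathbf{a}_j)|\ge 2$.

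For the first implication, Assumption~\ref{assumption:structural_sparsity_2} $\implies$ Assumption~\ref{assumption:column_subset}, I would argue by contraposition, after first rewriting the row-support condition of Assumption~\ref{assumption:structural_sparsity_2} in terms of columns. The elementary observation is that for any $\mathcal{I}\subseteq[n]$ and any $i'\in[n]$ we have $i'\in\bigcap_{j\in\mathcal{I}}\supp(\mathbf{a}_{j,:})$ iff $a_{j,i'}\neq 0$ for all $j\in\mathcal{I}$ iff $\mathcal{I}\subseteq\supp(\mathbf{a}_{i'})$, so that $\bigcap_{j\in\mathcal{I}}\supp(\mathbf{a}_{j,:})=\{\,i'\in[n]:\mathcal{I}\subseteq\supp(\mathbf{a}_{i'})\,\}$. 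Now suppose Assumption~\ref{assumption:column_subset} fails, say $\supp(\mathbf{a}_k)\subseteq\supp(\mathbf{a}_l)$ with $k\neq l$ (so $n\ge 2$). I claim Assumption~\ref{assumption:structural_sparsity_2} then fails at the index $i=k$: any $\mathcal{I}$ achieving $\bigcap_{j\in\mathcal{I}}\supp(\mathbf{a}_{j,:})=\{k\}$ would force $\mathcal{I}\subseteq\supp(\mathbf{a}_k)\subseteq\supp(\mathbf{a}_l)$, and then the displayed identity puts $l$ into the intersection as well, contradicting that it equals $\{k\}$; and $\mathcal{I}=\emptyset$ is excluded since it gives intersection $[n]\neq\{k\}$. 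Hence no admissible $\mathcal{I}$ exists, as required.

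For the strictness claim I would use a concrete $3\times 3$ example with support
\[
\boldsymbol{\xi}_\mathbf{A}=\begin{bmatrix}\times&\times&0\\ 0&\times&\times\\ 0&\times&0\end{bmatrix},
\]
which admits a non-singular realization (its connective structure contains the perfect matching $s_1\!\to\!x_1,\ s_3\!\to\!x_2,\ s_2\!\to\!x_3$; e.g.\ setting every nonzero entry to $1$ yields $\det\mathbf{A}=-1$). The column supports are $\{1\}$, $\{1,2,3\}$, $\{2\}$, whose three pairwise symmetric differences each have size $2$, so Assumption~\ref{assumption:structural_variability} holds; but $\supp(\mathbf{a}_1)=\{1\}\subseteq\{1,2,3\}=\supp(\mathbf{a}_2)$ violates Assumption~\ref{assumption:column_subset}, hence by the first implication just established Assumption~\ref{assumption:structural_sparsity_2} is violated as well. (Directly: for $i=1$ no valid $\mathcal{I}$ exists, since only the first row has a nonzero entry in column $1$, forcing $\mathcal{I}\subseteq\{1\}$, while $\supp(\mathbf{a}_{1,:})=\{1,2\}\neq\{1\}$.)

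I do not anticipate a genuine obstacle; the only point that needs care is keeping the row/column duality of the support notation straight, since Assumption~\ref{assumption:structural_sparsity_2} is phrased via the row supports $\supp(\mathbf{a}_{j,:})$ whereas Assumptions~\ref{assumption:structural_variability} and~\ref{assumption:column_subset} are phrased via column supports, together with the routine bookkeeping of the degenerate choices $\mathcal{I}=\emptyset$ and $n=1$.
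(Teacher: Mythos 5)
Your proposal is correct and follows essentially the same route as the paper's proof: a contrapositive argument for Assumption~\ref{assumption:structural_sparsity_2} $\implies$ Assumption~\ref{assumption:column_subset} (yours just makes the row/column duality $\bigcap_{j\in\mathcal{I}}\supp(\mathbf{a}_{j,:})=\{i':\mathcal{I}\subseteq\supp(\mathbf{a}_{i'})\}$ explicit), an appeal to the already-established implication Assumption~\ref{assumption:column_subset} $\implies$ Assumption~\ref{assumption:structural_variability}, and a concrete $3\times 3$ separating example. Your example differs from the paper's (which uses the lower-triangular support with first column full) but plays exactly the same role, and your verification of it is sound.
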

\begin{remark}
Assumption \ref{assumption:structural_variability} is not only strictly weaker than both Assumption \ref{assumption:structural_sparsity_1} and \ref{assumption:structural_sparsity_2}, but also accommodates cases where one column of mixing matrix $\mathbf{A}$ is a subset of of another (provided that their supports differ in more than one entry). On the other hand, the latter two assumptions cannot accommodate such cases, further demonstrating the flexibility and broader scope of Assumption \ref{assumption:structural_variability}.
\end{remark}

\subsection{Connection with Causal Discovery}\label{sec:connection_causal_discovery}
ICA has emerged as a useful tool for causal discovery over the past two decades \citep{shimizu2006lingam}. In particular, \citet{shimizu2006lingam} demonstrated that the identifiability of ICA based on non-Gaussianity can be leveraged to discover the complete structure of a linear non-Gaussian structural equation model (SEM). In this section, we establish the connection and provide an analogy between ICA and causal discovery from second-order statistics. This connection further bridges the gap between these two fields, and provides insights into the interpretation of our identifiability result.

\looseness=-1
Let $\mathbb{R}_{\operatorname{off}}^{n\times n}$ be the set of matrices whose diagonal entries are zero, and $\operatorname{diag}(\mathbb{R}_{> 0}^n)$ be the set of positive diagonal matrices. Consider the linear SEM $\mathbf{x}=\tilde{\mathbf{B}}^\top\mathbf{x}+\mathbf{e}$, where $\mathbf{x}$ denotes the random vector, $\tilde{\mathbf{B}}\in\mathbb{R}_{\operatorname{off}}^{n\times n}$ denotes the weighted adjacency matrix representing a directed graph without self-loop, and $\mathbf{e}$ is the independent noise vector with covariance matrix $\tilde{\mathbf{\Omega}}\in\operatorname{diag}(\mathbb{R}_{> 0}^n)$. The graph is often assumed to be a directed acyclic graph (DAG); in this case, two DAGs are said to be \emph{Markov equivalent} if they share the same skeleton and v-structures \citep{verma1990equivalence}, resulting in the same set of conditional independencies. Also, the inverse covariance matrix of $\mathbf{x}$ is given by $\tilde{\mathbf{\Theta}}=(\mathbf{I}-\tilde{\mathbf{B}})\tilde{\mathbf{\Omega}}^{-1}(\mathbf{I}-\tilde{\mathbf{B}})^\top$. We refer readers to \citet{spirtes2001causation,glymour2019review} for more details and a review of causal discovery.

Score-based method is a major class of causal discovery methods that optimizes a goodness-of-fit measure under a sparsity constraint \citep{glymour2019review}, e.g., BIC \citep{schwarz1978estimating}. In essence, score-based causal discovery from second-order statistics can often be formulated in the large sample limit as the following optimization problem (the commonly used acyclicity constraint is omitted here and will be clarified subsequently):
\begin{equation}\label{eq:sparsity_optimization_causal}
\min_{\substack{\mathbf{B}\in\mathbb{R}_{\operatorname{off}}^{n\times n},\\ \mathbf{\Omega}\in\operatorname{diag}(\mathbb{R}_{> 0}^n)}}  \|\mathbf{B}\|_0 \quad \subjectto \quad (\mathbf{I}-\mathbf{B})\mathbf{\Omega}^{-1}(\mathbf{I}-\mathbf{B})^\top=\tilde{\mathbf{\Theta}}=(\mathbf{I}-\tilde{\mathbf{B}})\tilde{\mathbf{\Omega}}^{-1}(\mathbf{I}-\tilde{\mathbf{B}})^\top.
\end{equation}
By substituting $\mathbf{A}\coloneqq(\mathbf{I}-\mathbf{B})\mathbf{\Omega}^{-\frac{1}{2}}$ into the above formulation, we obtain the ICA formulation with second-order statistics and sparsity constraint introduced in Problem \eqref{eq:sparsity_optimization_high_level}. To establish a precise connection between formulations \eqref{eq:sparsity_optimization_causal} and \eqref{eq:sparsity_optimization_high_level}, we present the following theorem which indicates that these formulations can be translated into each other.
\begin{restatable}[Equivalent Formulations]{theorem}{ThmEquivalentFormulations}\label{thm:equivalent_formulations}
Suppose $\tilde{\mathbf{A}}=(\mathbf{I}-\tilde{\mathbf{B}})\tilde{\mathbf{\Omega}}^{-\frac{1}{2}}$. Then, we have:
\begin{enumerate}[label=(\alph*),leftmargin=2em]
\item Let $(\hat{\mathbf{B}},\hat{\mathbf{\Omega}})$ be a solution to Problem \eqref{eq:sparsity_optimization_causal}. Then, $\hat{\mathbf{A}}\coloneqq(\mathbf{I}-\hat{\mathbf{B}})\hat{\mathbf{\Omega}}^{-\frac{1}{2}}$ is a solution to Problem \eqref{eq:sparsity_optimization_high_level}.
\item Let $\hat{\mathbf{A}}$ be a solution to Problem \eqref{eq:sparsity_optimization_high_level}. Then, there exist matrices $\hat{\mathbf{B}}\in\mathbb{R}_{\operatorname{off}}^{n\times n}$ and $\hat{\mathbf{\Omega}}\in\operatorname{diag}(\mathbb{R}_{> 0}^n)$ such that $\hat{\mathbf{A}}\sim(\mathbf{I}-\hat{\mathbf{B}})\hat{\mathbf{\Omega}}^{-\frac{1}{2}}$, and $(\hat{\mathbf{B}},\hat{\mathbf{\Omega}})$ is a solution to Problem \eqref{eq:sparsity_optimization_causal}.
\end{enumerate}
\end{restatable}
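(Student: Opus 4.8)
The plan is to treat both problems as the same combinatorial problem under the explicit change of variables $\mathbf{A}=(\mathbf{I}-\mathbf{B})\mathbf{\Omega}^{-1/2}$ introduced just before the theorem. The first step is to record the algebra of this substitution as a lemma: the map $\Phi\colon(\mathbf{B},\mathbf{\Omega})\mapsto(\mathbf{I}-\mathbf{B})\mathbf{\Omega}^{-1/2}$ is a bijection from the set of pairs with $\mathbf{B}\in\mathbb{R}_{\operatorname{off}}^{n\times n}$, $\mathbf{\Omega}\in\operatorname{diag}(\mathbb{R}_{>0}^n)$, and $\mathbf{I}-\mathbf{B}$ nonsingular, onto the set of nonsingular $\mathbf{A}$ whose diagonal entries are all positive, with inverse $\mathbf{A}\mapsto\big(\mathbf{I}-\mathbf{A}\operatorname{diag}(a_{1,1}^{-1},\dots,a_{n,n}^{-1}),\,\operatorname{diag}(a_{1,1}^{-2},\dots,a_{n,n}^{-2})\big)$. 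Two facts are then immediate: (i) $\Phi(\mathbf{B},\mathbf{\Omega})\Phi(\mathbf{B},\mathbf{\Omega})^\top=(\mathbf{I}-\mathbf{B})\mathbf{\Omega}^{-1}(\mathbf{I}-\mathbf{B})^\top$, since $\mathbf{\Omega}^{-1/2}$ is symmetric; and (ii) $\|\Phi(\mathbf{B},\mathbf{\Omega})\|_0=\|\mathbf{B}\|_0+n$, because right multiplication by the positive diagonal matrix $\mathbf{\Omega}^{-1/2}$ preserves the support, $\mathbf{I}-\mathbf{B}$ agrees with $-\mathbf{B}$ off the diagonal, and its $n$ diagonal entries all equal $1$. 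Combining the hypothesis $\tilde{\mathbf{A}}=(\mathbf{I}-\tilde{\mathbf{B}})\tilde{\mathbf{\Omega}}^{-1/2}$ with (i) gives $\tilde{\mathbf{A}}\tilde{\mathbf{A}}^\top=(\mathbf{I}-\tilde{\mathbf{B}})\tilde{\mathbf{\Omega}}^{-1}(\mathbf{I}-\tilde{\mathbf{B}})^\top=\tilde{\mathbf{\Theta}}$, so $\Phi$ matches the feasibility constraints of Problems \eqref{eq:sparsity_optimization_high_level} and \eqref{eq:sparsity_optimization_causal}, while (ii) matches the two objectives up to the additive constant $n$; since $\tilde{\mathbf{A}}$ has full rank, $\tilde{\mathbf{\Theta}}$ is positive definite, so every feasible point of either problem is automatically nonsingular.

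For part (a): given an optimal $(\hat{\mathbf{B}},\hat{\mathbf{\Omega}})$ for \eqref{eq:sparsity_optimization_causal}, feasibility forces $(\mathbf{I}-\hat{\mathbf{B}})\hat{\mathbf{\Omega}}^{-1}(\mathbf{I}-\hat{\mathbf{B}})^\top=\tilde{\mathbf{\Theta}}$ to be positive definite, hence $\mathbf{I}-\hat{\mathbf{B}}$ is nonsingular and $\hat{\mathbf{A}}:=\Phi(\hat{\mathbf{B}},\hat{\mathbf{\Omega}})$ is well defined, nonsingular, and feasible for \eqref{eq:sparsity_optimization_high_level} by (i). If some feasible $\mathbf{A}'$ had $\|\mathbf{A}'\|_0<\|\hat{\mathbf{A}}\|_0$, then $\mathbf{A}'$ is nonsingular, so by the Leibniz expansion of $\det\mathbf{A}'$ there is a permutation $\sigma$ with $\prod_i a'_{i,\sigma(i)}\neq0$; applying the corresponding column permutation and then flipping signs of columns with negative diagonal entry — operations that preserve both $\mathbf{A}'\mathbf{A}'^\top$ and $\|\mathbf{A}'\|_0$ — yields a feasible matrix with positive diagonal, whose image under $\Phi^{-1}$ is feasible for \eqref{eq:sparsity_optimization_causal} with $\ell_0$ norm $\|\mathbf{A}'\|_0-n<\|\hat{\mathbf{B}}\|_0$, contradicting optimality of $(\hat{\mathbf{B}},\hat{\mathbf{\Omega}})$.

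For part (b): given an optimal $\hat{\mathbf{A}}$ for \eqref{eq:sparsity_optimization_high_level}, it is nonsingular, so as above it may be column-permuted and sign-flipped to some $\hat{\mathbf{A}}'\sim\hat{\mathbf{A}}$ with all diagonal entries positive; $\hat{\mathbf{A}}'$ is still feasible with the same $\ell_0$ norm, hence still optimal. Put $(\hat{\mathbf{B}},\hat{\mathbf{\Omega}}):=\Phi^{-1}(\hat{\mathbf{A}}')$, so $\hat{\mathbf{A}}\sim\hat{\mathbf{A}}'=(\mathbf{I}-\hat{\mathbf{B}})\hat{\mathbf{\Omega}}^{-1/2}$ with $\hat{\mathbf{B}}\in\mathbb{R}_{\operatorname{off}}^{n\times n}$, $\hat{\mathbf{\Omega}}\in\operatorname{diag}(\mathbb{R}_{>0}^n)$, and $(\hat{\mathbf{B}},\hat{\mathbf{\Omega}})$ feasible for \eqref{eq:sparsity_optimization_causal} by (i). Its optimality follows by the symmetric comparison: any feasible $(\mathbf{B}',\mathbf{\Omega}')$ for \eqref{eq:sparsity_optimization_causal} gives $\Phi(\mathbf{B}',\mathbf{\Omega}')$ feasible for \eqref{eq:sparsity_optimization_high_level} with $\|\Phi(\mathbf{B}',\mathbf{\Omega}')\|_0=\|\mathbf{B}'\|_0+n$, so optimality of $\hat{\mathbf{A}}$ gives $\|\mathbf{B}'\|_0+n\ge\|\hat{\mathbf{A}}\|_0=\|\hat{\mathbf{B}}\|_0+n$.

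Once the change of variables is set up the argument is bookkeeping, and I expect the only genuinely non-cosmetic point to be the handling of vanishing diagonal entries: a solution of \eqref{eq:sparsity_optimization_high_level} need not have a nonzero diagonal, yet $\Phi^{-1}$ is defined only when it does. The remedy is the observation that nonsingularity supplies a column permutation — from a nonvanishing term in the determinant — after which the diagonal is full, and that such permutations together with column sign flips leave both the objective and the constraint of \eqref{eq:sparsity_optimization_high_level} unchanged; this is precisely why the conclusion of part (b) must be stated up to $\sim$ rather than as an equality. Everything else reduces to the two identities $\Phi(\mathbf{B},\mathbf{\Omega})\Phi(\mathbf{B},\mathbf{\Omega})^\top=(\mathbf{I}-\mathbf{B})\mathbf{\Omega}^{-1}(\mathbf{I}-\mathbf{B})^\top$ and $\|\Phi(\mathbf{B},\mathbf{\Omega})\|_0=\|\mathbf{B}\|_0+n$.
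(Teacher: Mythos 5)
Your proposal is correct and follows essentially the same route as the paper: the same change of variables $\mathbf{A}=(\mathbf{I}-\mathbf{B})\mathbf{\Omega}^{-1/2}$, the same two key identities (the covariance factorization and $\|\mathbf{A}\|_0=\|\mathbf{B}\|_0+n$), and the same Leibniz-determinant argument to obtain a nonzero diagonal via column permutation and sign flips before inverting the map. The paper merely packages these facts as separate lemmas (its Lemmas on nonzero diagonal entries, non-singular solutions, the existence of the $(\mathbf{B},\mathbf{\Omega})$ representation, and the $\ell_0$-norm relation) and phrases the optimality transfer as a contrapositive/contradiction rather than via your explicit bijection $\Phi$, but the content is the same.
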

\looseness=-1
Thus, the formulations of causal discovery and ICA via second-order statistics and sparsity constraint share inherent similarities. The key difference lies in their respective goals--the former aims to estimate the support of $\tilde{\mathbf{B}}$ up to a Markov equivalence class \citep{spirtes2001causation}, while the latter aims to estimate $\tilde{\mathbf{A}}$ up to signed column permutation. The other difference is that $(\mathbf{I}-\tilde{\mathbf{B}})\tilde{\mathbf{\Omega}}^{-1}(\mathbf{I}-\tilde{\mathbf{B}})^\top$ represents the inverse covariance matrix $\tilde{\mathbf{\Theta}}$ of $\mathbf{x}$ in causal discovery, while $\tilde{\mathbf{A}}\tilde{\mathbf{A}}^\top$ represents the covariance matrix $\tilde{\mathbf{\Sigma}}$ of $\mathbf{x}$ in ICA.

In addition to establishing the connection between formulations \eqref{eq:sparsity_optimization_causal} and \eqref{eq:sparsity_optimization_high_level}, we show that the assumptions we employ for identifiability of ICA, namely Assumptions \ref{assumption:structural_variability}, \ref{assumption:lower_triangular}, and \ref{assumption:faithfulness}, are inherently related to causal discovery. Notably, Assumption \ref{assumption:faithfulness} has been used in causal discovery \citep{spirtes2001causation,ghassami2020characterizing} to ensure that the conditional independencies in the distribution are entailed by the true directed graph. We now present a result that establishes the connection of Assumptions \ref{assumption:structural_variability} and \ref{assumption:lower_triangular} with causal discovery.
\begin{restatable}{theorem}{ThmDAGMECSingleton}\label{thm:dag_mec_singleton}
Suppose $\mathbf{A}\sim(\mathbf{I}-\mathbf{B})\mathbf{\Omega}^{-\frac{1}{2}}$ for matrices $\mathbf{A}\in\mathbb{R}^{n\times n}$, $\mathbf{B}\in\mathbb{R}_{\operatorname{off}}^{n\times n}$, and $\mathbf{\Omega}\in\operatorname{diag}(\mathbb{R}_{> 0}^n)$. Then, $\mathbf{A}$ satisfies Assumptions \ref{assumption:structural_variability} and \ref{assumption:lower_triangular} if and only if $\mathbf{B}$ represents a DAG whose Markov equivalence class is a singleton.
\end{restatable}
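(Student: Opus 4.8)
The plan is to strip away $\mathbf{\Omega}^{-\frac12}$ and the signed--permutation indeterminacy, reduce to the matrix $\mathbf{M}:=\mathbf{I}-\mathbf{B}$, and then treat the two assumptions one at a time. Since $\mathbf{\Omega}^{-\frac12}$ is a positive diagonal matrix and $\mathbf{A}\sim\mathbf{M}\mathbf{\Omega}^{-\frac12}$, the support matrix of $\mathbf{A}$ equals that of $\mathbf{M}$ up to a column permutation; moreover both Assumption~\ref{assumption:structural_variability} and Assumption~\ref{assumption:lower_triangular} are properties of the support matrix that are invariant under column permutations and nonsingular column rescalings (for Assumption~\ref{assumption:lower_triangular} one uses that if $\mathbf{P}_1^\top\mathbf{M}\mathbf{P}_2$ is lower triangular then so is $(\mathbf{P}_1^\top\mathbf{M}\mathbf{P}_2)\mathbf{D}$ for any nonsingular diagonal $\mathbf{D}$, and that permuting a diagonal matrix symmetrically keeps it diagonal). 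Hence $\mathbf{A}$ satisfies the two assumptions iff $\mathbf{M}$ does, so I work with $\mathbf{M}$, whose $j$-th column has support $\supp(\mathbf{m}_j)=\{j\}\cup\operatorname{pa}(j)$, reading the parent set $\operatorname{pa}(j)$ off the $j$-th column of $\mathbf{B}$.

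Second, I would prove that $\mathbf{M}$ satisfies Assumption~\ref{assumption:lower_triangular} iff $\mathbf{B}$ represents a DAG. The ``if'' direction is immediate: a topological order yields a permutation matrix $\mathbf{P}$ with $\mathbf{P}^\top\mathbf{B}\mathbf{P}$ strictly lower triangular, so $\mathbf{P}^\top\mathbf{M}\mathbf{P}=\mathbf{I}-\mathbf{P}^\top\mathbf{B}\mathbf{P}$ is lower triangular. For ``only if'', suppose $\mathbf{P}_1^\top\mathbf{M}\mathbf{P}_2=:\mathbf{L}$ is lower triangular, with $\mathbf{P}_1,\mathbf{P}_2$ encoding permutations $\sigma_1,\sigma_2$. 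The crucial observation is that every diagonal entry of $\mathbf{M}$ equals $1$, and these $n$ nonzero entries occupy in $\mathbf{L}$ the positions $\{(\sigma_1^{-1}(i),\sigma_2^{-1}(i))\}_{i\in[n]}$, which form a permutation pattern---exactly one entry in each row and each column of $\mathbf{L}$. Since $\mathbf{L}$ is lower triangular, each such entry lies weakly below the diagonal, and a permutation pattern that is weakly lower triangular must be the diagonal itself (argue row by row), forcing $\sigma_1=\sigma_2$. Writing $\mathbf{P}:=\mathbf{P}_1=\mathbf{P}_2$, the matrix $\mathbf{P}^\top\mathbf{M}\mathbf{P}$ is lower triangular, so $\mathbf{P}^\top\mathbf{B}\mathbf{P}=\mathbf{I}-\mathbf{P}^\top\mathbf{M}\mathbf{P}$ is strictly lower triangular, i.e., $\mathbf{B}$ is a DAG.

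Third, assuming $\mathbf{B}$ is a DAG, I would characterize Assumption~\ref{assumption:structural_variability}. If $i,j$ are non-adjacent then $i\in\supp(\mathbf{m}_i)\setminus\supp(\mathbf{m}_j)$ and symmetrically $j\in\supp(\mathbf{m}_j)\setminus\supp(\mathbf{m}_i)$, so the two supports differ in at least two entries. If $i\to j$ is an edge, a short computation with $\supp(\mathbf{m}_i)=\{i\}\cup\operatorname{pa}(i)$ and $\supp(\mathbf{m}_j)=\{j\}\cup\operatorname{pa}(j)$ (using $i\in\operatorname{pa}(j)\setminus\operatorname{pa}(i)$ and the absence of $2$-cycles) shows that the number of entries in which the two supports differ equals $|\operatorname{pa}(i)\,\triangle\,\operatorname{pa}(j)|$, which is at least $1$ and equals $1$ precisely when $\operatorname{pa}(i)=\operatorname{pa}(j)\setminus\{i\}$---that is, precisely when $i\to j$ is a \emph{covered} edge. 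Hence $\mathbf{M}$ satisfies Assumption~\ref{assumption:structural_variability} iff $\mathbf{B}$ has no covered edge. To finish, I would invoke the classical fact (via Chickering's transformational characterization of Markov equivalence; see also \citet{verma1990equivalence,spirtes2001causation}) that a DAG has no covered edge iff it is the unique member of its Markov equivalence class: reversing a covered edge always yields a distinct Markov-equivalent DAG, and conversely any two Markov-equivalent DAGs are joined by a chain of covered-edge reversals, so a non-singleton class forces a covered edge in $\mathbf{B}$ itself. Chaining the three equivalences gives the theorem.

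I expect the main obstacle to be the ``only if'' part of the Assumption~\ref{assumption:lower_triangular} step: one must argue that permitting \emph{separate} row and column permutations does not enlarge the class of triangularizable matrices here, and the reason---the unit diagonal of $\mathbf{I}-\mathbf{B}$ pins the two permutations to each other---needs the permutation-pattern argument above to be made carefully (in particular handling possible singularity of $\mathbf{I}-\mathbf{B}$, which is not an issue precisely because the relevant nonzero entries are the known unit diagonal). The rest is bookkeeping (the support-difference computation) together with the quoted covered-edge characterization of singleton Markov equivalence classes.
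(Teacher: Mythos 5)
Your proposal is correct and follows essentially the same two-part decomposition as the paper: first showing that Assumption~\ref{assumption:lower_triangular} holds iff $\mathbf{B}$ is a DAG (where your permutation-pattern argument that the unit diagonal forces $\mathbf{P}_1=\mathbf{P}_2$ is a self-contained version of the paper's appeal to \citet[Lemma~1]{shimizu2006lingam}), and then showing that, given acyclicity, Assumption~\ref{assumption:structural_variability} holds iff the Markov equivalence class is a singleton. Your covered-edge computation and the appeal to Chickering's transformational characterization are substantively the same as the paper's use of the ``parent exchange'' machinery of \citet{ghassami2020characterizing}, since a parent exchangeable pair is exactly a covered edge under your support-difference identity.
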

In causal discovery, it is rather common to assume that the true directed graph is acyclic and accordingly incorporate an acyclicity constraint to formulation \eqref{eq:sparsity_optimization_causal}. As indicated in Theorem \ref{thm:dag_mec_singleton} (and Proposition \ref{proposition:connection_dag} in Appendix \ref{app:proof_dag_mec_singleton}), this acyclicity assumption corresponds to Assumption \ref{assumption:lower_triangular} in the context of ICA. Therefore, Theorem \ref{thm:equivalent_formulations} can be straightforwardly extended to show the equivalence between formulations \eqref{eq:identifiability_ica} and \eqref{eq:sparsity_optimization_causal} with an additional acyclicity constraint on matrix $\mathbf{B}$. Furthermore, it is worth noting that the mapping from mixing matrix $\mathbf{A}$ satisfying Assumption \ref{assumption:lower_triangular} to a DAG is unique, which is straightforwardly implied by \citet[Appendix~A]{shimizu2006lingam}.
\begin{proposition}[{{\citet{shimizu2006lingam}}}]\label{proposition:unique_mapping}
Suppose matrix $\mathbf{A}$ is non-singular and satisfies Assumption \ref{assumption:lower_triangular}. Then, there exist unique matrices $\mathbf{B}\in\mathbb{R}_{\operatorname{off}}^{n\times n}$ and $\mathbf{\Omega}\in\operatorname{diag}(\mathbb{R}_{> 0}^n)$ such that $\mathbf{A}\sim(\mathbf{I}-\mathbf{B})\mathbf{\Omega}^{-\frac{1}{2}}$. Furthermore, matrix $\mathbf{B}$ represents a DAG.
\end{proposition}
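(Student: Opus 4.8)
The plan is to prove the statement directly, re‑deriving in our matrix notation the identification argument of \citet[Appendix~A]{shimizu2006lingam}. First I would set up notation and two elementary facts: $\mathbf{A}\sim\mathbf{M}$ means $\mathbf{M}=\mathbf{A}\mathbf{G}$ for a signed permutation matrix $\mathbf{G}$ (a permutation matrix times a $\pm1$ diagonal matrix); every \emph{monomial} matrix (permutation times non‑singular diagonal) can be written as a positive diagonal matrix times a signed permutation; and $\mathbf{B}\in\mathbb{R}_{\operatorname{off}}^{n\times n}$ represents a DAG exactly when it is permutation‑similar to a strictly lower triangular matrix.

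For \textbf{existence}, I would start from Assumption~\ref{assumption:lower_triangular}: choose permutation matrices $\mathbf{P}_1,\mathbf{P}_2$ with $\mathbf{L}\coloneqq\mathbf{P}_1^\top\mathbf{A}\mathbf{P}_2$ lower triangular, where non‑singularity of $\mathbf{A}$ makes every $L_{i,i}$ nonzero. Rescaling the columns of $\mathbf{L}$ by $\mathbf{D}\coloneqq\operatorname{diag}(L_{1,1}^{-1},\dots,L_{n,n}^{-1})$ makes $\mathbf{L}\mathbf{D}$ lower triangular with unit diagonal, so I would set $\mathbf{\Lambda}\coloneqq\mathbf{P}_2\mathbf{D}\mathbf{P}_1^\top$ (a monomial matrix), note that $\mathbf{A}\mathbf{\Lambda}=\mathbf{P}_1(\mathbf{L}\mathbf{D})\mathbf{P}_1^\top$, and define $\mathbf{B}\coloneqq\mathbf{I}-\mathbf{A}\mathbf{\Lambda}$. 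Then $\mathbf{P}_1^\top\mathbf{B}\mathbf{P}_1=\mathbf{I}-\mathbf{L}\mathbf{D}$ is strictly lower triangular, so $\mathbf{B}\in\mathbb{R}_{\operatorname{off}}^{n\times n}$ and represents a DAG. Finally I would split the monomial matrix $\mathbf{\Lambda}^{-1}=\mathbf{P}_1\mathbf{D}^{-1}\mathbf{P}_2^\top$ as a positive diagonal matrix times a signed permutation and read $\mathbf{\Omega}\in\operatorname{diag}(\mathbb{R}_{>0}^n)$ off the diagonal part, which gives $\mathbf{A}=(\mathbf{I}-\mathbf{B})\mathbf{\Omega}^{-\frac12}\mathbf{G}\sim(\mathbf{I}-\mathbf{B})\mathbf{\Omega}^{-\frac12}$.

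For \textbf{uniqueness} I would take the solution $(\mathbf{B},\mathbf{\Omega})$ just constructed and any solution $(\mathbf{B}',\mathbf{\Omega}')$, eliminate $\mathbf{A}$ to obtain $\mathbf{I}-\mathbf{B}'=(\mathbf{I}-\mathbf{B})\mathbf{M}$ with $\mathbf{M}$ monomial, and argue $\mathbf{M}=\mathbf{I}$. Writing $\mathbf{M}=\mathbf{P}_M\mathbf{E}_M$ with associated permutation $\pi$ and diagonal entries $e_j$, the $j$‑th column of $\mathbf{I}-\mathbf{B}'$ equals $e_j$ times the $\pi(j)$‑th column of $\mathbf{I}-\mathbf{B}$; comparing $(j,j)$ entries (both matrices have unit diagonal) gives $1=e_j(\mathbf{I}-\mathbf{B})_{j,\pi(j)}$, so $\pi(j)\neq j$ would force $(\mathbf{B})_{j,\pi(j)}\neq0$, i.e.\ an edge incident to $j$ in the graph of $\mathbf{B}$. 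If $\pi$ were nontrivial, a $\pi$‑cycle of length $\geq2$ would then produce a directed cycle in $\mathbf{B}$, contradicting acyclicity; hence $\pi=\operatorname{id}$, then $e_j\equiv1$, so $\mathbf{M}=\mathbf{I}$ and $\mathbf{B}'=\mathbf{B}$. Feeding $\mathbf{M}=\mathbf{I}$ back into the identity $\mathbf{M}=\mathbf{\Omega}^{-\frac12}(\text{signed permutation})\mathbf{\Omega}'^{\frac12}$ forces the signed permutation to be a positive diagonal matrix, hence the identity, so $\mathbf{\Omega}'=\mathbf{\Omega}$. Since the unique solution is the DAG‑valued one from the existence step, both claims follow.

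The step I expect to be the crux is the uniqueness argument—showing that the monomial matrix linking two solutions has trivial permutation part—since the only available leverage is acyclicity of $\mathbf{B}$. The one other point needing care is that Assumption~\ref{assumption:lower_triangular} supplies two \emph{separate} permutations $\mathbf{P}_1,\mathbf{P}_2$ rather than a single conjugating one; this is why $\mathbf{\Lambda}$ must be taken of the form $\mathbf{P}_2\mathbf{D}\mathbf{P}_1^\top$, absorbing $\mathbf{P}_1\mathbf{P}_2^\top$ into the column‑permutation freedom implicit in $\sim$, rather than a plain diagonal rescaling.
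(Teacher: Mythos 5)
Your proof is correct. Note that the paper itself does not prove this proposition: it is stated as a citation of \citet[Appendix~A]{shimizu2006lingam}, with the existence and DAG parts essentially available from Lemmas \ref{lemma:nonzero_diagonal_entries} and \ref{lemma:nonsingular_exists_relation} together with Proposition \ref{proposition:connection_dag}, and the uniqueness left entirely to the cited reference. Your argument reconstructs the full statement in a self-contained way and does so correctly: the existence step (rescale the lower-triangular form $\mathbf{P}_1^\top\mathbf{A}\mathbf{P}_2$ to unit diagonal, conjugate back, and absorb the monomial factor into the $\sim$ equivalence) matches the route the paper's auxiliary lemmas take, while your uniqueness step — writing $\mathbf{I}-\mathbf{B}'=(\mathbf{I}-\mathbf{B})\mathbf{M}$ for a monomial $\mathbf{M}$, reading off $1=e_j(\mathbf{I}-\mathbf{B})_{j,\pi(j)}$ from the unit diagonals, and turning a nontrivial cycle of $\pi$ into a directed cycle of $\mathbf{B}$ — is exactly the LiNGAM identification argument and is the genuinely new content relative to what the paper writes down. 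You correctly isolate the two delicate points: that acyclicity of the \emph{constructed} $\mathbf{B}$ (not of $\mathbf{B}'$) is what kills the permutation part of $\mathbf{M}$, and that the two separate permutations in Assumption \ref{assumption:lower_triangular} force $\mathbf{\Lambda}$ to carry the factor $\mathbf{P}_2(\cdot)\mathbf{P}_1^\top$ rather than being a plain diagonal rescaling. No gaps.
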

Moreover, as indicated in Theorem \ref{thm:dag_mec_singleton} (and Proposition \ref{proposition:mec_singleton} in Appendix \ref{app:proof_dag_mec_singleton}), Assumption \ref{assumption:structural_variability} implies that the Markov equivalence class of $\mathbf{B}$ is a singleton; in this case, the true DAG can be completely identified. In particular, the Markov equivalence class of DAG is a singleton when all edges are either part of a v-structure or required to be oriented to avoid forming new v-structures or cycles \citep{meek1995casual,andersson1997characterization}.

\section{Estimation Methods with Second-Order Statistics}\label{sec:estimation_method_second_order}
Building upon the identifiability result provided in Section \ref{sec:identifiability_result_second_order}, we propose two estimation methods that leverage second-order statistics and sparsity. These methods involve solving a continuous constrained optimization problem, which we discuss in detail in this section. First, in Section \ref{sec:characterization_search_space}, we introduce a novel approach to formulate the search space in Problem \eqref{eq:identifiability_ica} that enables the application of continuous optimization techniques. We then describe the proposed estimation methods in Section \ref{sec:estimation_method}. All proofs are provided in Appendix \ref{app:proofs}.

\subsection{Characterization of Search Space}\label{sec:characterization_search_space}
The key to achieving the identifiability result presented in Theorem \eqref{thm:identifiability_ica} lies in the optimization problem \eqref{eq:identifiability_ica}, where the search space involves the matrices $\mathbf{A}$ that satisfy Assumption \ref{assumption:lower_triangular}. Consequently, a crucial question arises: is there an efficient approach for exploring the space of matrices $\mathbf{A}$ that satisfy Assumption \ref{assumption:lower_triangular}? Inspired by \citet{zheng2018notears,wei2020nofears,zhang2022truncated}, we introduce the following function to characterize the search space:
\[
g(\mathbf{A})= \tr\left(\sum_{k=2}^n (\operatorname{off}(\mathbf{A})\odot \operatorname{off}(\mathbf{A}))^k\right), \text{\,\,\, where \,\,\,}
(\operatorname{off}(\mathbf{A}))_{i,j} =\begin{cases}
    0, &~\text{if}~i=j,\\  
    a_{i,j}, &~\text{otherwise}.
    \end{cases} 
\]
Here, symbol $\odot$ denotes the Hadamard product. We then provide the following lemma that establishes the relationship between function $g(\mathbf{A})$ and a specific type of permutation, namely simultaneous equal row and column permutation. 
\begin{restatable}{lemma}{LemmaReformulatedSolution}\label{lemma:reformulated_solution}
For any matrix $\mathbf{A}$, $g(\mathbf{A})=0$ if and only if it can be permuted via simultaneous equal row and column permutations to be lower triangular.
\end{restatable}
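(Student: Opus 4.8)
The plan is to reinterpret $g(\mathbf{A})$ in terms of a directed graph and then invoke the standard correspondence between acyclicity and triangularizability. Write $\mathbf{M}\coloneqq\operatorname{off}(\mathbf{A})\odot\operatorname{off}(\mathbf{A})$, so that $\mathbf{M}$ is entrywise nonnegative, has zero diagonal, and satisfies $M_{i,j}>0$ exactly when $i\neq j$ and $a_{i,j}\neq 0$. Associate with $\mathbf{M}$ the directed graph $\mathcal{D}$ on vertex set $[n]$ that has an edge $j\to i$ precisely when $M_{i,j}>0$. For every $k\geq 1$ we have $\tr(\mathbf{M}^k)=\sum_{i_1,\dots,i_k\in[n]}M_{i_1,i_2}M_{i_2,i_3}\cdots M_{i_k,i_1}$, a sum of nonnegative terms which is strictly positive if and only if $\mathcal{D}$ contains a closed walk of length $k$. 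Since $\mathbf{M}$ has zero diagonal, $\tr(\mathbf{M})=0$, and since each summand of $g(\mathbf{A})=\sum_{k=2}^n\tr(\mathbf{M}^k)$ is nonnegative, it follows that $g(\mathbf{A})=0$ if and only if $\mathcal{D}$ has no closed walk of length $k$ for any $k\in\{2,\dots,n\}$.

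Next I would argue that this condition is equivalent to $\mathcal{D}$ being acyclic. If $\mathcal{D}$ contains a directed cycle, then it contains a simple directed cycle, whose length is at least $2$ (length $1$ is impossible because $\mathbf{M}$ has zero diagonal) and at most $n$; such a simple cycle is in particular a closed walk of that length. Conversely, any closed walk of positive length contains a simple directed cycle (repeatedly excise repeated vertices). Hence $\mathcal{D}$ admits a closed walk of some length in $\{2,\dots,n\}$ if and only if $\mathcal{D}$ is not acyclic, and therefore $g(\mathbf{A})=0$ if and only if $\mathcal{D}$ is a DAG.

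Finally I would translate acyclicity back into the triangular form of $\mathbf{A}$. A directed graph on $[n]$ is acyclic if and only if it has a topological ordering, i.e.\ a permutation $\pi$ of $[n]$; letting $\mathbf{P}$ be the associated permutation matrix, this is equivalent to $\mathbf{P}^\top\mathbf{M}\mathbf{P}$ being strictly lower triangular. Because $\mathbf{A}$ and $\operatorname{off}(\mathbf{A})$ agree off the diagonal, $\operatorname{off}(\mathbf{A})$ has zero diagonal, and $M_{i,j}=0\iff(\operatorname{off}(\mathbf{A}))_{i,j}=0$, we get that $\mathbf{P}^\top\mathbf{M}\mathbf{P}$ is strictly lower triangular iff $\mathbf{P}^\top\operatorname{off}(\mathbf{A})\mathbf{P}$ is strictly lower triangular iff $\mathbf{P}^\top\mathbf{A}\mathbf{P}$ is lower triangular. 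Since $\mathbf{A}\mapsto\mathbf{P}^\top\mathbf{A}\mathbf{P}$ is exactly a simultaneous equal row and column permutation, chaining these equivalences yields the claim.

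The argument is essentially routine; the only things requiring care are (i) checking that truncating the power series at $k=n$ and starting it at $k=2$ lose nothing—the former because any directed cycle contains a simple one of length at most $n$ (so $\mathbf{M}^n=\mathbf{0}$ once $\mathcal{D}$ is a DAG), the latter because the zero diagonal forces $\tr(\mathbf{M})=0$ automatically—and (ii) the bookkeeping that ``$\mathbf{A}$ lower triangular'' corresponds to ``$\operatorname{off}(\mathbf{A})$ strictly lower triangular'' under the \emph{same} permutation on rows and columns, which is precisely what distinguishes this statement from the separate-permutation version in Assumption \ref{assumption:lower_triangular}. I expect (ii) to be the main point to state carefully, rather than a genuine obstacle.
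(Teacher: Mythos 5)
Your proof is correct and follows essentially the same route as the paper: both arguments reduce the claim to the fact that $g(\mathbf{A})=f(\operatorname{off}(\mathbf{A}))$ vanishes iff $\operatorname{off}(\mathbf{A})$ can be simultaneously permuted to strictly lower triangular form, and then observe that adding back the diagonal part turns ``strictly lower triangular'' into ``lower triangular'' under the same permutation. The only difference is that the paper invokes this characterization as a known result (Lemma~\ref{lemma:strictly_lower_triangular}, citing \citet{wei2020nofears,zheng2018notears}), whereas you prove it from scratch via the nonnegativity of $\tr(\mathbf{M}^k)$ and the closed-walk/simple-cycle argument, which is a valid and self-contained substitute.
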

Intuitively speaking, if we interpret matrix $\mathbf{A}$ as a weighted adjacency matrix of a directed graph, say $\mathcal{G}$, then $ \tr((\operatorname{off}(\mathbf{A})\odot \operatorname{off}(\mathbf{A}))^k)$ counts the number of length-$k$ weighted closed walks in $\mathcal{G}$ excluding the self-loops. Therefore, $g(\mathbf{A})$ counts the total number of weighted closed walks in $\mathcal{G}$ without including self-loops. $g(\mathbf{A})=0$ then implies that $\mathcal{G}$ does not contain any cycle longer than one (i.e., it may contain self-loops). It is known that a directed graph is acyclic if and only if its weighted adjacency matrix can be permuted via simultaneous equal row and column permutations to be \emph{strictly} lower triangular. In our case, $\mathcal{G}$ may contain self-loops, and thus can be permuted to a lower triangular form. This distinction elucidates the difference between our characterization and that introduced by \citet{zheng2018notears,zhang2022truncated}, i.e., their characterization focuses on matrices that are strictly lower triangular, while ours focuses on lower triangular matrices.

The following proposition sheds light on the connection between $g(\mathbf{A})$ and Assumption \ref{assumption:lower_triangular}.
\begin{restatable}{proposition}{PropositionReformulatedSolution}\label{proposition:reformulated_solution}
The matrix $\mathbf{A}$ satisfies Assumption \ref{assumption:lower_triangular} if and only if there is a matrix $\hat{\mathbf{A}}$ such that it is a column permutation of $\mathbf{A}$ and that $g(\hat{\mathbf{A}})=0$.
\end{restatable}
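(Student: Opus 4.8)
The plan is to reduce the statement to pure permutation-matrix bookkeeping by first invoking Lemma~\ref{lemma:reformulated_solution} to replace the condition $g(\hat{\mathbf{A}})=0$ with the equivalent combinatorial statement that $\hat{\mathbf{A}}$ can be brought to lower triangular form by a \emph{simultaneous} equal row and column permutation, i.e., $\mathbf{P}^\top\hat{\mathbf{A}}\mathbf{P}$ is lower triangular for some permutation matrix $\mathbf{P}$. With this reformulation in hand, the proposition becomes: $\mathbf{A}$ admits \emph{separate} row and column permutations to lower triangular form (Assumption~\ref{assumption:lower_triangular}) if and only if some column permutation of $\mathbf{A}$ admits a simultaneous equal row and column permutation to lower triangular form.

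For the forward direction, I would start from $\mathbf{P}_1^\top\mathbf{A}\mathbf{P}_2 = \mathbf{L}$ lower triangular as granted by Assumption~\ref{assumption:lower_triangular}, and set $\hat{\mathbf{A}} \coloneqq \mathbf{A}\mathbf{P}_2\mathbf{P}_1^\top$. Since $\mathbf{P}_2\mathbf{P}_1^\top$ is again a permutation matrix, $\hat{\mathbf{A}}$ is a column permutation of $\mathbf{A}$. A direct computation gives $\mathbf{P}_1^\top\hat{\mathbf{A}}\mathbf{P}_1 = \mathbf{P}_1^\top\mathbf{A}\mathbf{P}_2\mathbf{P}_1^\top\mathbf{P}_1 = \mathbf{P}_1^\top\mathbf{A}\mathbf{P}_2 = \mathbf{L}$, which is lower triangular, so by Lemma~\ref{lemma:reformulated_solution} we conclude $g(\hat{\mathbf{A}})=0$, as required.

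For the converse, suppose $\hat{\mathbf{A}} = \mathbf{A}\mathbf{Q}$ for some permutation matrix $\mathbf{Q}$ with $g(\hat{\mathbf{A}})=0$. Lemma~\ref{lemma:reformulated_solution} yields a permutation matrix $\mathbf{P}$ such that $\mathbf{P}^\top\hat{\mathbf{A}}\mathbf{P} = \mathbf{P}^\top\mathbf{A}\mathbf{Q}\mathbf{P}$ is lower triangular. Taking $\mathbf{P}_1 \coloneqq \mathbf{P}$ and $\mathbf{P}_2 \coloneqq \mathbf{Q}\mathbf{P}$ (a product of permutation matrices, hence itself a permutation matrix), we obtain $\mathbf{P}_1^\top\mathbf{A}\mathbf{P}_2$ lower triangular, so $\mathbf{A}$ satisfies Assumption~\ref{assumption:lower_triangular}.

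The argument is essentially a one-line change of variables in each direction, so I do not expect a genuine obstacle; the only thing to watch is keeping the permutation conventions consistent (left multiplication by $\mathbf{P}^\top$ acts on rows, right multiplication by $\mathbf{P}$ acts on columns) and confirming that each manipulation preserves ``being a column permutation of $\mathbf{A}$'', which holds because the set of permutation matrices is closed under products, transposes, and inverses. If Lemma~\ref{lemma:reformulated_solution} is instead phrased with $\mathbf{P}\mathbf{A}\mathbf{P}^\top$ in place of $\mathbf{P}^\top\mathbf{A}\mathbf{P}$, the identical computation goes through after relabeling $\mathbf{P}\mapsto\mathbf{P}^\top$.
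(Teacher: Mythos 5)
Your proof is correct and follows essentially the same route as the paper: both directions reduce to the same change of variables via Lemma~\ref{lemma:reformulated_solution}, with your $\hat{\mathbf{A}}=\mathbf{A}\mathbf{P}_2\mathbf{P}_1^\top$ matching the paper's $\hat{\mathbf{A}}=\mathbf{A}\mathbf{P}_3$ with $\mathbf{P}_3=\mathbf{P}_2\mathbf{P}_1^{-1}$, and your converse with $\mathbf{P}_2\coloneqq\mathbf{Q}\mathbf{P}$ matching the paper's ``if part'' verbatim.
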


The above proposition indicates that the search for matrices $\mathbf{A}$ satisfying Assumption \ref{assumption:lower_triangular} can be effectively conducted by considering the constraint $g(\mathbf{A})=0$. Accordingly, we establish an alternative formulation of the identifiability result presented in Section \ref{sec:identifiability_result}. In the following section, we will introduce efficient approaches for solving Problem \eqref{eq:reformulated_identifiability} .
\begin{restatable}[Alternative Formulation of Identifiability]{theorem}{TheoremReformulatedIdentifiability}\label{thm:reformulated_identifiability}
Suppose that the true mixing matrix $\tilde{\mathbf{A}}$ satisfies Assumptions \ref{assumption:structural_variability}, \ref{assumption:lower_triangular}, and \ref{assumption:faithfulness}. Let $\hat{\mathbf{A}}$ be a solution of the following problem:
\begin{equation}\label{eq:reformulated_identifiability}
\min_{\mathbf{A}\in\mathbb{R}^{n\times n}} \|\mathbf{A}\|_0 \quad \subjectto  \quad \mathbf{A}\mathbf{A}^\top=\tilde{\mathbf{A}}\tilde{\mathbf{A}}^\top  \quad \andrm \quad g(\mathbf{A}) =0.
\end{equation}
Then, we have  $\hat{\mathbf{A}}\sim\tilde{\mathbf{A}}$.
\end{restatable}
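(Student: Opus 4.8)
The plan is to show that Problem~\eqref{eq:reformulated_identifiability} is equivalent to Problem~\eqref{eq:identifiability_ica}, after which the conclusion $\hat{\mathbf{A}}\sim\tilde{\mathbf{A}}$ follows immediately from Theorem~\ref{thm:identifiability_ica}. The two problems share the same objective $\|\mathbf{A}\|_0$ and the same covariance constraint $\mathbf{A}\mathbf{A}^\top=\tilde{\mathbf{A}}\tilde{\mathbf{A}}^\top$, so the only thing to reconcile is the difference between the feasible sets: ``$\mathbf{A}$ satisfies Assumption~\ref{assumption:lower_triangular}'' versus ``$g(\mathbf{A})=0$''. By Lemma~\ref{lemma:reformulated_solution}, $g(\mathbf{A})=0$ is equivalent to $\mathbf{A}$ being permutable to lower triangular via \emph{simultaneous equal} row and column permutations, which is strictly stronger than Assumption~\ref{assumption:lower_triangular} (which allows \emph{separate} permutations). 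Hence the feasible set of \eqref{eq:reformulated_identifiability} is a subset of that of \eqref{eq:identifiability_ica}, and the key point is that restricting to this smaller set does not lose any optimal solution nor introduce spurious ones.

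First I would argue that every feasible point of \eqref{eq:reformulated_identifiability} is feasible for \eqref{eq:identifiability_ica}: if $g(\mathbf{A})=0$ then by Lemma~\ref{lemma:reformulated_solution} some permutation $\mathbf{P}^\top\mathbf{A}\mathbf{P}$ is lower triangular, which is in particular a realization of Assumption~\ref{assumption:lower_triangular} with $\mathbf{P}_1=\mathbf{P}_2=\mathbf{P}$. Thus $\mathrm{val}\eqref{eq:identifiability_ica}\le\mathrm{val}\eqref{eq:reformulated_identifiability}$. Second, and conversely, I would show that the optimal value of \eqref{eq:reformulated_identifiability} is no larger. The true mixing matrix $\tilde{\mathbf{A}}$ satisfies Assumption~\ref{assumption:lower_triangular}, so by Proposition~\ref{proposition:reformulated_solution} there is a column permutation $\tilde{\mathbf{A}}\mathbf{P}$ with $g(\tilde{\mathbf{A}}\mathbf{P})=0$; moreover $(\tilde{\mathbf{A}}\mathbf{P})(\tilde{\mathbf{A}}\mathbf{P})^\top=\tilde{\mathbf{A}}\tilde{\mathbf{A}}^\top$ and $\|\tilde{\mathbf{A}}\mathbf{P}\|_0=\|\tilde{\mathbf{A}}\|_0$. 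So $\tilde{\mathbf{A}}\mathbf{P}$ is feasible for \eqref{eq:reformulated_identifiability}, giving $\mathrm{val}\eqref{eq:reformulated_identifiability}\le\|\tilde{\mathbf{A}}\|_0$. But by Theorem~\ref{thm:identifiability_ica} every solution of \eqref{eq:identifiability_ica} is $\sim\tilde{\mathbf{A}}$ and hence has exactly $\|\tilde{\mathbf{A}}\|_0$ nonzeros, so $\mathrm{val}\eqref{eq:identifiability_ica}=\|\tilde{\mathbf{A}}\|_0$. Combining the two inequalities, both problems have optimal value $\|\tilde{\mathbf{A}}\|_0$.

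Finally, I would take any solution $\hat{\mathbf{A}}$ of \eqref{eq:reformulated_identifiability}. By the first step it is feasible for \eqref{eq:identifiability_ica}, and it attains the common optimal value $\|\tilde{\mathbf{A}}\|_0=\mathrm{val}\eqref{eq:identifiability_ica}$, so $\hat{\mathbf{A}}$ is in fact a solution of \eqref{eq:identifiability_ica}. Theorem~\ref{thm:identifiability_ica} then yields $\hat{\mathbf{A}}\sim\tilde{\mathbf{A}}$, completing the proof. The main thing to be careful about is the direction of the value comparison and the use of Theorem~\ref{thm:identifiability_ica} to pin down $\mathrm{val}\eqref{eq:identifiability_ica}=\|\tilde{\mathbf{A}}\|_0$ (rather than just an inequality); everything else is bookkeeping with Lemma~\ref{lemma:reformulated_solution} and Proposition~\ref{proposition:reformulated_solution}, both of which are available from the excerpt. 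One should also double-check that $\hat{\mathbf{A}}$ from \eqref{eq:reformulated_identifiability} is automatically non-singular (needed to invoke the theory cleanly), which follows since $\hat{\mathbf{A}}\hat{\mathbf{A}}^\top=\tilde{\mathbf{A}}\tilde{\mathbf{A}}^\top$ is positive definite as $\tilde{\mathbf{A}}$ has full rank.
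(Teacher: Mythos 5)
Your proposal is correct and follows essentially the same route as the paper: both arguments use Lemma~\ref{lemma:reformulated_solution} and Proposition~\ref{proposition:reformulated_solution} to show that the feasible sets of Problems~\eqref{eq:reformulated_identifiability} and~\eqref{eq:identifiability_ica} coincide up to column permutation (which preserves both $\|\cdot\|_0$ and $\mathbf{A}\mathbf{A}^\top$), conclude that a solution of~\eqref{eq:reformulated_identifiability} is also a solution of~\eqref{eq:identifiability_ica}, and then invoke Theorem~\ref{thm:identifiability_ica}. The only difference is presentational: the paper argues by contradiction with a hypothetical sparser feasible point $\ddot{\mathbf{A}}$, while you compare optimal values directly, which is equivalent.
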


\subsection{Estimation Methods}\label{sec:estimation_method}
Based on the identifiabiltiy results of Theorems \ref{thm:identifiability_ica} and \ref{thm:reformulated_identifiability}, we propose two estimation methods, called \emph{SparseICA}, to perform ICA from second-order statistics that leverage sparsity regularization and continuous constrained optimization. To proceed, we define $\bar{\mathbf{\Sigma}}$ as the empirical covariance matrix of observed variables $\mathbf{x}$ and $T$ as the sample size.

{\bf Decomposition-based method.} \ \ \ 
Given the formulation in Eq.~\eqref{eq:reformulated_identifiability}, we consider the following constrained optimization problem
\begin{equation}\label{eq:decomposition_method}
\min_{\mathbf{A}\in\mathbb{R}^{n\times n}} \rho(\mathbf{A}) \quad \subjectto  \quad \mathbf{A}\mathbf{A}^\top-\bar{\mathbf{\Sigma}}=\mathbf{0}  \quad \andrm \quad g(\mathbf{A}) =0,
\end{equation}
where $\rho(\mathbf{A})$ is a suitable sparsity regularizer, often expressible as $\rho(\mathbf{A})=\sum_{i,j}\rho(a_{i,j})$. Formulation~\eqref{eq:reformulated_identifiability} indicates that one should apply the $\ell_0$ regularizer $\rho(\mathbf{A})=\|\mathbf{A}\|_0$. Alternatively, other possible choices include the $\ell_1$ regularizer $\rho(\mathbf{A})=\|\mathbf{A}\|_1$ that supports continuous optimization. Further details regarding our specific choice of sparsity regularizer will be elaborated later in this section. On the other hand, we simply use the empirical covariance matrix $\bar{\mathbf{\Sigma}}$ as an estimate of the true covariance matrix $\tilde{\mathbf{\Sigma}}$, which is found to work well across different sample sizes in our experiments. One may also adopt a regularized estimator of the form $\bar{\mathbf{\Sigma}}+\eta \mathbf{I}$ with a proper choice of $\eta$, which may have notable advantage in certain cases \citep{daniels2001shrinkage,ledoit2004well,schafer2005shrinkage}. 

{\bf Likelihood-based method.} \ \ \ 
In addition to the decomposition-based method above, we introduce a likelihood-based estimation method formulated by the following constrained optimization problem:
\begin{flalign}
\min_{\mathbf{A}\in\mathbb{R}^{n\times n}}&  L(\mathbf{A};\bar{\mathbf{\Sigma}})+\rho(\mathbf{A}) \quad \subjectto  \quad g(\mathbf{A})= 0,\label{eq:likelihood_method}\\
\text{where~~~}& L(\mathbf{A};\bar{\mathbf{\Sigma}})=\frac{T}{2}\tr((\mathbf{A}^\top \mathbf{A})^{-1}\bar{\mathbf{\Sigma}})+T\log|\det \mathbf{A}|\nonumber
\end{flalign}
is the negative Gaussian log-likelihood function and $\rho(\mathbf{A})$ is a sparsity regularizer. The following result establishes the theoretical guarantee of this likelihood-based method.

\begin{restatable}[Likelihood-Based Method]{theorem}{TheoremLikelihoodMethod}\label{thm:likelihood_based_estimation}
Suppose that the true mixing matrix $\tilde{\mathbf{A}}$ satisfies Assumptions \ref{assumption:structural_variability}, \ref{assumption:lower_triangular}, and \ref{assumption:faithfulness}. Let $\hat{\mathbf{A}}$ be a solution of Problem~\eqref{eq:likelihood_method} with sparsity regularizer $\rho(\mathbf{A})=0.5\|\mathbf{A}\|_0\log T$. Then, we have $\hat{\mathbf{A}}\sim\tilde{\mathbf{A}}$ in the large sample limit.
\end{restatable}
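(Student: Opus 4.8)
The statement is a Schwarz/BIC-type consistency result, so the plan is to combine standard Gaussian-likelihood asymptotics with the identifiability theorem already proved (Theorem~\ref{thm:reformulated_identifiability}). First I would pass to the large-sample regime and replace $\bar{\mathbf{\Sigma}}$ by the population covariance $\tilde{\mathbf{\Sigma}} := \tilde{\mathbf{A}}\tilde{\mathbf{A}}^\top$ (justified by $\bar{\mathbf{\Sigma}} \to \tilde{\mathbf{\Sigma}}$). The key algebraic observation is that $L(\mathbf{A};\tilde{\mathbf{\Sigma}})$ depends on $\mathbf{A}$ only through $\mathbf{\Sigma} := \mathbf{A}\mathbf{A}^\top$ and, up to an additive constant independent of $\mathbf{A}$, factors as $L(\mathbf{A};\tilde{\mathbf{\Sigma}}) = L_0 + T\, D(\mathbf{A})$, where $L_0 = \tfrac{T}{2}\big(n + \log\det\tilde{\mathbf{\Sigma}}\big)$ is the global minimum of the negative log-likelihood over the positive-definite cone (attained exactly at $\mathbf{\Sigma} = \tilde{\mathbf{\Sigma}}$) and $D(\mathbf{A}) = D_{\mathrm{KL}}\big(\mathcal{N}(0,\tilde{\mathbf{\Sigma}}) \,\big\|\, \mathcal{N}(0,\mathbf{A}\mathbf{A}^\top)\big) \ge 0$, with $D(\mathbf{A}) = 0$ iff $\mathbf{A}\mathbf{A}^\top = \tilde{\mathbf{\Sigma}}$. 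Hence minimizing the objective of Problem~\eqref{eq:likelihood_method} over the feasible set $\{\mathbf{A} : g(\mathbf{A}) = 0\}$ is equivalent to minimizing the reduced objective $T\,D(\mathbf{A}) + \tfrac12\|\mathbf{A}\|_0\log T$; since this is coercive over the nonsingular matrices (as $D$ blows up when $\mathbf{A}\mathbf{A}^\top$ degenerates or its norm diverges), a minimizer $\hat{\mathbf{A}}$ exists.

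Next I would fix a reference competitor. By Proposition~\ref{proposition:reformulated_solution}, since $\tilde{\mathbf{A}}$ satisfies Assumption~\ref{assumption:lower_triangular} there is a column permutation $\tilde{\mathbf{A}}^\star$ of $\tilde{\mathbf{A}}$ with $g(\tilde{\mathbf{A}}^\star) = 0$; it is feasible, satisfies $\tilde{\mathbf{A}}^\star(\tilde{\mathbf{A}}^\star)^\top = \tilde{\mathbf{\Sigma}}$ and $\|\tilde{\mathbf{A}}^\star\|_0 = \|\tilde{\mathbf{A}}\|_0$, and $\tilde{\mathbf{A}}^\star \sim \tilde{\mathbf{A}}$, so its reduced objective value is $\tfrac12\|\tilde{\mathbf{A}}\|_0\log T$. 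Moreover Theorem~\ref{thm:reformulated_identifiability} shows that among all feasible $\mathbf{A}$ with $\mathbf{A}\mathbf{A}^\top = \tilde{\mathbf{\Sigma}}$ the smallest possible value of $\|\mathbf{A}\|_0$ is exactly $\|\tilde{\mathbf{A}}\|_0$, attained only by $\mathbf{A} \sim \tilde{\mathbf{A}}$. Consequently any feasible $\mathbf{A}$ with $\mathbf{A}\mathbf{A}^\top = \tilde{\mathbf{\Sigma}}$ but $\mathbf{A} \not\sim \tilde{\mathbf{A}}$ has $\|\mathbf{A}\|_0 > \|\tilde{\mathbf{A}}\|_0$ and hence strictly larger reduced objective than $\tilde{\mathbf{A}}^\star$; this disposes of the ``zero-KL'' competitors.

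The remaining competitors have $\mathbf{A}\mathbf{A}^\top \neq \tilde{\mathbf{\Sigma}}$, i.e.\ $D(\mathbf{A}) > 0$; group them by support $\boldsymbol{\xi} = \supp(\mathbf{A})$, of which there are only finitely many. If $\|\boldsymbol{\xi}\|_0 \ge \|\tilde{\mathbf{A}}\|_0$, then $T\,D(\mathbf{A}) + \tfrac12\|\mathbf{A}\|_0\log T > \tfrac12\|\tilde{\mathbf{A}}\|_0\log T$ at once. If $\|\boldsymbol{\xi}\|_0 < \|\tilde{\mathbf{A}}\|_0$, then by Theorem~\ref{thm:reformulated_identifiability} no feasible matrix with support contained in $\boldsymbol{\xi}$ can realize $\tilde{\mathbf{\Sigma}}$ (otherwise a feasible matrix with fewer than $\|\tilde{\mathbf{A}}\|_0$ nonzeros would realize $\tilde{\mathbf{\Sigma}}$, contradicting minimality); hence
\[
\delta_{\boldsymbol{\xi}} := \inf\big\{\, D(\mathbf{A}) : \supp(\mathbf{A}) \subseteq \boldsymbol{\xi},\; g(\mathbf{A}) = 0,\; \mathbf{A}\text{ nonsingular} \,\big\} > 0,
\]
so for every feasible $\mathbf{A}$ with this support $T\,D(\mathbf{A}) + \tfrac12\|\mathbf{A}\|_0\log T \ge T\,\delta_{\boldsymbol{\xi}}$, which exceeds $\tfrac12\|\tilde{\mathbf{A}}\|_0\log T$ once $T$ is larger than a finite threshold. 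Taking the maximum of these finitely many thresholds, for all sufficiently large $T$ every feasible $\mathbf{A}$ with $\mathbf{A}\mathbf{A}^\top \neq \tilde{\mathbf{\Sigma}}$ is strictly worse than $\tilde{\mathbf{A}}^\star$. Combining the cases, the minimizer $\hat{\mathbf{A}}$ must satisfy $\hat{\mathbf{A}}\hat{\mathbf{A}}^\top = \tilde{\mathbf{\Sigma}}$, $g(\hat{\mathbf{A}}) = 0$, and $\|\hat{\mathbf{A}}\|_0 = \|\tilde{\mathbf{A}}\|_0$; that is, $\hat{\mathbf{A}}$ solves Problem~\eqref{eq:reformulated_identifiability}, whence $\hat{\mathbf{A}} \sim \tilde{\mathbf{A}}$ by Theorem~\ref{thm:reformulated_identifiability}.

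\textbf{The main difficulty} is the strict positivity $\delta_{\boldsymbol{\xi}} > 0$: the covariances realizable by feasible matrices supported on $\boldsymbol{\xi}$ form a generally non-compact semialgebraic subset of the positive-definite cone whose closure excludes $\tilde{\mathbf{\Sigma}}$, so continuity of $D$ alone does not yield a positive infimum. The fix is to note that $D(\mathbf{A}) \to \infty$ as any eigenvalue of $\mathbf{A}\mathbf{A}^\top$ tends to $0$ or $\infty$, so the infimum may be restricted to a compact sublevel set on which $D$ is continuous and (being zero only at $\tilde{\mathbf{\Sigma}}$) strictly positive; one must also invoke Theorem~\ref{thm:reformulated_identifiability} precisely to rule out that such a $\boldsymbol{\xi}$ realizes $\tilde{\mathbf{\Sigma}}$ even in the closure. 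Everything else is routine Gaussian-likelihood algebra. The genuine finite-sample version follows by the same comparison after absorbing the stochastic term $L(\tilde{\mathbf{A}}^\star;\bar{\mathbf{\Sigma}}) - \min_{\mathbf{\Sigma} \succ 0}\tfrac{T}{2}\big(\tr(\mathbf{\Sigma}^{-1}\bar{\mathbf{\Sigma}}) + \log\det\mathbf{\Sigma}\big) = T\,D_{\mathrm{KL}}\big(\mathcal{N}(0,\bar{\mathbf{\Sigma}}) \,\big\|\, \mathcal{N}(0,\tilde{\mathbf{\Sigma}})\big) = O_p(1)$ into the $\tfrac12\log T$ margins, which is the standard Schwarz consistency bookkeeping; here Proposition~\ref{proposition:covariance_matrix_dimension}, giving $\dim(\mathbf{\Sigma}(\boldsymbol{\xi})) = \|\boldsymbol{\xi}\|_0$, supplies the local regularity underlying the $O_p(1)$ rate.
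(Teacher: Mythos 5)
Your proposal is correct and follows essentially the same route as the paper's proof: in the large-sample limit the likelihood term forces $\hat{\mathbf{A}}\hat{\mathbf{A}}^\top=\tilde{\mathbf{\Sigma}}$, the regularizer then forces minimal $\|\cdot\|_0$ among such feasible matrices, so $\hat{\mathbf{A}}$ solves Problem~\eqref{eq:reformulated_identifiability} and Theorem~\ref{thm:reformulated_identifiability} finishes. The only substantive difference is that the paper asserts the dominance step by analogy with BIC, whereas you actually establish it --- via the KL decomposition $L(\mathbf{A};\tilde{\mathbf{\Sigma}})=L_0+T\,D(\mathbf{A})$, grouping competitors by the finitely many supports, and the coercivity/compactness argument giving $\delta_{\boldsymbol{\xi}}>0$ for any support that cannot realize $\tilde{\mathbf{\Sigma}}$ --- which is precisely the bookkeeping needed to make the paper's short argument rigorous.
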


\begin{algorithm}[!t]
\caption{Decomposition-Based Method}
\label{alg:decomposition_method}
\begin{algorithmic}[1]
    \Require initial penalty coefficient $c_1 > 0$; multiplicative factor $\beta > 1$; maximum number of iterations $k_{\max}>0$; tolerance $\epsilon_1,\epsilon_2>0$; initial solution $\mathbf{A}_0$; empirical covariance matrix $\bar{\mathbf{\Sigma}}$
    \State \textbf{for} $k=1,2,\ldots,k_{\max}$ \textbf{do}
        \State \qquad Solve $\mathbf{A}_k\coloneqq\argmin_{\mathbf{A}\in\mathbb{R}^{n\times n}}\rho(\mathbf{A})+\frac{c_k}{2}\|\mathbf{A}\mathbf{A}^\top-\bar{\mathbf{\Sigma}}\|_F^2+\frac{c_k}{2}g(\mathbf{A})^2$ initialized at $\mathbf{A}_{k-1}$
        \State \qquad \textbf{if} $\|\mathbf{A}_k\mathbf{A}_k^\top-\bar{\mathbf{\Sigma}}\|_F^2<\epsilon_1$ and $g(\mathbf{A}_k)<\epsilon_2$ \textbf{then break} 
        \State \qquad Update penalty coefficient $c_{k+1} \coloneqq \beta c_k$
    \State \textbf{Output} solution $\mathbf{A}_k$
\end{algorithmic}
\end{algorithm}

\begin{algorithm}[!t]
\caption{Likelihood-Based Method}
\label{alg:likelihood_method}
\begin{algorithmic}[1]
    \Require initial penalty coefficient $c_1 > 0$; multiplicative factor $\beta > 1$; maximum number of iterations $k_{\max}>0$; tolerance $\epsilon>0$; initial solution $\mathbf{A}_0$; empirical covariance matrix $\bar{\mathbf{\Sigma}}$
    \State \textbf{for} $k=1,2,\ldots,k_{\max}$ \textbf{do}
        \State \qquad Solve $\mathbf{A}_k\coloneqq\argmin_{\mathbf{A}\in\mathbb{R}^{n\times n}}L(\mathbf{A};\bar{\mathbf{\Sigma}})+\rho(\mathbf{A})+\frac{c_k}{2}g(\mathbf{A})^2$ initialized at $\mathbf{A}_{k-1}$
        \State \qquad \textbf{if} $g(\mathbf{A}_k)<\epsilon$ \textbf{then break} 
        \State \qquad Update penalty coefficient $c_{k+1} \coloneqq \beta c_k$
    \State \textbf{Output} solution $\mathbf{A}_k$
\end{algorithmic}
\end{algorithm}

{\bf Implementation.} \ \ \ 
Based on Theorems \ref{thm:reformulated_identifiability} and \ref{thm:likelihood_based_estimation}, ideally one should adopt the $\ell_0$ regularizer $\rho(\mathbf{A})=\lambda\|\mathbf{A}\|_0$ and develop an exact discrete search procedure over the support space of matrix $\mathbf{A}$. However, such approach may pose computational challenges in practice. Since the functions $L(\mathbf{A};\bar{\mathbf{\Sigma}})$ and $g(\mathbf{A})$ are differentiable, in this work we develop an estimation procedure that leverages efficient continuous optimization techniques. Therefore, some possible choices for $\rho(\mathbf{A})$ are $\ell_1$, smoothly clipped absolute deviation (SCAD) \citep{fan2001variable}, and minimax concave penalty (MCP) \citep{zhang2010nearly} regularizers. The $\ell_1$ regularizer has been shown to exhibit bias during estimation \citep{fan2001variable,breheny2011coordinate}, especially for large coefficients. Here, we adopt the MCP regularizer that is less susceptible to such issue, given by
\[
\rho(a_{i,j})=\begin{cases}
    \lambda|a_{i,j}|-\frac{a_{i,j}^2}{2\alpha}, &~\text{if}~|a_{i,j}|\leq \alpha\lambda,\\  
    \frac{\alpha\lambda^2}{2}, &~\text{otherwise},
    \end{cases} 
\]
where $\lambda$ and $\alpha$ are hyperparameters.

To solve Eqs. \eqref{eq:decomposition_method} and \eqref{eq:likelihood_method}, standard constrained optimization methods can be used, such as quadratic penalty method, augmented Lagrangian method, and barrier method \citep{bertsekas1982constrained, bertsekas1999nonlinear,nocedal2006numerical}. In this work, we adopt the quadratic penalty method that converts each constrained problem into a sequence of unconstrained optimization problems where the constraint violations are increasingly penalized. We describe the full procedure of the decomposition-based and likelihood-based methods based on quadratic penalty method in Algorithms \ref{alg:decomposition_method} and \ref{alg:likelihood_method}, respectively. The unconstrained problem in each iteration can be solved using different continuous optimization solvers, including first-order methods such as gradient descent and steepest descent, as well as second-order methods such as quasi-Newton methods. In our experiments presented in the subsequent section, we employ L-BFGS \citep{byrd1995limited}, a quasi-Newton method, to solve the unconstrained optimization problem.

It is worth noting that the formulations in Eqs. \eqref{eq:decomposition_method} and \eqref{eq:likelihood_method} involve solving nonconvex optimization problems; in practice, the optimization procedure may return stationary points that correspond to suboptimal local solutions. Therefore, we run the method for a number of times and choose the final mixing matrix via model selection. Further details regarding the optimization procedure and implementation are provided in Appendix \ref{app:implementation_details}.
\section{Experiments}\label{sec:numerical_evaluations}
To empirically validate our proposed identifiability results, we carry out experiments under various settings. We also conduct ablation studies to verify the necessity of the proposed assumptions and include \textit{FastICA} \citep{hyvarinen1997fastica} as a representative baseline. Specifically, we consider the following methods: 
\begin{itemize}
\item \textit{SparseICA}: Decomposition-based (Eq. \eqref{eq:decomposition_method}) or likelihood-based (Eq. \eqref{eq:likelihood_method}) method on data where both Assumptions \ref{assumption:structural_variability} and \ref{assumption:lower_triangular} hold; 
\item \textit{Vanilla}: Decomposition-based (Eq. \eqref{eq:decomposition_method}) or likelihood-based (Eq. \eqref{eq:likelihood_method}) method without the constraint $g(\mathbf{A})=0$, on data where neither Assumption \ref{assumption:structural_variability} nor Assumption \ref{assumption:lower_triangular} holds;
\item \textit{FastICA-D}: FastICA on data where both Assumptions \ref{assumption:structural_variability} and \ref{assumption:lower_triangular} hold;
\item \textit{FastICA}: FastICA on data where neither Assumption \ref{assumption:structural_variability} nor Assumption \ref{assumption:lower_triangular} holds.
\end{itemize}

For all experiments, we simulate $10$ sources, and generate the supports of the true mixing matrices $\tilde{\mathbf{A}}$ according to the assumptions required by each method above. The nonzero entries of $\tilde{\mathbf{A}}$ are sampled uniformly at random from $[-0.8,-0.2]\cup[0.2,0.8]$. We use mean correlation coefficient (MCC) and Amari distance \citep{amari1995new} as evaluation metrics, where all results are reported for 10 random trials.

\begin{figure}[!t]
\centering
\subfloat{
    \includegraphics[width=0.69\textwidth]{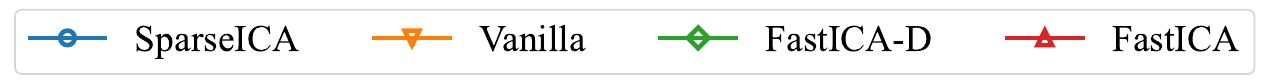}
}\\ \vspace{-1.2em}
\setcounter{subfigure}{0}
\subfloat[Likelihood-based method.]{
    \includegraphics[width=0.43\textwidth]{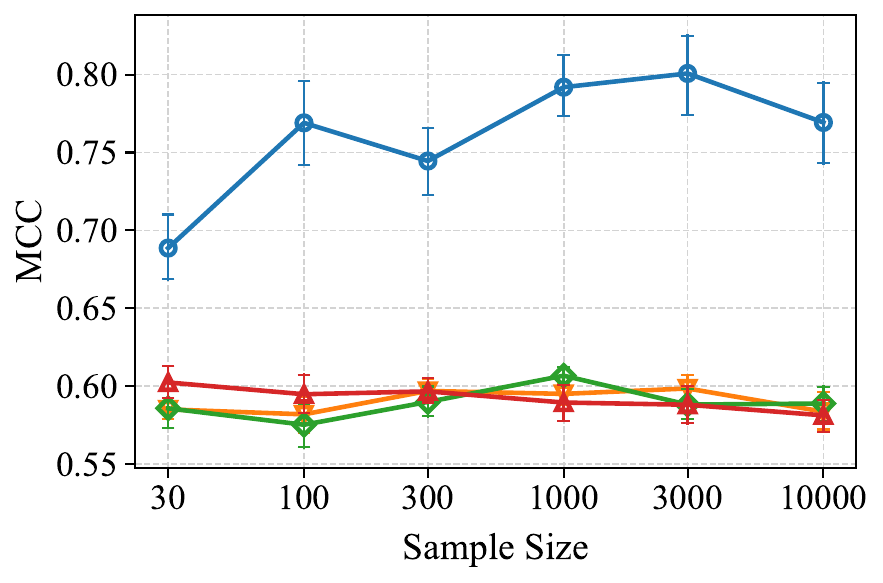}
    \label{fig:gaussian_likelihood_different_num_samples_mcc}
}
\subfloat[Decomposition-based method.]{
    \includegraphics[width=0.42\textwidth]{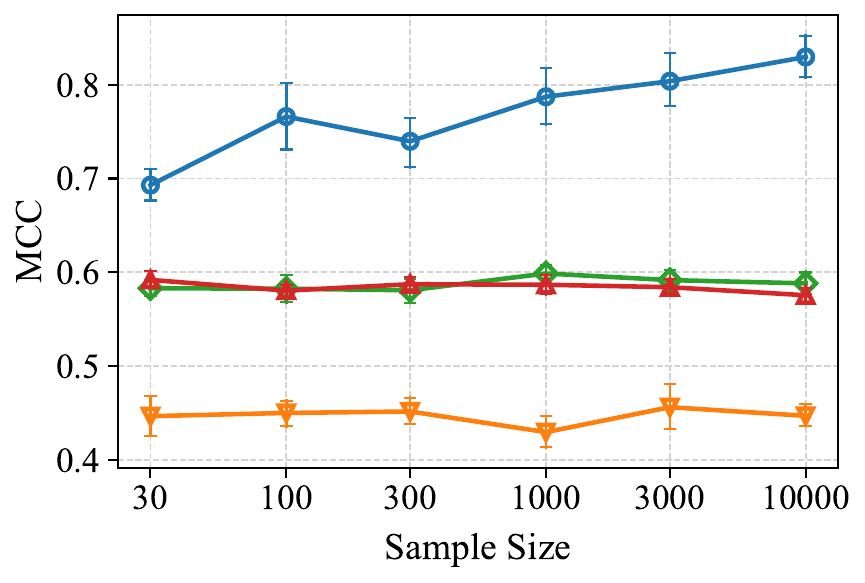}
    \label{fig:gaussian_decomposition_different_num_samples_mcc}
}
\caption{Empirical results of MCC across different sample sizes. Error bars indicate the standard errors calculated based on $10$ random trials.}
\label{fig:different_num_samples_mcc}
\end{figure}

\textbf{Different sample sizes.}  \ \ \
We first consider entirely Gaussian sources and different sample sizes. The empirical results of MCC are shown in Figure \ref{fig:different_num_samples_mcc}, while those of Amari distance are given in Figure \ref{fig:different_num_samples_amari_distance} in Appendix \ref{app:numerical_evaluations}. By comparing \textit{SparseICA} with \textit{Vanilla} and \textit{FastICA}, it is evident that the identification performance is much better across different sample sizes when the required assumptions on the connective structure are satisfied, as validated by Wilcoxon signed-rank test at $5\%$ significance level. Furthermore, the unsatisfactory results of \textit{FastICA-D} indicate that our estimation methods are also essential for ensuring the quality of the identification, which further validates the proposed identifiability theory. Since \textit{FastICA-D} performs similarly to \textit{FastICA}, it suggests that the data-generating process, while meeting our assumption, may not be inherently simpler to recover without considering specific procedure to handle Gaussian sources. 
In addition, as expected, the performance of SparseICA improves in terms of both MCC and Amari distance as the sample size increases.

\textbf{Different ratios of Gaussian sources.}  \ \ \
We now conduct empirical study to investigate the performance in the presence of Gaussian and non-Gaussian sources. Here, the non-Gaussian sources follow exponential distributions. We consider different ratios of Gaussian sources, which are specifically $0$, $0.2$, $0.4$, $0.6$, $0.8$, and $1$. For instance, ratio of $0.6$ indicates that there are $6$ Gaussian sources and $4$ non-Gaussian sources. The empirical results of MCC based on $1000$ samples are depicted in Figure \ref{fig:different_num_samples_mcc}, while those of Amari distance are provided in Figure \ref{fig:different_num_samples_amari_distance} in Appendix \ref{app:numerical_evaluations}. One observes that the identification performance \textit{SparseICA} is rather stable across different ratios of Gaussian sources, which may not be surprising as it leverages only second-order statistics. On the other hand, the performance of \textit{FastICA-D} and \textit{FastICA} deteriorates as the ratio of Gaussian sources increases, because it relies on non-Gaussianity of the sources. It is also observed that, in the presence of Gaussian sources, \textit{FastICA-D} and \textit{FastICA} may perform well, provided that the ratio (or number) of Gaussian sources is not large. This suggests a potential future direction to integrate our method based on second-order statistics with existing methods that rely on non-Gaussianity, which may better handle both Gaussian and non-Gaussian sources.

\begin{figure}[!t]
\centering
\subfloat{
    \includegraphics[width=0.69\textwidth]{figures/legend.pdf}
}\\ \vspace{-1.2em}
\setcounter{subfigure}{0}
\subfloat[Likelihood-based method.]{
    \includegraphics[width=0.42\textwidth]{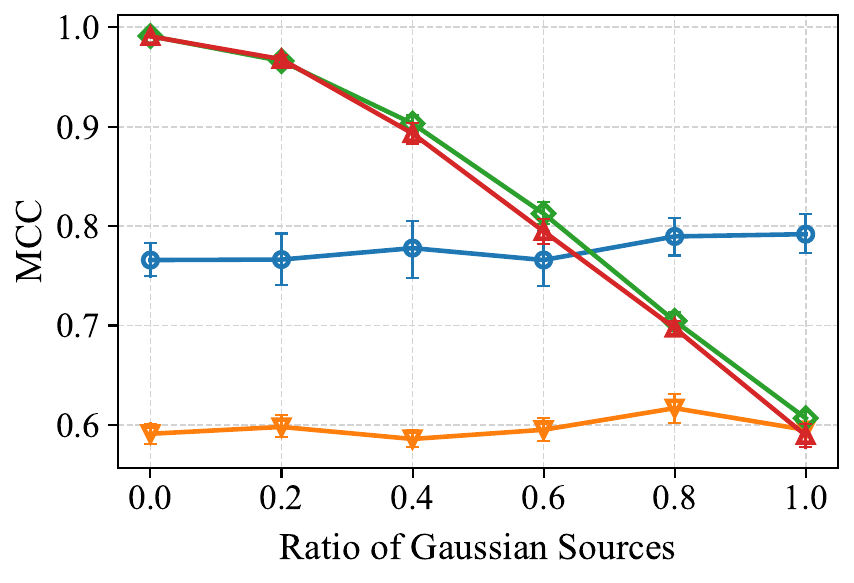}
    \label{fig:likelihood_different_gaussian_ratios_mcc}
}
\subfloat[Decomposition-based method.]{
    \includegraphics[width=0.42\textwidth]{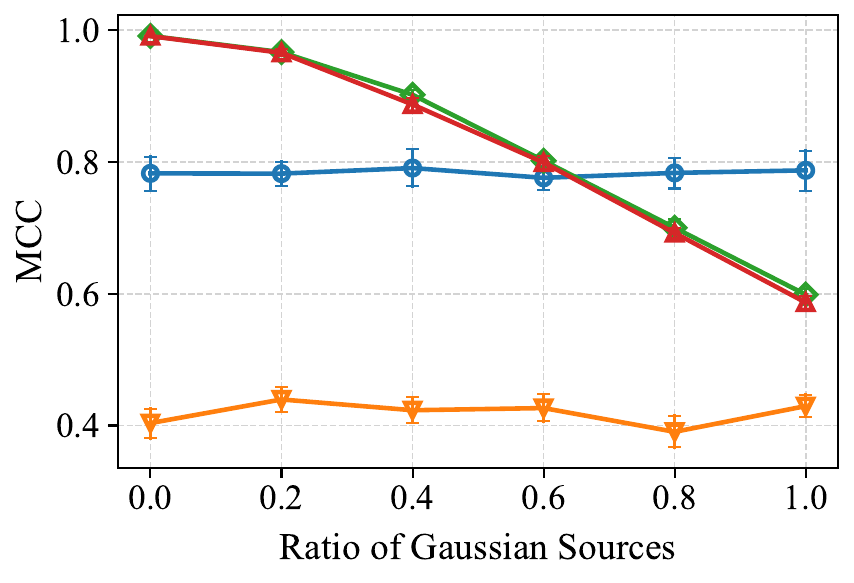}
    \label{fig:decomposition_different_gaussian_ratios_mcc}
}
\caption{Empirical results of MCC across different ratios of Gaussian sources. Error bars indicate the standard errors calculated based on $10$ random trials.}
\label{fig:different_gaussian_ratios_mcc}
\end{figure}
\section{Conclusion}
We develop an identifiability theory of ICA from second-order statistics without relying on non-Gaussianity. Specifically, we introduce novel and precise assumptions on the connective structure from sources and observed variables, and show that our proposed assumption of structural variability is strictly weaker than the previous ones. Importantly, we prove that this assumption is one of the necessary conditions for achieving identifiability in the investigated setting. We further propose two estimation methods based on second-order statistics that leverage sparsity regularization. Moreover, we establish a precise connection between our identifiability result of ICA and causal discovery from second-order statistics, which may open up avenues for exploring the interplay between ICA and causal discovery with linear Gaussian SEM. Our theoretical claims have also been empirically validated across different settings. The limitations include the lack of finite sample analysis and broader application of our theory in more real-world tasks, which are worth exploring in future work.

\section*{Acknowledgments}
The authors would like to thank the anonymous reviewers and Sébastien Lachapelle for helpful comments and suggestions. This project is partially supported by NSF Grant 2229881, the National Institutes of Health (NIH) under Contract R01HL159805, a grant from Apple Inc., a grant from KDDI Research Inc., and generous gifts from Salesforce Inc., Microsoft Research, and Amazon Research.

{
\bibliographystyle{abbrvnat}
\bibliography{bibliography.bib}}

\begin{thebibliography}{55}
\providecommand{\natexlab}[1]{#1}
\providecommand{\url}[1]{\texttt{#1}}
\expandafter\ifx\csname urlstyle\endcsname\relax
  \providecommand{\doi}[1]{doi: #1}\else
  \providecommand{\doi}{doi: \begingroup \urlstyle{rm}\Url}\fi

\bibitem[Abrahamsen and Rigollet(2018)]{abrahamsen2018sparse}
N.~Abrahamsen and P.~Rigollet.
\newblock Sparse gaussian {ICA}.
\newblock \emph{arXiv preprint arXiv:1804.00408}, 2018.

\bibitem[Albera et~al.(2005)Albera, Ferreol, Chevalier, and Comon]{albera2005icar}
L.~Albera, A.~Ferreol, P.~Chevalier, and P.~Comon.
\newblock {ICAR}: a tool for blind source separation using fourth-order statistics only.
\newblock \emph{IEEE Transactions on Signal Processing}, 53\penalty0 (10):\penalty0 3633--3643, 2005.

\bibitem[Amari et~al.(1995)Amari, Cichocki, and Yang]{amari1995new}
S.-i. Amari, A.~Cichocki, and H.~Yang.
\newblock A new learning algorithm for blind signal separation.
\newblock In \emph{Advances in Neural Information Processing Systems}, 1995.

\bibitem[Andersson et~al.(1997)Andersson, Madigan, and Perlman]{andersson1997characterization}
S.~A. Andersson, D.~Madigan, and M.~D. Perlman.
\newblock A characterization of {Markov} equivalence classes for acyclic digraphs.
\newblock \emph{The Annals of Statistics}, 25\penalty0 (2):\penalty0 505--541, 1997.

\bibitem[Belouchrani et~al.(1997)Belouchrani, Abed-Meraim, Cardoso, and Moulines]{belouchrani1997blind}
A.~Belouchrani, K.~Abed-Meraim, J.-F. Cardoso, and E.~Moulines.
\newblock A blind source separation technique using second-order statistics.
\newblock \emph{IEEE Transactions on signal processing}, 45\penalty0 (2):\penalty0 434--444, 1997.

\bibitem[Benedetti and Risler(1990)]{benedetti1990real}
R.~Benedetti and J.-J. Risler.
\newblock \emph{Real algebraic and semi-algebraic sets}.
\newblock Actualit{\'e}s math{\'e}matiques. Hermann, Paris, 1990.

\bibitem[Bertsekas(1982)]{bertsekas1982constrained}
D.~P. Bertsekas.
\newblock \emph{Constrained Optimization and {Lagrange} Multiplier Methods}.
\newblock Academic Press, 1982.

\bibitem[Bertsekas(1999)]{bertsekas1999nonlinear}
D.~P. Bertsekas.
\newblock \emph{Nonlinear Programming}.
\newblock Athena Scientific, 2nd edition, 1999.

\bibitem[Biton et~al.(2014)Biton, Bernard-Pierrot, Lou, Krucker, Chapeaublanc, Rubio-P{\'e}rez, L{\'o}pez-Bigas, Kamoun, Neuzillet, Gestraud, Grieco, Rebouissou, de~Reyni{\`e}s, Benhamou, Lebret, Southgate, Barillot, Allory, Zinovyev, and Radvanyi]{biton2014independent}
A.~Biton, I.~Bernard-Pierrot, Y.~Lou, C.~Krucker, E.~Chapeaublanc, C.~Rubio-P{\'e}rez, N.~L{\'o}pez-Bigas, A.~Kamoun, Y.~Neuzillet, P.~Gestraud, L.~Grieco, S.~Rebouissou, A.~de~Reyni{\`e}s, S.~Benhamou, T.~Lebret, J.~Southgate, E.~Barillot, Y.~Allory, A.~Zinovyev, and F.~Radvanyi.
\newblock Independent component analysis uncovers the landscape of the bladder tumor transcriptome and reveals insights into luminal and basal subtypes.
\newblock \emph{Cell Reports}, 9\penalty0 (4):\penalty0 1235--1245, Nov 2014.

\bibitem[Brady et~al.(2023)Brady, Zimmermann, Sharma, Sch\"{o}lkopf, Von~K\"{u}gelgen, and Brendel]{brady2023provably}
J.~Brady, R.~S. Zimmermann, Y.~Sharma, B.~Sch\"{o}lkopf, J.~Von~K\"{u}gelgen, and W.~Brendel.
\newblock Provably learning object-centric representations.
\newblock In \emph{International Conference on Machine Learning}, 2023.

\bibitem[Breheny and Huang(2011)]{breheny2011coordinate}
P.~Breheny and J.~Huang.
\newblock Coordinate descent algorithms for nonconvex penalized regression, with applications to biological feature selection.
\newblock \emph{The Annals of Applied Statistics}, 5\penalty0 (1):\penalty0 232--253, 2011.

\bibitem[Byrd et~al.(1995)Byrd, Lu, Nocedal, and Zhu]{byrd1995limited}
R.~H. Byrd, P.~Lu, J.~Nocedal, and C.~Zhu.
\newblock A limited memory algorithm for bound constrained optimization.
\newblock \emph{SIAM Journal on Scientific Computing}, 16\penalty0 (5):\penalty0 1190--1208, 1995.

\bibitem[Comon(1994)]{comon1994independent}
P.~Comon.
\newblock Independent component analysis, a new concept?
\newblock \emph{Signal processing}, 36\penalty0 (3):\penalty0 287--314, 1994.

\bibitem[Daniels and Kass(2001)]{daniels2001shrinkage}
M.~J. Daniels and R.~E. Kass.
\newblock Shrinkage estimators for covariance matrices.
\newblock \emph{Biometrics}, 57\penalty0 (4):\penalty0 1173--1184, 2001.

\bibitem[Drton(2018)]{drton2018algebraic}
M.~Drton.
\newblock Algebraic problems in structural equation modeling.
\newblock \emph{Advanced Studies in Pure Mathematics}, 77, 2018.

\bibitem[Drton et~al.(2005)Drton, Sturmfels, and Sullivant]{drton2005algebraic}
M.~Drton, B.~Sturmfels, and S.~Sullivant.
\newblock Algebraic factor analysis: Tetrads, pentads and beyond.
\newblock \emph{Probability Theory and Related Fields}, 138, 09 2005.

\bibitem[Drton et~al.(2009)Drton, Sturmfels, and Sullivant]{drton2009lectures}
M.~Drton, B.~Sturmfels, and S.~Sullivant.
\newblock \emph{Lectures on Algebraic Statistics}, volume~39 of \emph{Oberwolfach Seminars}.
\newblock Springer, 2009.

\bibitem[Fan and Li(2001)]{fan2001variable}
J.~Fan and R.~Li.
\newblock Variable selection via nonconcave penalized likelihood and its oracle properties.
\newblock \emph{Journal of the American statistical Association}, 96\penalty0 (456):\penalty0 1348--1360, 2001.

\bibitem[Friedman et~al.(2008)Friedman, Hastie, and Tibshirani]{friedman2008sparse}
J.~Friedman, T.~Hastie, and R.~Tibshirani.
\newblock Sparse inverse covariance estimation with the graphical {Lasso}.
\newblock \emph{Biostatistics}, 9:\penalty0 432--41, 2008.

\bibitem[Geiger et~al.(2001)Geiger, Heckerman, King, and Meek]{geiger2001stratified}
D.~Geiger, D.~Heckerman, H.~King, and C.~Meek.
\newblock Stratified exponential families: Graphical models and model selection.
\newblock \emph{The Annals of Statistics}, 29\penalty0 (2):\penalty0 505--529, 2001.

\bibitem[Ghassami et~al.(2020)Ghassami, Yang, Kiyavash, and Zhang]{ghassami2020characterizing}
A.~Ghassami, A.~Yang, N.~Kiyavash, and K.~Zhang.
\newblock Characterizing distribution equivalence and structure learning for cyclic and acyclic directed graphs.
\newblock In \emph{International Conference on Machine Learning}, 2020.

\bibitem[Glymour et~al.(2019)Glymour, Zhang, and Spirtes]{glymour2019review}
C.~Glymour, K.~Zhang, and P.~Spirtes.
\newblock Review of causal discovery methods based on graphical models.
\newblock \emph{Frontiers in Genetics}, 10, 2019.

\bibitem[Hyv{\"a}rinen and Morioka(2016)]{hyvarinen2016unsupervised}
A.~Hyv{\"a}rinen and H.~Morioka.
\newblock Unsupervised feature extraction by time-contrastive learning and nonlinear {{ICA}}.
\newblock \emph{Advances in Neural Information Processing Systems}, 2016.

\bibitem[Hyv{\"a}rinen and Oja(1997)]{hyvarinen1997fastica}
A.~Hyv{\"a}rinen and E.~Oja.
\newblock A fast fixed-point algorithm for independent component analysis.
\newblock \emph{Neural Computation}, 9\penalty0 (7):\penalty0 1483--1492, 1997.

\bibitem[Hyv{\"a}rinen and Oja(2000)]{hyvarinen2000independent}
A.~Hyv{\"a}rinen and E.~Oja.
\newblock Independent component analysis: algorithms and applications.
\newblock \emph{Neural networks}, 13\penalty0 (4-5):\penalty0 411--430, 2000.

\bibitem[Hyv{\"a}rinen et~al.(2001)Hyv{\"a}rinen, Karhunen, and Oja]{hyvarinen2001independent}
A.~Hyv{\"a}rinen, J.~Karhunen, and E.~Oja.
\newblock \emph{Independent Component Analysis}.
\newblock John Wiley \& Sons, Inc, 2001.

\bibitem[Hyv{\"a}rinen et~al.(2019)Hyv{\"a}rinen, Sasaki, and Turner]{hyvarinen2019nonlinear}
A.~Hyv{\"a}rinen, H.~Sasaki, and R.~Turner.
\newblock Nonlinear {ICA} using auxiliary variables and generalized contrastive learning.
\newblock In \emph{International Conference on Artificial Intelligence and Statistics}, 2019.

\bibitem[Jung et~al.(2001)Jung, Makeig, McKeown, Bell, Lee, and Sejnowski]{jung2001imaging}
T.-P. Jung, S.~Makeig, M.~McKeown, A.~Bell, T.-W. Lee, and T.~Sejnowski.
\newblock Imaging brain dynamics using independent component analysis.
\newblock \emph{Proceedings of the IEEE}, 89\penalty0 (7):\penalty0 1107--1122, 2001.

\bibitem[Lachapelle et~al.(2022)Lachapelle, L{\'o}pez, Sharma, Everett, Priol, Lacoste, and Lacoste-Julien]{lachapelle2021disentanglement}
S.~Lachapelle, P.~R. L{\'o}pez, Y.~Sharma, K.~Everett, R.~L. Priol, A.~Lacoste, and S.~Lacoste-Julien.
\newblock Disentanglement via mechanism sparsity regularization: A new principle for nonlinear {ICA}.
\newblock \emph{Conference on Causal Learning and Reasoning}, 2022.

\bibitem[Ledoit and Wolf(2004)]{ledoit2004well}
O.~Ledoit and M.~Wolf.
\newblock A well-conditioned estimator for large-dimensional covariance matrices.
\newblock \emph{Journal of Multivariate Analysis}, 88\penalty0 (2):\penalty0 365--411, 2004.

\bibitem[Lynch et~al.(1998)Lynch, Walsh, et~al.]{lynch1998genetics}
M.~Lynch, B.~Walsh, et~al.
\newblock \emph{Genetics and analysis of quantitative traits}, volume~1.
\newblock Sinauer Sunderland, MA, 1998.

\bibitem[Matsuoka et~al.(1995)Matsuoka, Ohoya, and Kawamoto]{matsuoka1995neural}
K.~Matsuoka, M.~Ohoya, and M.~Kawamoto.
\newblock A neural net for blind separation of nonstationary signals.
\newblock \emph{Neural Networks}, 8:\penalty0 411--419, 1995.

\bibitem[Meek(1995)]{meek1995casual}
C.~Meek.
\newblock Causal inference and causal explanation with background knowledge.
\newblock In \emph{Conference on Uncertainty in Artificial Intelligence}, 1995.

\bibitem[Moran et~al.(2022)Moran, Sridhar, Wang, and Blei]{moran2022identifiable}
G.~E. Moran, D.~Sridhar, Y.~Wang, and D.~Blei.
\newblock Identifiable deep generative models via sparse decoding.
\newblock \emph{Transactions on Machine Learning Research}, 2022.

\bibitem[Moulin et~al.(2022)Moulin, Gresselin, Dardaillon, and Thomas]{moulin2022river}
N.~Moulin, F.~Gresselin, B.~Dardaillon, and Z.~Thomas.
\newblock River temperature analysis with a new way of using independant component analysis.
\newblock \emph{Frontiers in Earth Science}, 10, 12 2022.

\bibitem[Nocedal and Wright(2006)]{nocedal2006numerical}
J.~Nocedal and S.~J. Wright.
\newblock \emph{Numerical optimization}.
\newblock Springer series in operations research and financial engineering. Springer, 2nd edition, 2006.

\bibitem[Ott(1988)]{ott1988noise}
H.~W. Ott.
\newblock \emph{Noise Reduction Techniques in Electronic Systems}.
\newblock New York : Wiley, 1988.

\bibitem[Pham and Cardoso(2001)]{pham2001blind}
D.-T. Pham and J.-F. Cardoso.
\newblock Blind separation of instantaneous mixtures of nonstationary sources.
\newblock \emph{IEEE Transactions on signal processing}, 49\penalty0 (9):\penalty0 1837--1848, 2001.

\bibitem[Schwarz(1978)]{schwarz1978estimating}
G.~Schwarz.
\newblock Estimating the dimension of a model.
\newblock \emph{The Annals of Statistics}, 6\penalty0 (2):\penalty0 461--464, 1978.

\bibitem[Schäfer and Strimmer(2005)]{schafer2005shrinkage}
J.~Schäfer and K.~Strimmer.
\newblock A shrinkage approach to large-scale covariance matrix estimation and implications for functional genomics.
\newblock \emph{Statistical applications in genetics and molecular biology}, 4, 02 2005.

\bibitem[Shimizu et~al.(2006)Shimizu, Hoyer, Hyv{\"a}rinen, and Kerminen]{shimizu2006lingam}
S.~Shimizu, P.~O. Hoyer, A.~Hyv{\"a}rinen, and A.~Kerminen.
\newblock A linear {non-Gaussian} acyclic model for causal discovery.
\newblock \emph{Journal of Machine Learning Research}, 7\penalty0 (Oct):\penalty0 2003--2030, 2006.

\bibitem[Spirtes et~al.(2001)Spirtes, Glymour, and Scheines]{spirtes2001causation}
P.~Spirtes, C.~Glymour, and R.~Scheines.
\newblock \emph{Causation, Prediction, and Search}.
\newblock MIT press, 2nd edition, 2001.

\bibitem[Strang(2006)]{strang2006linear}
G.~Strang.
\newblock \emph{Linear Algebra and Its Applications}.
\newblock Thomson, Brooks/Cole, Belmont, CA, 4th edition, 2006.

\bibitem[Strang(2016)]{strang2016introduction}
G.~Strang.
\newblock \emph{Introduction to Linear Algebra}.
\newblock Wellesley-Cambridge Press, 5th edition, 2016.

\bibitem[Sullivant(2008)]{sullivant2008algebraic}
S.~Sullivant.
\newblock Algebraic geometry of gaussian {Bayesian} networks.
\newblock \emph{Advances in Applied Mathematics}, 40\penalty0 (4):\penalty0 482--513, 2008.
\newblock ISSN 0196-8858.

\bibitem[Teschendorff et~al.(2007)Teschendorff, Journ\'ee, Absil, Sepulchre, and Caldas]{teschendorff2007elucidating}
A.~E. Teschendorff, M.~Journ\'ee, P.~A. Absil, R.~Sepulchre, and C.~Caldas.
\newblock Elucidating the altered transcriptional programs in breast cancer using independent component analysis.
\newblock \emph{PLoS Comput. Biol.}, 3\penalty0 (8):\penalty0 1539--1554, 2007.

\bibitem[Trefethen and Bau(1997)]{trefethen1997numerical}
L.~N. Trefethen and D.~Bau.
\newblock \emph{Numerical linear algebra}, volume~50.
\newblock SIAM, 1997.

\bibitem[Verma and Pearl(1990)]{verma1990equivalence}
T.~Verma and J.~Pearl.
\newblock Equivalence and synthesis of causal models.
\newblock In \emph{Conference on Uncertainty in Artificial Intelligence}, 1990.

\bibitem[{Virtanen} et~al.(2020){Virtanen}, {Gommers}, {Oliphant}, {Haberland}, {Reddy}, {Cournapeau}, {Burovski}, {Peterson}, {Weckesser}, {Bright}, {van der Walt}, {Brett}, {Wilson}, {Jarrod Millman}, {Mayorov}, {Nelson}, {Jones}, {Kern}, {Larson}, {Carey}, {Polat}, {Feng}, {Moore}, {Vand erPlas}, {Laxalde}, {Perktold}, {Cimrman}, {Henriksen}, {Quintero}, {Harris}, {Archibald}, {Ribeiro}, {Pedregosa}, {van Mulbregt}, and {Contributors}]{virtanen2020scipy}
P.~{Virtanen}, R.~{Gommers}, T.~E. {Oliphant}, M.~{Haberland}, T.~{Reddy}, D.~{Cournapeau}, E.~{Burovski}, P.~{Peterson}, W.~{Weckesser}, J.~{Bright}, S.~J. {van der Walt}, M.~{Brett}, J.~{Wilson}, K.~{Jarrod Millman}, N.~{Mayorov}, A.~R.~J. {Nelson}, E.~{Jones}, R.~{Kern}, E.~{Larson}, C.~{Carey}, {\.I}.~{Polat}, Y.~{Feng}, E.~W. {Moore}, J.~{Vand erPlas}, D.~{Laxalde}, J.~{Perktold}, R.~{Cimrman}, I.~{Henriksen}, E.~A. {Quintero}, C.~R. {Harris}, A.~M. {Archibald}, A.~H. {Ribeiro}, F.~{Pedregosa}, P.~{van Mulbregt}, and S.~.~. {Contributors}.
\newblock {SciPy} 1.0: Fundamental algorithms for scientific computing in {Python}.
\newblock \emph{Nature Methods}, 17:\penalty0 261--272, 2020.

\bibitem[Wei et~al.(2020)Wei, Gao, and Yu]{wei2020nofears}
D.~Wei, T.~Gao, and Y.~Yu.
\newblock {DAGs} with no fears: A closer look at continuous optimization for learning {Bayesian} networks.
\newblock In \emph{Advances in Neural Information Processing Systems}, 2020.

\bibitem[Xi and Bloem-Reddy(2023)]{xi2023indeterminacy}
Q.~Xi and B.~Bloem-Reddy.
\newblock Indeterminacy in generative models: Characterization and strong identifiability.
\newblock In \emph{International Conference on Artificial Intelligence and Statistics}, 2023.

\bibitem[Zhang(2010)]{zhang2010nearly}
C.-H. Zhang.
\newblock Nearly unbiased variable selection under minimax concave penalty.
\newblock \emph{The Annals of Statistics}, 38\penalty0 (2):\penalty0 894--942, 2010.

\bibitem[Zhang et~al.(2022)Zhang, Ng, Gong, Liu, Abbasnejad, Gong, Zhang, and Shi]{zhang2022truncated}
Z.~Zhang, I.~Ng, D.~Gong, Y.~Liu, E.~M. Abbasnejad, M.~Gong, K.~Zhang, and J.~Q. Shi.
\newblock Truncated matrix power iteration for differentiable {DAG} learning.
\newblock In \emph{Advances in Neural Information Processing Systems}, 2022.

\bibitem[Zheng et~al.(2018)Zheng, Aragam, Ravikumar, and Xing]{zheng2018notears}
X.~Zheng, B.~Aragam, P.~Ravikumar, and E.~P. Xing.
\newblock {DAGs} with {NO TEARS}: Continuous optimization for structure learning.
\newblock In \emph{Advances in Neural Information Processing Systems}, 2018.

\bibitem[Zheng et~al.(2022)Zheng, Ng, and Zhang]{zheng2022identifiability}
Y.~Zheng, I.~Ng, and K.~Zhang.
\newblock On the identifiability of nonlinear {ICA}: Sparsity and beyond.
\newblock In \emph{Advances in Neural Information Processing Systems}, 2022.

\end{thebibliography}

\newpage
\appendix
\addcontentsline{toc}{section}{Appendices} 
\part{Appendices} 
{\hypersetup{linkcolor=black}
\parttoc 
}
\section{Preliminaries on Support Rotation}
Before presenting the proofs of the theoretical results, we review the definition of support rotation and its potential effects on a support matrix, given by \citet{ghassami2020characterizing}. Note that the definition and remark below are quoted from \citet[Section~3.1]{ghassami2020characterizing}, with only minor modifications.

\begin{definition}[Support Rotation \citep{ghassami2020characterizing}]
    The support rotation, denoted as $R(i, j, k)$, is a transformation that modifies a support matrix $\boldsymbol{\xi}$ by applying a Givens rotation in the $(j, k)$ plane, setting the element $\xi_{i, j}$ to zero. The outcome of is the support matrix of $\mathbf{Q} G\left(j, k, \tan ^{-1}\left(-q_{i, j} / q_{i, k}\right)\right)$, where $\mathbf{Q} \in \arg \max _{\mathbf{Q}^{\prime}}\left|\operatorname{supp}\left(\mathbf{Q}^{\prime} G\left(j, k, \tan ^{-1}\left(-q_{i, j}^{\prime} / q_{i, k}^{\prime}\right)\right)\right)\right|$ such that the support matrix of $\mathbf{Q}^{\prime}$ is $\boldsymbol{\xi}$. Note that $G\left(j, k, \tan ^{-1}\left(-q_{i j}^{\prime} / q_{i, k}^{\prime}\right)\right)$ is the Givens rotation in the $(j, k)$ plane that makes the $q_{i, j}^{\prime}$ entry zero.
\end{definition}
\begin{remark}[\citet{ghassami2020characterizing}]\label{remark:givens_rotation}
The effects of applying a support rotation $R(i, j, k)$ can be categorized into the following four cases:
    \begin{itemize}[leftmargin=2em]
  \item \textbf{Reduction:} If $\xi_{i, j}=\xi_{i, k}=\times$ and $\xi_{l, j}=\xi_{l, k}$ for all $l \in[p] \backslash\{i\}$, then only $\xi_{i, j}$ becomes zero.
  \item \textbf{Reversible acute rotation:} If $\xi_{i, j}=\xi_{i, k}=\times$ and there exists a row $i^{\prime}$ such that the $j$-th and $k$-th columns differ only in that row, then $\xi_{i, j}$ becomes zero and both $\xi_{i^{\prime}, j}$ and $\xi_{i^{\prime}, k}$ become $\times$.
  \item \textbf{Irreversible acute rotation:} If $\xi_{i, j}=\xi_{i, k}=\times$ and the $j$-th and $k$-th columns differ in at least two rows, then $\xi_{i, j}$ becomes zero and all entries on the $j$-th and $k$-th columns become $\times$ on the rows on which they differed.
  \item \textbf{Column swap:} If $\xi_{i, j}=\times$ and $\xi_{i, k}=0$, then columns $j$ and $k$ are swapped.
\end{itemize}

\end{remark}

\section{Proofs of Useful Lemmas}
We provide several lemmas that will be useful for subsequent proofs.

\subsection{Proof of Lemma \ref{lemma:nonzero_diagonal_entries}}
The lemma below is a standard result in linear algebra \citep{strang2006linear,strang2016introduction}, and we include its proof here for completeness. It has also been applied in previous works on nonlinear ICA \citep{lachapelle2021disentanglement}.
\begin{lemma}[Nonzero Diagonal Entries]\label{lemma:nonzero_diagonal_entries}
For any non-singular matrix $\mathbf{A}$, there exists a permutation matrix $\mathbf{P}$ such that the diagonal entries of $\mathbf{A}\mathbf{P}$ are nonzero.
\end{lemma}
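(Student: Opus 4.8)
The plan is to prove the statement by using the well-known fact that every non-singular matrix has a nonzero determinant expressed via the Leibniz formula, and that a nonzero determinant forces at least one of the product terms indexed by a permutation to be nonzero. First I would recall the Leibniz expansion $\det(\mathbf{A}) = \sum_{\sigma} \operatorname{sgn}(\sigma) \prod_{i=1}^n a_{i,\sigma(i)}$, where the sum ranges over all permutations $\sigma$ of $[n]$. Since $\mathbf{A}$ is non-singular, $\det(\mathbf{A}) \neq 0$, so the sum is nonzero; hence there exists at least one permutation $\sigma$ for which the corresponding summand is nonzero, i.e., $\prod_{i=1}^n a_{i,\sigma(i)} \neq 0$, which means $a_{i,\sigma(i)} \neq 0$ for every $i \in [n]$.

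The second step is to translate the permutation $\sigma$ into a permutation matrix $\mathbf{P}$ and verify that $\mathbf{A}\mathbf{P}$ has nonzero diagonal. I would define $\mathbf{P}$ to be the permutation matrix whose action on columns sends column $\sigma(i)$ of $\mathbf{A}$ to column $i$ of $\mathbf{A}\mathbf{P}$; concretely, set $\mathbf{P}$ so that $(\mathbf{A}\mathbf{P})_{i,j} = a_{i,\sigma(j)}$. Then the $i$-th diagonal entry of $\mathbf{A}\mathbf{P}$ is $(\mathbf{A}\mathbf{P})_{i,i} = a_{i,\sigma(i)} \neq 0$ by the choice of $\sigma$, which is exactly the claim. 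A small bookkeeping check is to confirm that the matrix $\mathbf{P}$ just described is indeed a genuine permutation matrix (each row and column has exactly one $1$), which follows immediately since $\sigma$ is a bijection.

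I do not anticipate a genuine obstacle here, as this is a standard fact; the only point requiring minor care is the indexing convention for how $\mathbf{P}$ acts (left versus right multiplication, and whether it permutes via $\sigma$ or $\sigma^{-1}$), so I would state the convention explicitly and then the diagonal computation is a one-line verification.
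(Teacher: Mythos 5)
Your proof is correct and uses essentially the same idea as the paper: both arguments rest on the Leibniz expansion of $\det(\mathbf{A})$, the paper phrasing it as a contradiction (if every column permutation left a zero on the diagonal, every summand would vanish and $\det(\mathbf{A})=0$) while you argue directly that a nonzero determinant forces some permutation $\sigma$ with all $a_{i,\sigma(i)}\neq 0$. The direct version is, if anything, slightly cleaner, and your indexing of $\mathbf{P}$ is handled correctly.
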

\begin{proof}
We prove it by contradiction. Suppose that there always exists a zero diagonal entry in every column permutation.

We first represent the determinant of the matrix $\mathbf{A}$ as its Leibniz formula:
\[
    \det(\mathbf{A}) = \sum_{\sigma \in \mathcal{S}_{n}} \left(\operatorname{sgn}(\sigma) \prod_{i=1}^{n} a_{i, \sigma(i)}\right),
\]
where $\mathcal{S}_{n}$ is the set of $n$-permutations. Because for every permutation, there always exists a diagonal entry with value zero, we have 
\[
    \prod_{i=1}^{n} a_{i, \sigma(i)} = 0, \quad \forall \sigma \in \mathcal{S}_{n}.
\]
Thus, it follows that $\det(\mathbf{A}) = 0$. This indicates that the matrix $\mathbf{A}$ is singular, which leads to a contradiction.
\end{proof}

\subsection{Proof of Lemma \ref{lemma:nonsingular_solution}}
\begin{lemma}[Non-Singular Solution]\label{lemma:nonsingular_solution}
Let $\tilde{\mathbf{A}}$ be a non-singular matrix. Suppose  $\hat{\mathbf{A}}$ is a matrix such that $\hat{\mathbf{A}}\hat{\mathbf{A}}^\top=\tilde{\mathbf{A}}\tilde{\mathbf{A}}^\top$. Then, $\hat{\mathbf{A}}$ is non-singular.
\end{lemma}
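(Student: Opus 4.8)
The plan is to argue at the level of determinants, since non-singularity is exactly the statement that the determinant is nonzero. First I would take determinants on both sides of the hypothesis $\hat{\mathbf{A}}\hat{\mathbf{A}}^\top=\tilde{\mathbf{A}}\tilde{\mathbf{A}}^\top$. Using multiplicativity of the determinant and the fact that $\det(\mathbf{M}^\top)=\det(\mathbf{M})$, the left side becomes $\det(\hat{\mathbf{A}})^2$ and the right side becomes $\det(\tilde{\mathbf{A}})^2$, so $\det(\hat{\mathbf{A}})^2=\det(\tilde{\mathbf{A}})^2$.

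Next I would invoke the assumption that $\tilde{\mathbf{A}}$ is non-singular, which means $\det(\tilde{\mathbf{A}})\neq 0$, hence $\det(\tilde{\mathbf{A}})^2>0$. Combining with the previous identity gives $\det(\hat{\mathbf{A}})^2>0$, and therefore $\det(\hat{\mathbf{A}})\neq 0$. This is precisely the statement that $\hat{\mathbf{A}}$ is non-singular, completing the argument.

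There is essentially no obstacle here: the result follows from elementary properties of the determinant and does not require any of the structural machinery (support rotations, Assumptions~\ref{assumption:structural_variability}--\ref{assumption:faithfulness}) developed elsewhere in the paper. The only point to be mildly careful about is that we conclude $\det(\hat{\mathbf{A}})=\pm\det(\tilde{\mathbf{A}})$ rather than equality of the determinants themselves; but since we only need nonvanishing, the sign ambiguity is irrelevant. (If desired, one could alternatively phrase the proof via ranks: $\operatorname{rank}(\hat{\mathbf{A}})=\operatorname{rank}(\hat{\mathbf{A}}\hat{\mathbf{A}}^\top)=\operatorname{rank}(\tilde{\mathbf{A}}\tilde{\mathbf{A}}^\top)=\operatorname{rank}(\tilde{\mathbf{A}})=n$, using $\operatorname{rank}(\mathbf{M}\mathbf{M}^\top)=\operatorname{rank}(\mathbf{M})$, but the determinant argument is the most direct.)
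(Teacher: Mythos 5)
Your argument is correct and is essentially identical to the paper's proof: both take determinants of $\hat{\mathbf{A}}\hat{\mathbf{A}}^\top=\tilde{\mathbf{A}}\tilde{\mathbf{A}}^\top$, obtain $\det(\hat{\mathbf{A}})^2=\det(\tilde{\mathbf{A}})^2$, and conclude non-singularity from $\det(\tilde{\mathbf{A}})\neq 0$. Your parenthetical rank-based alternative is also valid but unnecessary here, since the matrices are square by the problem setup.
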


\begin{proof}
A matrix is non-singular (or invertible) if and only if its determinant is nonzero. Therefore, we need to show that $\text{det}(\hat{\mathbf{A}})\neq 0$.

From the given, we have $\hat{\mathbf{A}}\hat{\mathbf{A}}^\top=\tilde{\mathbf{A}}\tilde{\mathbf{A}}^\top$. Taking determinants on both sides, we get
\[
    \det(\hat{\mathbf{A}}\hat{\mathbf{A}}^\top) = \det(\tilde{\mathbf{A}}\tilde{\mathbf{A}}^\top).
\]
Since $\hat{\mathbf{A}}$ and $\tilde{\mathbf{A}}$ are square matrices, we can simplify both sides:
\[
    \det(\hat{\mathbf{A}})^2 = \det(\tilde{\mathbf{A}})^2.
\]
Since $\tilde{\mathbf{A}}$ is non-singular, we know that $\det(\tilde{\mathbf{A}})\neq 0$. Therefore, $\det(\hat{\mathbf{A}})^2 \neq 0$. This implies that $\det(\hat{\mathbf{A}})\neq 0$. Thus, $\hat{\mathbf{A}}$ is non-singular.
\end{proof}

\subsection{Proof of Lemma \ref{lemma:nonsingular_exists_relation}}
\begin{lemma}\label{lemma:nonzero_diagonal_entries_exists_relation}
Let $\mathbf{A}$ be a matrix with all diagonal entries being nonzero. Then, there exists matrix $\mathbf{B}\in\mathbb{R}_{\operatorname{off}}^{n\times n}$ and $\mathbf{\Omega}\in\operatorname{diag}(\mathbb{R}_{> 0}^n)$ such that $\mathbf{A}=(\mathbf{I}-\mathbf{B})\mathbf{\Omega}^{-\frac{1}{2}}\mathbf{D}$, where $\mathbf{D}$ is a diagonal matrix with diagonal entries being $\pm 1$.
\end{lemma}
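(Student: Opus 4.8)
The plan is to construct $\mathbf{B}$ and $\mathbf{\Omega}$ explicitly from the entries of $\mathbf{A}$. Since every diagonal entry $a_{i,i}$ is nonzero, the natural ansatz is to pull out the diagonal as a scaling. Write $\mathbf{A} = \mathbf{A}'\mathbf{D}$ where $\mathbf{D}$ is the diagonal sign matrix with $d_{i,i} = \operatorname{sgn}(a_{i,i}) = \pm 1$; then $\mathbf{A}'$ has strictly positive diagonal entries and $\mathbf{A} = \mathbf{A}'\mathbf{D}$ with $\supp(\mathbf{A}') = \supp(\mathbf{A})$. It therefore suffices to handle the case of positive diagonal, i.e. to show any matrix with positive diagonal can be written as $(\mathbf{I}-\mathbf{B})\mathbf{\Omega}^{-1/2}$ with $\mathbf{B}$ hollow (zero diagonal) and $\mathbf{\Omega}$ positive diagonal.

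The key step is to read off $\mathbf{\Omega}$ from the requirement that $(\mathbf{I}-\mathbf{B})\mathbf{\Omega}^{-1/2}$ has diagonal entries equal to those of $\mathbf{A}'$: since $\mathbf{B}$ has zero diagonal, $\mathbf{I}-\mathbf{B}$ has unit diagonal, so the $(i,i)$ entry of $(\mathbf{I}-\mathbf{B})\mathbf{\Omega}^{-1/2}$ is $\omega_{i,i}^{-1/2}$. Hence I set $\omega_{i,i} \coloneqq (a'_{i,i})^{-2}$, which is a well-defined positive number because $a'_{i,i} > 0$, giving $\mathbf{\Omega}\in\operatorname{diag}(\mathbb{R}_{>0}^n)$ and $\mathbf{\Omega}^{-1/2} = \operatorname{diag}(a'_{1,1},\dots,a'_{n,n})$. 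Then I define $\mathbf{B} \coloneqq \mathbf{I} - \mathbf{A}'\mathbf{\Omega}^{1/2}$, i.e. $b_{i,j} = \delta_{i,j} - a'_{i,j}/a'_{j,j}$; on the diagonal this is $1 - a'_{i,i}/a'_{i,i} = 0$, so $\mathbf{B}\in\mathbb{R}_{\operatorname{off}}^{n\times n}$, and by construction $(\mathbf{I}-\mathbf{B})\mathbf{\Omega}^{-1/2} = \mathbf{A}'\mathbf{\Omega}^{1/2}\mathbf{\Omega}^{-1/2} = \mathbf{A}'$. Combining with the first reduction, $\mathbf{A} = \mathbf{A}'\mathbf{D} = (\mathbf{I}-\mathbf{B})\mathbf{\Omega}^{-1/2}\mathbf{D}$, which is the claimed factorization.

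There is essentially no obstacle here — the statement is a routine normalization lemma and the whole proof is a direct construction plus verification that the pieces land in the right sets. The only points requiring a word of care are: that $\omega_{i,i}$ is genuinely positive (uses $a'_{i,i} > 0$, which in turn uses that $a_{i,i}\neq 0$ so its sign is well-defined and $a'_{i,i} = |a_{i,i}|$), and that $\mathbf{B}$ comes out hollow (an immediate consequence of dividing column $j$ of $\mathbf{A}'$ by $a'_{j,j}$, which normalizes the diagonal to $1$ before subtracting from $\mathbf{I}$). No claim about the support of $\mathbf{B}$, nonsingularity, or triangular structure is needed for this lemma, so nothing further is required.
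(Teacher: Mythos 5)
Your construction is correct and is essentially identical to the paper's: the paper likewise defines $\mathbf{D}$ by the signs of the diagonal entries, sets $\mathbf{\Omega}_{i,i}=1/(\mathbf{A}\mathbf{D})_{i,i}^2$, and takes $\mathbf{B}=\mathbf{I}-\mathbf{A}\mathbf{D}\mathbf{\Omega}^{1/2}$, which is exactly your $\mathbf{B}=\mathbf{I}-\mathbf{A}'\mathbf{\Omega}^{1/2}$ with $\mathbf{A}'=\mathbf{A}\mathbf{D}$. No gaps; nothing further is needed.
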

\begin{proof}
Since the diagonal entries of $\mathbf{A}$ are nonzero, we can construct a diagonal matrix $\mathbf{D}$ such that the diagonal entries of $\mathbf{A}\mathbf{D}$ are positive, by defining $d_{i,i}=1$ if $a_{i,i}>0$ and $d_{i,i}=-1$ if $a_{i,i}<0$. Let $\mathbf{\Omega}$ be a diagonal matrix of the same size as matrix $\mathbf{A}$, where $(\mathbf{\Omega})_{i,i}=1/(\mathbf{A}\mathbf{D})_{i,i}^2$, Also, let $\mathbf{B}\coloneqq\mathbf{I}-\mathbf{A}\mathbf{D}\mathbf{\Omega}^{\frac{1}{2}}$. Since the diagonal entries of matrix $\mathbf{A}\mathbf{D}\mathbf{\Omega}^{\frac{1}{2}}$ are ones, the diagonal entries of matrix $\mathbf{B}$ are zeros. Therefore, we have $\mathbf{B}\in\mathbb{R}_{\operatorname{off}}^{n\times n}$, $\mathbf{\Omega}\in\operatorname{diag}(\mathbb{R}_{> 0}^n)$, and $\mathbf{A}=(\mathbf{I}-\mathbf{B})\mathbf{\Omega}^{-\frac{1}{2}}\mathbf{D}$, where $\mathbf{D}$ is a diagonal matrix with entries being $\pm 1$.
\end{proof}

\begin{lemma}\label{lemma:nonsingular_exists_relation}
Let $\mathbf{A}$ be a non-singular matrix. Then, there exists matrix $\mathbf{B}\in\mathbb{R}_{\operatorname{off}}^{n\times n}$ and $\mathbf{\Omega}\in\operatorname{diag}(\mathbb{R}_{> 0}^n)$ such that $\mathbf{A}\sim(\mathbf{I}-\mathbf{B})\mathbf{\Omega}^{-\frac{1}{2}}$.
\end{lemma}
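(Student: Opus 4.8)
The plan is to reduce the claim to Lemma~\ref{lemma:nonzero_diagonal_entries_exists_relation} by first fixing the diagonal via a column permutation. Since $\mathbf{A}$ is non-singular, Lemma~\ref{lemma:nonzero_diagonal_entries} yields a permutation matrix $\mathbf{P}$ such that all diagonal entries of $\mathbf{A}\mathbf{P}$ are nonzero. (Note that $\mathbf{A}\mathbf{P}$ is again non-singular, being a product of non-singular matrices, although Lemma~\ref{lemma:nonzero_diagonal_entries_exists_relation} only needs the nonzero-diagonal property.)

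Next I would apply Lemma~\ref{lemma:nonzero_diagonal_entries_exists_relation} to the matrix $\mathbf{A}\mathbf{P}$. This produces matrices $\mathbf{B}\in\mathbb{R}_{\operatorname{off}}^{n\times n}$, $\mathbf{\Omega}\in\operatorname{diag}(\mathbb{R}_{>0}^n)$, and a diagonal matrix $\mathbf{D}$ with entries in $\{+1,-1\}$ such that
\[
\mathbf{A}\mathbf{P}=(\mathbf{I}-\mathbf{B})\mathbf{\Omega}^{-\frac{1}{2}}\mathbf{D}.
\]
Right-multiplying both sides by $\mathbf{P}^{-1}=\mathbf{P}^\top$ gives $\mathbf{A}=(\mathbf{I}-\mathbf{B})\mathbf{\Omega}^{-\frac{1}{2}}\mathbf{D}\mathbf{P}^\top$.

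It remains to observe that $\mathbf{D}\mathbf{P}^\top$ is a signed permutation matrix: $\mathbf{P}^\top$ has exactly one nonzero entry, equal to $1$, in each row and column, and left-multiplying by the diagonal sign matrix $\mathbf{D}$ only rescales its rows by $\pm1$, so $\mathbf{D}\mathbf{P}^\top$ still has exactly one nonzero entry, equal to $\pm1$, in each row and column. Consequently, right-multiplying $(\mathbf{I}-\mathbf{B})\mathbf{\Omega}^{-\frac{1}{2}}$ by $\mathbf{D}\mathbf{P}^\top$ permutes its columns and flips the signs of some of them, which is precisely the relation $\mathbf{A}\sim(\mathbf{I}-\mathbf{B})\mathbf{\Omega}^{-\frac{1}{2}}$ by the definition in Section~\ref{sec:problem_setting}. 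This completes the proof.

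There is no substantial obstacle here; the proof is essentially the composition of Lemmas~\ref{lemma:nonzero_diagonal_entries} and \ref{lemma:nonzero_diagonal_entries_exists_relation}. The only point requiring care is the bookkeeping of the two correction factors $\mathbf{D}$ and $\mathbf{P}^\top$: one must verify that they combine into a single factor acting on the \emph{right} of $(\mathbf{I}-\mathbf{B})\mathbf{\Omega}^{-1/2}$ (so that it genuinely corresponds to column permutations and column sign changes, matching the definition of $\sim$), rather than mixing row and column operations.
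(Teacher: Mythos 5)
Your proposal is correct and follows exactly the same route as the paper's proof: apply Lemma~\ref{lemma:nonzero_diagonal_entries} to obtain a column permutation making the diagonal nonzero, then apply Lemma~\ref{lemma:nonzero_diagonal_entries_exists_relation} to $\mathbf{A}\mathbf{P}$ and absorb $\mathbf{D}\mathbf{P}^\top$ into the signed-column-permutation equivalence. The only difference is that you spell out the final bookkeeping step (that $\mathbf{D}\mathbf{P}^\top$ is a signed permutation matrix acting on columns), which the paper dismisses with ``clearly''; your version is, if anything, slightly more careful.
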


\begin{proof}
Since matrix $\mathbf{A}$ is non-singular, by Lemma \ref{lemma:nonzero_diagonal_entries}, there exists a permutation matrix $\mathbf{P}$ such that the diagonal entries of $\mathbf{A}\mathbf{P}$ are nonzero. By Lemma \ref{lemma:nonzero_diagonal_entries_exists_relation}, there exists matrix $\mathbf{B}\in\mathbb{R}_{\operatorname{off}}^{n\times n}$ and $\mathbf{\Omega}\in\operatorname{diag}(\mathbb{R}_{> 0}^n)$ such that $\mathbf{A}\mathbf{P}=(\mathbf{I}-\mathbf{B})\mathbf{\Omega}^{-\frac{1}{2}}\mathbf{D}$, where $\mathbf{D}$ is a diagonal matrix with diagonal entries being $\pm 1$. Clearly, we have $\mathbf{A}\sim(\mathbf{I}-\mathbf{B})\mathbf{\Omega}^{-\frac{1}{2}}$.
\end{proof}

\subsection{Proof of Lemma \ref{lemma:A_B_zero_norm}}
\begin{lemma}\label{lemma:A_B_zero_norm}
Suppose $\mathbf{A}\sim(\mathbf{I}-\mathbf{B})\mathbf{\Omega}^{-\frac{1}{2}}$ for matrices $\mathbf{A}\in\mathbb{R}^{n\times n}$, $\mathbf{B}\in\mathbb{R}_{\operatorname{off}}^{n\times n}$, and $\mathbf{\Omega}\in\operatorname{diag}(\mathbb{R}_{> 0}^n)$. Then, we have $\|\mathbf{A}\|_0=\|\mathbf{B}\|_0+n$.
\end{lemma}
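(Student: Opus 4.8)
The plan is to reduce the claim to the special case where $\mathbf{A}$ itself (not merely some signed permutation of it) equals $(\mathbf{I}-\mathbf{B})\mathbf{\Omega}^{-\frac{1}{2}}$, since both $\|\cdot\|_0$ and the structural relationship are invariant under the operations allowed by $\sim$. Concretely, suppose $\mathbf{A}=(\mathbf{I}-\mathbf{B})\mathbf{\Omega}^{-\frac{1}{2}}\mathbf{P}\mathbf{D}$ for a permutation matrix $\mathbf{P}$ and a sign matrix $\mathbf{D}$ (the definition of $\sim$). Right-multiplication by $\mathbf{P}\mathbf{D}$ only permutes columns and flips signs, so $\|\mathbf{A}\|_0 = \|(\mathbf{I}-\mathbf{B})\mathbf{\Omega}^{-\frac{1}{2}}\|_0$; hence it suffices to prove $\|(\mathbf{I}-\mathbf{B})\mathbf{\Omega}^{-\frac{1}{2}}\|_0 = \|\mathbf{B}\|_0 + n$ for arbitrary $\mathbf{B}\in\mathbb{R}_{\operatorname{off}}^{n\times n}$ and $\mathbf{\Omega}\in\operatorname{diag}(\mathbb{R}_{>0}^n)$.

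For that reduced statement, first observe that $\mathbf{\Omega}^{-\frac{1}{2}}$ is a diagonal matrix with strictly positive (in particular nonzero) diagonal entries, so right-multiplying $(\mathbf{I}-\mathbf{B})$ by it rescales each column by a nonzero scalar and therefore does not change the support: $\supp\big((\mathbf{I}-\mathbf{B})\mathbf{\Omega}^{-\frac{1}{2}}\big) = \supp(\mathbf{I}-\mathbf{B})$, and so $\|(\mathbf{I}-\mathbf{B})\mathbf{\Omega}^{-\frac{1}{2}}\|_0 = \|\mathbf{I}-\mathbf{B}\|_0$. It then remains to count the nonzero entries of $\mathbf{I}-\mathbf{B}$. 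Since $\mathbf{B}\in\mathbb{R}_{\operatorname{off}}^{n\times n}$ has all diagonal entries zero, the diagonal of $\mathbf{I}-\mathbf{B}$ consists of $n$ ones, all nonzero, while the off-diagonal entries of $\mathbf{I}-\mathbf{B}$ coincide with $-b_{i,j}$, which is nonzero exactly when $b_{i,j}\neq 0$. Thus the $n$ diagonal positions contribute exactly $n$ to $\|\mathbf{I}-\mathbf{B}\|_0$, the off-diagonal positions contribute $\|\mathbf{B}\|_0$ (note $\mathbf{B}$ has no nonzero entries on the diagonal, so $\|\mathbf{B}\|_0$ counts only off-diagonal nonzeros), and these two index sets are disjoint, giving $\|\mathbf{I}-\mathbf{B}\|_0 = \|\mathbf{B}\|_0 + n$. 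Combining the two equalities yields $\|\mathbf{A}\|_0 = \|\mathbf{B}\|_0 + n$.

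This argument is essentially bookkeeping, so there is no serious obstacle; the only point requiring a moment of care is making explicit that column rescaling by the strictly positive diagonal of $\mathbf{\Omega}^{-\frac{1}{2}}$ preserves support and that the sign/permutation factors in the definition of $\sim$ likewise preserve $\|\cdot\|_0$, so that the general case genuinely reduces to the identity $\|\mathbf{I}-\mathbf{B}\|_0 = \|\mathbf{B}\|_0 + n$. One should also state clearly that $\mathbf{B}$ being in $\mathbb{R}_{\operatorname{off}}^{n\times n}$ is what guarantees the diagonal and off-diagonal contributions do not overlap.
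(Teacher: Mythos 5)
Your proof is correct and follows essentially the same route as the paper's: reduce via the signed column permutation to $(\mathbf{I}-\mathbf{B})\mathbf{\Omega}^{-\frac{1}{2}}$, note that right-multiplication by the positive diagonal matrix preserves the support, and count $\|\mathbf{I}-\mathbf{B}\|_0=\|\mathbf{B}\|_0+n$ using the zero diagonal of $\mathbf{B}$. The only difference is the order in which the three observations are presented, which is immaterial.
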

\begin{proof}
First notice that the diagonal entries of $\mathbf{B}$ are zeros, which implies
\[
\|\mathbf{I}-\mathbf{B}\|_0=\|\mathbf{I}\|_0+\|\mathbf{B}\|_0=\|\mathbf{B}\|_0+n.
\]
Since $\mathbf{\Omega}^{-\frac{1}{2}}$ is a diagonal matrix, right multiplication of $\mathbf{\Omega}^{-\frac{1}{2}}$ amounts to rescaling the columns of $\mathbf{I}-\mathbf{B}$, and does not affect its support. Therefore, we have
\[
\|(\mathbf{I}-\mathbf{B})\mathbf{\Omega}^{-\frac{1}{2}}\|_0=\|\mathbf{I}-\mathbf{B}\|_0=\|\mathbf{B}\|_0+n.
\]
Since $\mathbf{A}\sim(\mathbf{I}-\mathbf{B})\mathbf{\Omega}^{-\frac{1}{2}}$, matrices $\mathbf{A}$ and $(\mathbf{I}-\mathbf{B})\mathbf{\Omega}^{-\frac{1}{2}}$ differ only in signed column permutations. Therefore, their number of nonzero entries are the same, i.e., 
\[
\|\mathbf{A}\|_0=\|\mathbf{B}\|_0+n.
\]
\end{proof}

\subsection{Proof of Lemma \ref{lemma:lower_triangular_implies_nonsingular}}
We first provide the following lemma that will be used to prove Lemma \ref{lemma:lower_triangular_implies_nonsingular}.
\begin{lemma}\label{lemma:lower_triangular_implies_equal_permutation}
Let $\mathbf{P}_1$ and $\mathbf{P}_2$ be two permutation matrices. Then, $\mathbf{P}_1^\top \mathbf{P}_2$ is lower triangular if and only if $\mathbf{P}_1=\mathbf{P}_2$.
\end{lemma}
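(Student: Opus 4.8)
The plan is to prove both directions directly, leaning on the defining property that a permutation matrix has exactly one nonzero entry (equal to $1$) in each row and column, together with the fact that a product of permutation matrices is again a permutation matrix.

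For the easy direction ($\Leftarrow$): if $\mathbf{P}_1=\mathbf{P}_2$, then $\mathbf{P}_1^\top \mathbf{P}_2 = \mathbf{P}_1^\top \mathbf{P}_1 = \mathbf{I}$, which is lower triangular. This is immediate from orthogonality of permutation matrices.

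For the main direction ($\Rightarrow$): suppose $\mathbf{P} \coloneqq \mathbf{P}_1^\top \mathbf{P}_2$ is lower triangular. Since $\mathbf{P}_1^\top$ and $\mathbf{P}_2$ are permutation matrices, so is $\mathbf{P}$; let $\sigma$ be the permutation it represents, so that $P_{i,j}=1$ iff $j=\sigma(i)$. Being lower triangular means $P_{i,j}=0$ whenever $j>i$, hence $\sigma(i)\le i$ for all $i$. I would then argue by induction on $i$ (or by a counting/pigeonhole argument) that $\sigma$ must be the identity: $\sigma(1)\le 1$ forces $\sigma(1)=1$; given $\sigma(j)=j$ for all $j<i$, injectivity of $\sigma$ together with $\sigma(i)\le i$ forces $\sigma(i)=i$. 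Therefore $\mathbf{P}=\mathbf{I}$, i.e., $\mathbf{P}_1^\top \mathbf{P}_2=\mathbf{I}$, and left-multiplying by $\mathbf{P}_1$ gives $\mathbf{P}_2=\mathbf{P}_1$.

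There is no real obstacle here — the only thing to be careful about is making the "lower triangular permutation matrix is the identity" step rigorous rather than hand-waving it; the clean inductive argument above handles it. This lemma will presumably be used to show that the two separate row/column permutations in Assumption \ref{assumption:lower_triangular} can be taken to be equal (a simultaneous permutation) when needed, tying Assumption \ref{assumption:lower_triangular} to the function $g(\mathbf{A})$ via Lemma \ref{lemma:reformulated_solution}.
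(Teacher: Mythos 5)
Your proposal is correct and follows essentially the same route as the paper: the forward direction reduces to the observation that the only lower triangular permutation matrix is the identity, and then $\mathbf{P}_1^\top\mathbf{P}_2=\mathbf{I}$ gives $\mathbf{P}_1=\mathbf{P}_2$. The only difference is that you spell out the inductive argument for that key observation, which the paper simply asserts.
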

\begin{proof}
The ``if part'' is clear because permutation matrices are orthogonal matrices, and thus $\mathbf{P}_1^\top\mathbf{P}_1=\mathbf{I}$ is lower triangular. It remains to prove the ``only if part''. Since $\mathbf{P}_1^\top \mathbf{P}_2$ is also a permutation matrix, this matrix being lower triangular implies that it is an identity matrix, because identity matrix is the only permutation matrix that is lower triangular. We then have $\mathbf{P}_1^\top \mathbf{P}_2=\mathbf{I}$, which implies
\[
\mathbf{P}_1=\mathbf{P}_2^{-\top}=\mathbf{P}_2.
\]
\end{proof}

We now provide the proof of Lemma \ref{lemma:lower_triangular_implies_nonsingular}.
\begin{lemma}\label{lemma:lower_triangular_implies_nonsingular}
Given matrix $\mathbf{B}\in\mathbb{R}_{\operatorname{off}}^{n\times n}$, if $\mathbf{I}-\mathbf{B}$ satisfies Assumption \ref{assumption:lower_triangular}, then $\mathbf{I}-\mathbf{B}$ is non-singular.
\end{lemma}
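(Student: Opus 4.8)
The plan is to show that if $\mathbf{I}-\mathbf{B}$ can be brought to lower triangular form by \emph{separate} row and column permutations, then in fact those two permutations must coincide, after which nonsingularity is immediate. First I would invoke Assumption~\ref{assumption:lower_triangular}: there exist permutation matrices $\mathbf{P}_1,\mathbf{P}_2$ with $\mathbf{P}_1^\top(\mathbf{I}-\mathbf{B})\mathbf{P}_2$ lower triangular. Expanding, this matrix equals $\mathbf{P}_1^\top\mathbf{P}_2 - \mathbf{P}_1^\top\mathbf{B}\mathbf{P}_2$. The key structural fact is that $\mathbf{B}\in\mathbb{R}_{\operatorname{off}}^{n\times n}$ has zero diagonal, so $\mathbf{P}_1^\top\mathbf{B}\mathbf{P}_2$ need not have zero diagonal in general — but I can instead reason about the diagonal of the whole lower triangular product.

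The crux: a lower triangular matrix $\mathbf{L}$ is nonsingular if and only if all its diagonal entries are nonzero, and the diagonal of $\mathbf{L}=\mathbf{P}_1^\top(\mathbf{I}-\mathbf{B})\mathbf{P}_2$ should be forced nonzero by the off-diagonal structure of $\mathbf{B}$. Here is the cleaner route I would actually take. Consider the permutation matrix $\mathbf{P}_1^\top\mathbf{P}_2$ and examine where its $\times$ entries sit relative to the lower triangular pattern. Since $\mathbf{P}_1^\top\mathbf{B}\mathbf{P}_2$ has support $\{(\pi_1^{-1}(i),\pi_2^{-1}(j)) : b_{i,j}\neq 0\}$ where $i\neq j$ always, and $\mathbf{L}$ is lower triangular, every nonzero entry of $\mathbf{P}_1^\top\mathbf{P}_2$ lies weakly below the diagonal; but $\mathbf{P}_1^\top\mathbf{P}_2$ is a permutation matrix with exactly one nonzero per row and column, and the only permutation matrix supported on the lower triangle (including diagonal) is... not necessarily the identity. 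So I need the diagonal-of-$\mathbf{L}$ argument: the $(k,k)$ entry of $\mathbf{L}$ is $(\mathbf{P}_1^\top\mathbf{P}_2)_{k,k} - (\mathbf{P}_1^\top\mathbf{B}\mathbf{P}_2)_{k,k}$. If $\mathbf{P}_1=\mathbf{P}_2$ — which I would try to force first — then $(\mathbf{P}_1^\top\mathbf{B}\mathbf{P}_1)_{k,k}=b_{\pi_1(k),\pi_1(k)}=0$ and $(\mathbf{P}_1^\top\mathbf{P}_1)_{k,k}=1$, so every diagonal entry of $\mathbf{L}$ is $1$, hence $\det(\mathbf{L})=\pm 1\neq 0$ and $\mathbf{I}-\mathbf{B}$ is nonsingular.

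So the real work is establishing $\mathbf{P}_1=\mathbf{P}_2$. For this I would argue as follows: write $\mathbf{L}=\mathbf{P}_1^\top\mathbf{P}_2 - \mathbf{M}$ where $\mathbf{M}=\mathbf{P}_1^\top\mathbf{B}\mathbf{P}_2$. Look at the strict upper triangle of $\mathbf{L}$, which is zero: thus every entry of $\mathbf{P}_1^\top\mathbf{P}_2$ strictly above the diagonal must be cancelled by $\mathbf{M}$ there, but $\mathbf{P}_1^\top\mathbf{P}_2$ is a $0/1$ matrix while entries of $\mathbf{M}$ are generic reals; more robustly, I should work at the level of supports and observe that, since $\mathbf{P}_1^\top\mathbf{P}_2$ has a single $1$ per row whereas $\mathbf{M}$'s support avoids the diagonal of $\mathbf{P}_1^\top\mathbf{B}\mathbf{P}_2$'s original index set, a counting/position argument on the permutation $\pi := \pi_1^{-1}\circ\pi_2$ shows $\pi$ must fix the lower triangular order, forcing $\pi=\mathrm{id}$, i.e.\ $\mathbf{P}_1=\mathbf{P}_2$. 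Concretely: if $\mathbf{P}_1^\top\mathbf{P}_2$ had a nonzero entry $(k,\ell)$ with $\ell>k$, lower-triangularity of $\mathbf{L}$ would demand $\mathbf{M}_{k,\ell}\neq 0$ to cancel it, i.e.\ $(k,\ell)$ lies in $\supp(\mathbf{M})$; but then tracking this back, the diagonal entry $\mathbf{L}_{k,k}$ is forced to be $-\mathbf{M}_{k,k}$ with $(k,k)$ possibly also in $\supp(\mathbf{M})$, and one chases the permutation cycle to derive a contradiction with $\mathbf{L}$ being lower triangular unless the permutation is the identity. Invoking Lemma~\ref{lemma:lower_triangular_implies_equal_permutation} is the clean way to package the final step once the cancellation structure is pinned down: it states precisely that $\mathbf{P}_1^\top\mathbf{P}_2$ lower triangular forces $\mathbf{P}_1=\mathbf{P}_2$, so I want to reduce to the claim that $\mathbf{P}_1^\top\mathbf{P}_2$ is itself lower triangular, which follows because $\mathbf{L} + \mathbf{M} = \mathbf{P}_1^\top\mathbf{P}_2$ and both $\mathbf{L}$ (by hypothesis) and the support pattern of $\mathbf{M}$ cooperate — here I expect the main obstacle: carefully arguing that adding back $\mathbf{M}$ cannot move any mass of $\mathbf{P}_1^\top\mathbf{P}_2$ strictly above the diagonal, which requires handling the interaction between $\mathbf{B}$'s zero diagonal and the two independent permutations rather than hand-waving it.
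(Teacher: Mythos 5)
Your proposal is correct and follows essentially the same route as the paper: use the zero diagonal of $\mathbf{B}$ to see that the supports of $\mathbf{P}_1^\top\mathbf{P}_2$ and $\mathbf{P}_1^\top\mathbf{B}\mathbf{P}_2$ are disjoint, conclude that $\mathbf{P}_1^\top\mathbf{P}_2$ is lower triangular, invoke Lemma~\ref{lemma:lower_triangular_implies_equal_permutation} to get $\mathbf{P}_1=\mathbf{P}_2$, and read off a unit diagonal and hence nonzero determinant. The only slip is your aside that a permutation matrix supported on the lower triangle is ``not necessarily the identity''---it is necessarily the identity, which is exactly the content of Lemma~\ref{lemma:lower_triangular_implies_equal_permutation}, so the worry about ``chasing permutation cycles'' is unnecessary and the disjoint-support observation you already state closes the argument.
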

\begin{proof}
Since $\mathbf{I}-\mathbf{B}$ satisfies Assumption \ref{assumption:lower_triangular}, there exist permutation matrices $\mathbf{P}_1$ and $\mathbf{P}_2$ such that
\[
\mathbf{P}_1^\top(\mathbf{I}-\mathbf{B})\mathbf{P}_2=\mathbf{P}_1^\top\mathbf{P}_2-\mathbf{P}_1^\top\mathbf{B}\mathbf{P}_2
\]
is lower triangular. For all $i,j\in[n]$ such that $(\mathbf{P}_1^\top\mathbf{P}_2)_{i,j}\neq 0$, we have $(\mathbf{P}_1^\top\mathbf{B}\mathbf{P}_2)_{i,j}=0$, which implies that $\mathbf{P}_1^\top\mathbf{P}_2$ must be lower triangular, because otherwise, $\mathbf{P}_1^\top\mathbf{P}_2-\mathbf{P}_1^\top\mathbf{B}\mathbf{P}_2$ cannot be lower triangular. By Lemma \ref{lemma:lower_triangular_implies_equal_permutation}, we then have $\mathbf{P}_1=\mathbf{P}_2$. This indicates that $\mathbf{I}-\mathbf{P}_1^\top\mathbf{B}\mathbf{P}_1$ is lower triangular, with diagonal entries equal to one. Therefore, we have
\[
\det(\mathbf{I}-\mathbf{P}_1^\top\mathbf{B}\mathbf{P}_1)=1,
\]
which implies
\[
\det(\mathbf{I}-\mathbf{B})=\det(\mathbf{P}_1(\mathbf{I}-\mathbf{B})\mathbf{P}_1^\top)=\det(\mathbf{I}-\mathbf{P}_1^\top\mathbf{B}\mathbf{P}_1)=1.
\]
Since the determinant of $\mathbf{I}-\mathbf{B}$ is nonzero, it is non-singular.
\end{proof}

\section{Proof of Theorem \ref{thm:identifiability_ica}}\label{sec:proof_thm_identifiability_ica}
We first describe the notion of covariance equivalence that is needed for the proof of Theorem \ref{thm:identifiability_ica}. Specifically, if two support matrices $\boldsymbol{\xi}_1$ and $\boldsymbol{\xi}_2$ entail the same set of covariance matrices, i.e., $\mathbf{\Sigma}(\boldsymbol{\xi}_1)=\mathbf{\Sigma}(\boldsymbol{\xi}_2)$, they are said to be \emph{covariance equivalent}. This means that for any combination of parameter values in $\boldsymbol{\xi}_1$, there exists a corresponding set of parameter values in $\boldsymbol{\xi}_2$ that leads to the same covariance matrix, and vice versa. Furthermore, two support matrices are covariance equivalent if and only if they lead to the same set of semialgebraic constraints on the covariance matrices.

We now give the proof of Theorem \ref{thm:identifiability_ica}. The proof makes use of Proposition \ref{proposition:covariance_matrix_dimension}, Proposition \ref{proposition:column_permutations}, Proposition~\ref{proposition:same_hard_constraints_imply_distribution_equivalent}, and Corollary \ref{cor:permuted_support_implies_permuted_parameter}, which are provided in Appendices \ref{sec:proof_proposition_covariance_matrix_dimension}, \ref{sec:proof_proposition:column_permutations}, 
\ref{sec:proof_proposition_same_hard_constraints_imply_distribution_equivalent}, and \ref{sec:proof_cor_permuted_support_implies_permuted_parameter}, respectively. Note that the proof is partly inspired by that of \citet[Theorem 3]{ghassami2020characterizing}.

\ThmIdentifiabilityICA*

\begin{proof}
Let $\hat{\mathbf{A}}$ be a solution of Problem \eqref{eq:identifiability_ica}. This implies $\hat{\mathbf{A}}\hat{\mathbf{A}}^\top=\tilde{\mathbf{\Sigma}}=\tilde{\mathbf{A}}\tilde{\mathbf{A}}^\top$ and that $\hat{\mathbf{A}}$ satisfies Assumption \ref{assumption:lower_triangular}. Since $\tilde{\mathbf{A}}$ is non-singular, by Lemma \ref{lemma:nonsingular_solution}, matrix $\hat{\mathbf{A}}$ is non-singular.

Since $\hat{\mathbf{A}}$ can entail the covariance matrix $\tilde{\mathbf{\Sigma}}$, we have $\tilde{\mathbf{\Sigma}}\in\mathbf{\Sigma}(\boldsymbol{\xi}_{\hat{\mathbf{A}}})$, which indicates that $\tilde{\mathbf{\Sigma}}$ contains all semialgebraic constraints of $\boldsymbol{\xi}_{\hat{\mathbf{A}}}$. Under Assumption \ref{assumption:faithfulness}, we have 
\begin{equation}\label{eq:proof_subset_hard_constraints}
H(\boldsymbol{\xi}_{\hat{\mathbf{A}}})\subseteq H(\boldsymbol{\xi}_{\tilde{\mathbf{A}}}).
\end{equation}
The sparsity term in the objective function implies 
\begin{equation}\label{eq:proof_sparser}
\|\hat{\mathbf{A}}\|_0\leq \|\tilde{\mathbf{A}}\|_0,
\end{equation}
because otherwise $\hat{\mathbf{A}}$ will never be a solution of Problem \eqref{eq:identifiability_ica}.

We now show by contradiction that $H(\boldsymbol{\xi}_{\hat{\mathbf{A}}})\not\subset H(\boldsymbol{\xi}_{\tilde{\mathbf{A}}})$. Suppose $H(\boldsymbol{\xi}_{\hat{\mathbf{A}}})\subset H(\boldsymbol{\xi}_{\tilde{\mathbf{A}}})$, which indicates $\dim(\mathbf{\Sigma}(\boldsymbol{\xi}_{\hat{\mathbf{A}}}))>\dim(\mathbf{\Sigma}(\boldsymbol{\xi}_{\tilde{\mathbf{A}}}))$. Since the support matrices $\boldsymbol{\xi}_{\hat{\mathbf{A}}}$ and $\boldsymbol{\xi}_{\tilde{\mathbf{A}}}$ satisfy Assumption \ref{assumption:lower_triangular}, Proposition \ref{proposition:covariance_matrix_dimension} implies $\|\boldsymbol{\xi}_{\hat{\mathbf{A}}}\|_0>\|\boldsymbol{\xi}_{\tilde{\mathbf{A}}}\|_0$, which is contradictory with Inequality \eqref{eq:proof_sparser}. This implies
\begin{equation}\label{eq:proof_not_strict_subset_hard_constraints}
H(\boldsymbol{\xi}_{\hat{\mathbf{A}}})\not\subset H(\boldsymbol{\xi}_{\tilde{\mathbf{A}}}).
\end{equation}

By Eqs. \eqref{eq:proof_subset_hard_constraints} and \eqref{eq:proof_not_strict_subset_hard_constraints}, we obtain $H(\boldsymbol{\xi}_{\hat{\mathbf{A}}})= H(\boldsymbol{\xi}_{\tilde{\mathbf{A}}})$, which, by Proposition \ref{proposition:same_hard_constraints_imply_distribution_equivalent}, implies that the support matrices $\boldsymbol{\xi}_{\hat{\mathbf{A}}}$ and $\boldsymbol{\xi}_{\tilde{\mathbf{A}}}$ are covariance equivalent, because they satisfy Assumption \ref{assumption:lower_triangular}. By Proposition \ref{proposition:column_permutations}, we conclude that the columns of $\boldsymbol{\xi}_{\hat{\mathbf{A}}}$ are a  permutation of those of $\boldsymbol{\xi}_{\tilde{\mathbf{A}}}$.

Recall that matrices $\tilde{\mathbf{A}}$ and $\hat{\mathbf{A}}$ are non-singular, and entail the same covariance matrix $\tilde{\mathbf{\Sigma}}$. Since the columns of $\boldsymbol{\xi}_{\hat{\mathbf{A}}}$ are a permutation of those of $\boldsymbol{\xi}_{\tilde{\mathbf{A}}}$, by Corollary \ref{cor:permuted_support_implies_permuted_parameter}, we have $\hat{\mathbf{A}}\sim \tilde{\mathbf{A}}$.
\end{proof}

\subsection{Dimension of Covariance Set}\label{sec:proof_proposition_covariance_matrix_dimension}
We first define the Jacobian matrix of $\mathbf{\Sigma}=\mathbf{A}\mathbf{A}^\top$ w.r.t. the free parameters of $\mathbf{A}$ as $\frac{\partial \mathbf{\Sigma}}{\partial \mathbf{A}}$, which is a $n^2\times \|\mathbf{A}\|_0$ matrix, where the rows are indexed by $(i,j)\in [n]\times [n]$, and columns are indexed by $(k,l)\in\supp(\mathbf{A})$. That is, for $(i,j)\in [n]\times [n]$ and  $(k,l)\in\supp(\mathbf{A})$, we have
\begin{equation}\label{eq:covariance_jacobian_definition}
\left(\frac{\partial \mathbf{\Sigma}}{\partial \mathbf{A}}\right)_{(i,j),(k,l)}=\frac{\partial \Sigma_{i,j}}{\partial a_{k,l}}=\begin{cases}
2a_{i,l} & \text{ if } i=j \text{ and } k=i, \\
0 & \text{ if } i=j \text{ and } k\neq i, \\
a_{j,l} & \text{ if } i\neq j \text{ and } k=i, \\
a_{i,l} & \text{ if } i\neq j \text{ and }  k=j, \\
0 & \text{ if } i\neq j \text{ and }  k\not\in\{i,j\}.
\end{cases}.
\end{equation}
Given a matrix $\mathbf{M}$, we denote by $\operatorname{off}(\mathbf{M})$ and $\operatorname{on}(\mathbf{M})$ the off-diagonal and diagonal entries of $\mathbf{M}$, respectively. With a slight abuse of notation, we rewrite the above Jacobian matrix as the following form that consists of four submatrices, by permuting the corresponding columns and rows:
\begin{equation}\label{eq:covariance_jacobian_partition}
\frac{\partial \mathbf{\Sigma}}{\partial \mathbf{A}}=\begin{bmatrix}
\frac{\partial \operatorname{off}(\mathbf{\Sigma})}{\partial \operatorname{off}(\mathbf{A})} & \frac{\partial \operatorname{off}(\mathbf{\Sigma})}{\partial \operatorname{on}(\mathbf{A})} \\
\frac{\partial \operatorname{on}(\mathbf{\Sigma})}{\partial \operatorname{off}(\mathbf{A})} & \frac{\partial \operatorname{on}(\mathbf{\Sigma})}{\partial \operatorname{on}(\mathbf{A})} \\
\end{bmatrix}.
\end{equation}
Each submatrix above represents the Jacobian matrix for different parts (i.e., off-diagonal and diagonal entries) of matrix $\mathbf{\Sigma}$ taken w.r.t. different free parameters (i.e., off-diagonal and diagonal free parameters) in matrix $\mathbf{A}$. Note that column and row permutations do not affect the rank of the matrix. Since we are primarily interested in the rank of the above Jacobian matrix and its submatrices, we use the same notation to refer to different column and/or row permutations of the corresponding Jacobian matrix, depending on the context.

\begin{lemma}\label{lemma:rank_identity}
Let $\mathbf{A}$ be a lower triangular matrix. Then, we have
\[
\rank\left(\frac{\partial \mathbf{\Sigma}}{\partial \mathbf{A}}\bigg|_{\mathbf{A}=\mathbf{I}_n}\right)=\|\mathbf{A}\|_0.
\]
\end{lemma}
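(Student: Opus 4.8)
The plan is to compute the Jacobian $\frac{\partial \mathbf{\Sigma}}{\partial \mathbf{A}}$ explicitly at $\mathbf{A}=\mathbf{I}_n$ using the partitioned form in \eqref{eq:covariance_jacobian_partition}, and show that it has full column rank, i.e., rank equal to $\|\mathbf{A}\|_0$ (the number of free parameters). Note first that when $\mathbf{A}$ is lower triangular, $\supp(\mathbf{A})$ consists of pairs $(k,l)$ with $k\geq l$, and in particular the $n$ diagonal entries are free parameters. Substituting $\mathbf{A}=\mathbf{I}_n$ into \eqref{eq:covariance_jacobian_definition}, the entry $a_{i,l}$ is nonzero only when $i=l$, and equals $1$ in that case, so each of the cases in \eqref{eq:covariance_jacobian_definition} simplifies drastically.

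The key step is to read off the four submatrices. For a column indexed by an \emph{off-diagonal} free parameter $(k,l)$ with $k>l$ (so in particular $k\neq l$): the entry in row $(i,j)$ is $a_{j,l}$ when $i=k,i\neq j$ — nonzero only if $j=l$, giving a $1$ in row $(k,l)$; it is $a_{i,l}$ when $j=k, i\neq j$ — nonzero only if $i=l$, giving a $1$ in row $(l,k)$; and the diagonal rows $i=j$ give $2a_{i,l}$ only if $k=i$, which would force $l=i=k$, contradicting $k\neq l$, hence zero. So each off-diagonal-parameter column, restricted to the off-diagonal rows of $\mathbf{\Sigma}$, has exactly the two entries in rows $(k,l)$ and $(l,k)$ equal to $1$, and is zero in all diagonal rows. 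Thus $\frac{\partial \operatorname{on}(\mathbf{\Sigma})}{\partial \operatorname{off}(\mathbf{A})}=\mathbf{0}$, and $\frac{\partial \operatorname{off}(\mathbf{\Sigma})}{\partial \operatorname{off}(\mathbf{A})}$ has, in each column $(k,l)$, a $1$ in rows $(k,l)$ and $(l,k)$ — distinct columns hit distinct pairs of rows, so this submatrix has full column rank equal to the number of off-diagonal free parameters. For a column indexed by a \emph{diagonal} parameter $(i,i)$: from the $i=j,k=i$ case we get $2a_{i,i}=2$ in the diagonal row $(i,i)$; the off-diagonal contributions $a_{j,l}$ or $a_{i,l}$ with $l=i$ force $j=i$ (contradiction) — so actually off-diagonal row $(i,j)$ with $j\neq i$ gets $a_{j,i}$ if $k=i$, nonzero only if $j=i$; hence diagonal-parameter columns are zero off the diagonal rows. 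So $\frac{\partial \operatorname{on}(\mathbf{\Sigma})}{\partial \operatorname{on}(\mathbf{A})}=2\mathbf{I}_n$ and $\frac{\partial \operatorname{off}(\mathbf{\Sigma})}{\partial \operatorname{on}(\mathbf{A})}=\mathbf{0}$.

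Putting this together, after the row/column permutation the Jacobian at $\mathbf{A}=\mathbf{I}_n$ is block \emph{diagonal}, $\begin{bmatrix}\frac{\partial \operatorname{off}(\mathbf{\Sigma})}{\partial \operatorname{off}(\mathbf{A})} & \mathbf{0}\\ \mathbf{0} & 2\mathbf{I}_n\end{bmatrix}$, whose rank is (number of off-diagonal free parameters) $+\, n =\|\mathbf{A}\|_0$. Since rank is invariant under the row/column permutations used to pass between \eqref{eq:covariance_jacobian_definition} and \eqref{eq:covariance_jacobian_partition}, this gives $\rank\!\left(\frac{\partial \mathbf{\Sigma}}{\partial \mathbf{A}}\big|_{\mathbf{A}=\mathbf{I}_n}\right)=\|\mathbf{A}\|_0$, as claimed. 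The main obstacle — really just bookkeeping — is keeping the case analysis in \eqref{eq:covariance_jacobian_definition} straight to verify that the two off-diagonal blocks vanish and that distinct off-diagonal parameters land in distinct rows; once the block-diagonal structure is confirmed the rank count is immediate. One should also double-check the edge behavior when $\mathbf{A}=\mathbf{I}_n$ does not literally have the support pattern of a generic lower-triangular matrix: the lemma should be read as taking the derivative with respect to the free parameters indexed by $\supp(\mathbf{A})$ for the lower-triangular $\mathbf{A}$ in question and then evaluating at $\mathbf{I}_n$, so that $\|\mathbf{A}\|_0$ on the right-hand side refers to that support, and the argument above goes through verbatim.
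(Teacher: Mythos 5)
Your proof is correct and follows essentially the same route as the paper's: evaluate the partitioned Jacobian at $\mathbf{A}=\mathbf{I}_n$, observe that the two mixed blocks vanish, that the diagonal-parameter block is a multiple of the identity, and that the off-diagonal-parameter block has distinct columns supported on distinct row pairs $\{(k,l),(l,k)\}$, hence full column rank, so the ranks add. The only cosmetic difference is that the paper writes $2\mathbf{I}_d$ with $d$ the number of \emph{free} diagonal parameters rather than assuming all $n$ diagonal entries are in the support, but the argument is unchanged.
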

\begin{proof}
Let $d$ be the number of diagonal free parameters in matrix $\mathbf{A}$. This indicates that $\operatorname{off}(\mathbf{A})$ and $\operatorname{on}(\mathbf{A})$  contain $\|\mathbf{A}\|_0-d$ and $d$ free parameters, respectively. 

By specializing $\mathbf{A}=\mathbf{I}_n$ and using Eq. \eqref{eq:covariance_jacobian_definition}, we have
\begin{equation}\label{eq:rank_three_submatrices}
\frac{\partial \operatorname{off}(\mathbf{\Sigma})}{\partial \operatorname{on}(\mathbf{A})}\bigg|_{\mathbf{A}=\mathbf{I}_n}=\mathbf{0}, \quad \frac{\partial \operatorname{on}(\mathbf{\Sigma})}{\partial \operatorname{off}(\mathbf{A})}\bigg|_{\mathbf{A}=\mathbf{I}_n}=\mathbf{0}, \quad \text{and}\quad \frac{\partial \operatorname{on}(\mathbf{\Sigma})}{\partial \operatorname{on}(\mathbf{A})}\bigg|_{\mathbf{A}=\mathbf{I}_n}=2\mathbf{I}_d,
\end{equation}
where the last equality is obtained after row permutations of the corresponding matrix. Also, since matrix $\mathbf{A}$ is lower triangular, each nonzero entry of $\frac{\partial \operatorname{off}(\mathbf{\Sigma})}{\partial \operatorname{off}(\mathbf{A})}\big|_{\mathbf{A}=\mathbf{I}_n}$ corresponds to either
\[\frac{\partial \Sigma_{k,l}}{\partial a_{k,l}}=1 \quad\text{or}\quad\frac{\partial \Sigma_{l,k}}{\partial a_{k,l}}=1\quad \text{for some}~~k>l.
\]
In this case, each column of $\frac{\partial \operatorname{off}(\mathbf{\Sigma})}{\partial \operatorname{off}(\mathbf{A})}\big|_{\mathbf{A}=\mathbf{I}_n}$ contains
precisely two nonzero entries, while each row either contains precisely one nonzero entry, or does not contain any nonzero entry. Therefore, $\frac{\partial \operatorname{off}(\mathbf{\Sigma})}{\partial \operatorname{off}(\mathbf{A})}\big|_{\mathbf{A}=\mathbf{I}_n}$ can be rewritten after row permutations as
\[
\frac{\partial \operatorname{off}(\mathbf{\Sigma})}{\partial \operatorname{off}(\mathbf{A})}\bigg|_{\mathbf{A}=\mathbf{I}_n}= \begin{bmatrix}
\mathbf{I}_{\|\mathbf{A}\|_0-d}\\
\mathbf{I}_{\|\mathbf{A}\|_0-d} \\
\mathbf{0} \end{bmatrix},
\]
which yields
\begin{equation}\label{eq:rank_first_submatrix}
\rank\left(\frac{\partial \operatorname{off}(\mathbf{\Sigma})}{\partial \operatorname{off}(\mathbf{A})}\bigg|_{\mathbf{A}=\mathbf{I}_n}\right)=\|\mathbf{A}\|_0-d.
\end{equation}
Substituting Eq. \eqref{eq:rank_three_submatrices} into Eq. \eqref{eq:covariance_jacobian_partition}, we have 
\[
\frac{\partial \mathbf{\Sigma}}{\partial \mathbf{A}}\bigg|_{\mathbf{A}=\mathbf{I}_n}=\begin{bmatrix}
\frac{\partial \operatorname{off}(\mathbf{\Sigma})}{\partial \operatorname{off}(\mathbf{A})}\big|_{\mathbf{A}=\mathbf{I}_n} & \mathbf{0} \\
\mathbf{0} & 2\mathbf{I}_d \\
\end{bmatrix},
\]
which, with Eq. \eqref{eq:rank_first_submatrix}, implies
\begin{flalign*}
\rank\left(\frac{\partial \mathbf{\Sigma}}{\partial \mathbf{A}}\bigg|_{\mathbf{A}=\mathbf{I}_n}\right)&=\rank\left(\frac{\partial \operatorname{off}(\mathbf{\Sigma})}{\partial \operatorname{off}(\mathbf{A})}\bigg|_{\mathbf{A}=\mathbf{I}_n}\right)+\rank(2\mathbf{I}_d)\\
&=\|\mathbf{A}\|_0-d+d\\
&=\|\mathbf{A}\|_0.
\end{flalign*}
\end{proof}
\PropositionDimension*

\begin{proof}
Since $\boldsymbol{\xi}$ satisfies Assumption \ref{assumption:lower_triangular}, it can be permuted by column and row permutations to be lower triangular; in this case, the resulting covariance matrix and the original covariance matrix differ in equal row and column permutations. Note that the dimension of the covariance set remains the same after simultaneous equal row and column permutations of the covariance matrices. Therefore, it suffices to consider the case where $\boldsymbol{\xi}$ is lower triangular, and show that it has a dimension of $\|\boldsymbol{\xi}\|_0$.

As indicated by \citet[Theorem~10]{geiger2001stratified}, the dimension of $\dim(\mathbf{\Sigma}(\boldsymbol{\xi}))$ equals the maximum rank of the corresponding Jacobian matrix. In this case, it suffices to consider the columns of the Jacobian matrix that correspond to the nonzero entries of support matrix $\boldsymbol{\xi}$, i.e., the free parameters of matrix $\mathbf{A}$, which we denote by $\frac{\partial \mathbf{\Sigma}}{\partial \mathbf{A}}$. By Lemma \ref{lemma:rank_identity}, when $\mathbf{A}=\mathbf{I}_n$, the Jacobian matrix $\frac{\partial \mathbf{\Sigma}}{\partial \mathbf{A}}$ has full column rank that is equal to $\|\boldsymbol{\xi}\|_0$. Therefore, the dimension of the covariance set is $\|\boldsymbol{\xi}\|_0$.
\end{proof}

\subsection{Covariance Equivalence and Column Permutation of Support}\label{sec:proof_proposition:column_permutations}
We now state a result that is adapted from \citet[Proposition 5]{ghassami2020characterizing} to the context of ICA. In our proof of Theorem \ref{thm:identifiability_ica}, only the ``only if part'' of the following result is used.
\begin{proposition}[{{\citet[Proposition 5]{ghassami2020characterizing}}}]\label{proposition:column_permutations}
Consider two support matrices $\boldsymbol{\xi}_1$ and $\boldsymbol{\xi}_2$. If every pair of columns of $\boldsymbol{\xi}_1$ differ in more than one entry, then $\boldsymbol{\xi}_1$ and $\boldsymbol{\xi}_2$ are covariance equivalent if and only if the columns of $\boldsymbol{\xi}_2$ are a permutation of columns of $\boldsymbol{\xi}_1$.
\end{proposition}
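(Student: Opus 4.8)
The plan is to treat the two implications separately: the forward (``if'') direction is elementary, while the reverse (``only if'') direction — the one actually used in the proof of Theorem~\ref{thm:identifiability_ica} — is where the support-rotation machinery of \citet{ghassami2020characterizing} (Remark~\ref{remark:givens_rotation}) does the work. For the ``if'' direction, suppose $\boldsymbol{\xi}_2$ is obtained from $\boldsymbol{\xi}_1$ by a column permutation encoded by a permutation matrix $\mathbf{P}$. For any non-singular $\mathbf{A}$ with $\supp(\mathbf{A})\subseteq\supp(\boldsymbol{\xi}_1)$, the matrix $\mathbf{A}\mathbf{P}$ is non-singular, has support contained in $\supp(\boldsymbol{\xi}_2)$, and satisfies $(\mathbf{A}\mathbf{P})(\mathbf{A}\mathbf{P})^\top=\mathbf{A}\mathbf{A}^\top$ since $\mathbf{P}\mathbf{P}^\top=\mathbf{I}$. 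Hence $\mathbf{\Sigma}(\boldsymbol{\xi}_1)\subseteq\mathbf{\Sigma}(\boldsymbol{\xi}_2)$; applying the same argument with $\mathbf{P}^\top$ gives the reverse inclusion, so $\mathbf{\Sigma}(\boldsymbol{\xi}_1)=\mathbf{\Sigma}(\boldsymbol{\xi}_2)$, i.e., the two support matrices are covariance equivalent.

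For the ``only if'' direction, I would invoke the characterization from \citet{ghassami2020characterizing} that two support matrices are covariance equivalent if and only if one can be transformed into the other by a finite sequence of \emph{reversible} support rotations, namely reversible acute rotations and column swaps (the remaining two effects in Remark~\ref{remark:givens_rotation}, reduction and irreversible acute rotation, alter the set of entailed semialgebraic constraints and therefore cannot occur within a covariance-equivalence class). The crucial observation is that a reversible acute rotation $R(i,j,k)$ can be applied to a support matrix only if its $j$-th and $k$-th columns differ in \emph{exactly one} row. Since every pair of columns of $\boldsymbol{\xi}_1$ differs in more than one entry by hypothesis, no reversible acute rotation is applicable to $\boldsymbol{\xi}_1$; moreover, the ``every pair of columns differs in more than one entry'' property is manifestly preserved under column swaps. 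Walking along a chain of reversible rotations from $\boldsymbol{\xi}_1$ to $\boldsymbol{\xi}_2$, and orienting each step as an operation applied to the current matrix, an induction then shows that every intermediate support matrix inherits the property, so every step must be a column swap. Composing these column swaps shows that the columns of $\boldsymbol{\xi}_2$ are a permutation of those of $\boldsymbol{\xi}_1$.

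The main obstacle is to state and cite precisely the equivalence ``covariance equivalent $\iff$ connected by a chain of reversible support rotations''; this is the substantive input borrowed from \citet{ghassami2020characterizing}, and once it is in place the hypothesis on $\boldsymbol{\xi}_1$ merely eliminates every non-column-swap move. Two smaller points require care: first, checking directly against Remark~\ref{remark:givens_rotation} that the hypothesis is genuinely incompatible with \emph{every} reversible acute rotation (and, for completeness, with reductions, which require identical columns), whereas an irreversible acute rotation is ruled out not by the hypothesis but by leaving the equivalence class; and second, justifying that reversibility of each operation lets us assume the chain stays within a single equivalence class, so that the inductive argument on intermediate support matrices is valid.
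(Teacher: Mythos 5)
The paper does not actually prove this proposition: it is stated as a result adapted from Proposition~5 of \citet{ghassami2020characterizing}, and only its ``only if'' direction is invoked downstream in the proof of Theorem~\ref{thm:identifiability_ica}. There is therefore no in-paper argument to compare yours against. Your ``if'' direction is correct and elementary. Your ``only if'' direction is a plausible reconstruction of how the cited result is obtained, and the combinatorial part is sound: under the hypothesis, no pair of columns of $\boldsymbol{\xi}_1$ differs in zero entries (which rules out reductions, since those require identical columns) or in exactly one entry (which rules out reversible acute rotations), the hypothesis is preserved by column swaps, and the induction along the chain then forces every step to be a column swap, so $\boldsymbol{\xi}_2$ is a column permutation of $\boldsymbol{\xi}_1$. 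The genuine weak point is the one you yourself flag: the entire burden rests on the characterization ``covariance equivalent $\iff$ connected by a finite chain of reversible support rotations,'' together with the claim that irreversible acute rotations and reductions necessarily change the covariance set. That characterization is essentially the full content of the analysis in \citet{ghassami2020characterizing} and is of comparable depth to the proposition itself, so as a self-contained proof your proposal is incomplete. As a reduction to the cited machinery, however---which is exactly the status the paper gives this result---it is acceptable, and it correctly isolates the only place where the hypothesis on $\boldsymbol{\xi}_1$ is used.
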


\subsection{Equality Constraints and Covariance Equivalence}\label{sec:proof_proposition_same_hard_constraints_imply_distribution_equivalent}
In this section, we show how Assumption \ref{assumption:lower_triangular} allows one to go from two support matrices $\boldsymbol{\xi}_1$ and $\boldsymbol{\xi}_2$ having the same equality constraints to covariance equivalence. 

Following \citet{ghassami2020characterizing}, we first denote the covariance set of a directed graph $\mathcal{G}$ by
\[
\mathbf{\Theta}(\mathcal{G})\coloneqq\{(\mathbf{I}-\mathbf{B})\mathbf{\Omega}^{-1}(\mathbf{I}-\mathbf{B})^\top: \mathbf{B}\in\mathbb{R}_{\operatorname{off}}^{n\times n},\mathbf{\Omega}\in\operatorname{diag}(\mathbb{R}_{> 0}^n) ,\supp(\mathbf{B})\subseteq\supp(\mathbf{B}_\mathcal{G})\},
\]
where $\mathbf{B}_\mathcal{G}$ is the adjacency matrix of $\mathcal{G}$. With a slight abuse of notation, we denote by $H(\mathcal{G})$ the set of the equality constraints imposed by $\mathcal{G}$ on the resulting matrix $(\mathbf{I}-\mathbf{B})\mathbf{\Omega}^{-1}(\mathbf{I}-\mathbf{B})^\top$.

\begin{lemma}\label{lemma:mixing_matrix_directed_graph_distribution_equivalent}
Let $\boldsymbol{\xi}$ be a non-singular support matrix that satisfies Assumption \ref{assumption:lower_triangular}.\footnote{We say that a support matrix $\boldsymbol{\xi}$ is non-singular if there exists non-singular matrix $\mathbf{A}$ such that $\supp(\mathbf{A})\subseteq\supp(\boldsymbol{\xi})$.} Then, there exists a DAG $\mathcal{G}$ such that $\mathbf{\Sigma}(\boldsymbol{\xi})=\mathbf{\Theta}(\mathcal{G})$.
\end{lemma}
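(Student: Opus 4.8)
The plan is to turn the support matrix $\boldsymbol{\xi}$ into a DAG by exploiting Assumption \ref{assumption:lower_triangular}, and then to show that the two parametrized families of covariance matrices coincide. First I would invoke Assumption \ref{assumption:lower_triangular} to obtain permutation matrices $\mathbf{P}_1,\mathbf{P}_2$ so that $\mathbf{P}_1^\top\boldsymbol{\xi}\mathbf{P}_2$ is (the support of) a lower triangular matrix. Since simultaneous equal row and column permutation of a covariance matrix does not change the covariance set (up to relabeling of variables), and since the same is true for the DAG covariance set $\mathbf{\Theta}(\mathcal{G})$ under relabeling of the graph's vertices, it suffices to prove the claim in the case where $\boldsymbol{\xi}$ itself is already lower triangular; I would reduce to that case explicitly at the start.

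So assume $\boldsymbol{\xi}$ is lower triangular. The key structural observation is that because $\boldsymbol{\xi}$ is non-singular, its diagonal entries must all be $\times$ (a lower triangular matrix with a zero on the diagonal is singular, and any $\mathbf{A}$ with $\supp(\mathbf{A})\subseteq\supp(\boldsymbol{\xi})$ would then also be singular — this is where the non-singularity hypothesis is used). Now define the candidate DAG $\mathcal{G}$ on vertices $\{1,\dots,n\}$ by putting an edge $j\to i$ whenever $i>j$ and $\xi_{i,j}=\times$; equivalently $\supp(\mathbf{B}_\mathcal{G})$ is the strictly-lower-triangular part of $\supp(\boldsymbol{\xi})$. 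Since $\mathbf{B}_\mathcal{G}$ is strictly lower triangular, $\mathcal{G}$ is acyclic, so $\mathcal{G}$ is a DAG as required.

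It remains to show $\mathbf{\Sigma}(\boldsymbol{\xi})=\mathbf{\Theta}(\mathcal{G})$, and this is where I expect the real (though still routine) work to be. For ``$\supseteq$'': given $\mathbf{B}\in\mathbb{R}_{\operatorname{off}}^{n\times n}$ with $\supp(\mathbf{B})\subseteq\supp(\mathbf{B}_\mathcal{G})$ and $\mathbf{\Omega}\in\operatorname{diag}(\mathbb{R}_{>0}^n)$, set $\mathbf{A}\coloneqq(\mathbf{I}-\mathbf{B})\mathbf{\Omega}^{-1/2}$; then $\mathbf{A}$ is lower triangular with nonzero (indeed unit, before scaling) diagonal hence non-singular, $\supp(\mathbf{A})\subseteq\supp(\boldsymbol{\xi})$, and $\mathbf{A}\mathbf{A}^\top=(\mathbf{I}-\mathbf{B})\mathbf{\Omega}^{-1}(\mathbf{I}-\mathbf{B})^\top$, so this matrix lies in $\mathbf{\Sigma}(\boldsymbol{\xi})$. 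For ``$\subseteq$'': take any non-singular $\mathbf{A}$ with $\supp(\mathbf{A})\subseteq\supp(\boldsymbol{\xi})$; then $\mathbf{A}$ is lower triangular, and non-singularity forces all its diagonal entries to be nonzero, so by Lemma \ref{lemma:nonzero_diagonal_entries_exists_relation} there exist $\mathbf{B}\in\mathbb{R}_{\operatorname{off}}^{n\times n}$, $\mathbf{\Omega}\in\operatorname{diag}(\mathbb{R}_{>0}^n)$, and a sign matrix $\mathbf{D}$ with $\mathbf{A}=(\mathbf{I}-\mathbf{B})\mathbf{\Omega}^{-1/2}\mathbf{D}$; since $\mathbf{A}$ is lower triangular with nonzero diagonal and $\mathbf{\Omega}^{-1/2}\mathbf{D}$ is diagonal, $\mathbf{I}-\mathbf{B}$ is lower triangular, so $\mathbf{B}$ is strictly lower triangular and $\supp(\mathbf{B})\subseteq\supp(\mathbf{B}_\mathcal{G})$. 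Finally $\mathbf{A}\mathbf{A}^\top=(\mathbf{I}-\mathbf{B})\mathbf{\Omega}^{-1/2}\mathbf{D}\mathbf{D}^\top\mathbf{\Omega}^{-1/2}(\mathbf{I}-\mathbf{B})^\top=(\mathbf{I}-\mathbf{B})\mathbf{\Omega}^{-1}(\mathbf{I}-\mathbf{B})^\top\in\mathbf{\Theta}(\mathcal{G})$, using $\mathbf{D}\mathbf{D}^\top=\mathbf{I}$. This establishes both inclusions and hence the equality; the main obstacle is simply being careful that the support containments are preserved in both directions of the correspondence $\mathbf{A}\leftrightarrow(\mathbf{B},\mathbf{\Omega})$, which is exactly where lower-triangularity and the forced nonzero diagonal are essential.
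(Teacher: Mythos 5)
Your proof is correct and follows essentially the same route as the paper's: both arguments establish the two inclusions via the correspondence $\mathbf{A}\leftrightarrow(\mathbf{B},\mathbf{\Omega})$ of Lemma \ref{lemma:nonzero_diagonal_entries_exists_relation}, with non-singularity of the support forcing nonzero diagonal entries. The only cosmetic difference is the preliminary normalization: you permute $\boldsymbol{\xi}$ all the way to lower-triangular form and read the DAG off its strictly lower part, whereas the paper only column-permutes to obtain a nonzero diagonal and invokes Proposition \ref{proposition:connection_dag} to conclude that the off-diagonal support represents a DAG.
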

\begin{proof}
Since $\boldsymbol{\xi}$ is non-singular, by Lemma \ref{lemma:nonzero_diagonal_entries}, it can be mapped via column permutations to another support matrix $\boldsymbol{\xi}'$ with diagonal entries being nonzero. Since $\boldsymbol{\xi}$ satisfies Assumption \ref{assumption:lower_triangular}, Proposition~\ref{proposition:connection_dag} implies that $\boldsymbol{\xi}'$ represents a DAG, say $\mathcal{G}$, where the support of its adjacency matrix is  $\supp(\mathbf{B}_\mathcal{G})=\supp(\operatorname{off}(\boldsymbol{\xi}'))$. Since column permutations of support matrices do not affect the resulting covariance matrices, we have $\mathbf{\Sigma}(\boldsymbol{\xi})=\mathbf{\Sigma}(\boldsymbol{\xi}')$. Therefore, it suffices to prove $\mathbf{\Sigma}(\boldsymbol{\xi}')=\mathbf{\Theta}(\mathcal{G})$.

Since $\boldsymbol{\xi}'$ is a support matrix with diagonal entries being nonzero, we have
\begin{equation}\label{eq:proof_mixing_matrix_directed_graph_distribution_equivalent_1}
\supp(\boldsymbol{\xi}')=\supp(\mathbf{I})\cup\supp(\operatorname{off}(\boldsymbol{\xi}')).
\end{equation}
We consider both parts of the statements.

\textbf{Proof of $\mathbf{\Theta}(\mathcal{G})\subseteq\mathbf{\Sigma}^+(\boldsymbol{\xi}')$:}

Suppose $\mathbf{M}\in \mathbf{\Theta}(\mathcal{G})$. By definition of $\mathbf{\Theta}(\mathcal{G})$, there exist $\mathbf{B}\in\mathbb{R}_{\operatorname{off}}^{n\times n}$ and $\mathbf{\Omega}\in\operatorname{diag}(\mathbb{R}_{> 0}^n)$ such that 
\begin{equation}\label{eq:proof_mixing_matrix_directed_graph_distribution_equivalent_2}
\supp(\mathbf{B})\subseteq\supp(\operatorname{off}(\boldsymbol{\xi}')) \quad\text{and}\quad \mathbf{M}=(\mathbf{I}-\mathbf{B})\mathbf{\Omega}^{-1}(\mathbf{I}-\mathbf{B})^\top.
\end{equation}
Let $\mathbf{A}\coloneqq(\mathbf{I}-\mathbf{B})\mathbf{\Omega}^{-\frac{1}{2}}$, which, with Eq. \eqref{eq:proof_mixing_matrix_directed_graph_distribution_equivalent_2}, implies $\mathbf{A}\mathbf{A}^\top=\mathbf{M}$. Recall that $\mathcal{G}$ is a DAG, which indicates $\mathbf{B}$ also represents a DAG, Therefore, $\mathbf{I}-\mathbf{B}$, and thus $\mathbf{A}$, are non-singular. Since right multiplication of $\mathbf{\Omega}^{-\frac{1}{2}}$ does not affect the support of $\mathbf{I}-\mathbf{B}$, by Eqs. \eqref{eq:proof_mixing_matrix_directed_graph_distribution_equivalent_1} and \eqref{eq:proof_mixing_matrix_directed_graph_distribution_equivalent_2}, we have
\[
\supp(\mathbf{A})=\supp(\mathbf{I}-\mathbf{B})=\supp(\mathbf{I})\cup\supp(\mathbf{B})\subseteq\supp(\mathbf{I})\cup\supp(\operatorname{off}(\boldsymbol{\xi}'))=\supp(\boldsymbol{\xi}').
\]
Therefore, we have $\supp(\mathbf{A})\subseteq\supp(\boldsymbol{\xi}')$, which, with the non-singularity of matrix $\mathbf{A}$, implies $\mathbf{M}=\mathbf{A}\mathbf{A}^\top\in\mathbf{\Sigma}(\boldsymbol{\xi}')$.

\textbf{Proof of $\mathbf{\Sigma}(\boldsymbol{\xi}')\subseteq\mathbf{\Theta}(\mathcal{G})$:}

Suppose $\mathbf{M}\in \mathbf{\Sigma}(\boldsymbol{\xi}')$. By definition of $\mathbf{\Sigma}(\boldsymbol{\xi}')$, there exists non-singular matrix $\mathbf{A}\in\mathbb{R}^{n\times n}$ such that 
\begin{equation}\label{eq:proof_mixing_matrix_directed_graph_distribution_equivalent_3}
\supp(\mathbf{A})\subseteq\supp(\boldsymbol{\xi}') \quad\text{and}\quad \mathbf{M}=\mathbf{A}\mathbf{A}^\top.
\end{equation}
Since matrix $\mathbf{A}$ is non-singular, all of its diagonal entries must be nonzero, because otherwise the corresponding determinant will be zero, which contradicts its non-singularity. By Lemma \ref{lemma:nonzero_diagonal_entries_exists_relation}, there exists matrix $\mathbf{B}\in\mathbb{R}_{\operatorname{off}}^{n\times n}$ and $\mathbf{\Omega}\in\operatorname{diag}(\mathbb{R}_{> 0}^n)$ such that $\mathbf{A}=(\mathbf{I}-\mathbf{B})\mathbf{\Omega}^{-\frac{1}{2}}\mathbf{D}$, where $\mathbf{D}$ is a diagonal matrix with diagonal entries being $\pm 1$. By Eq. \eqref{eq:proof_mixing_matrix_directed_graph_distribution_equivalent_3}, we have $(\mathbf{I}-\mathbf{B})\mathbf{\Omega}^{-1}(\mathbf{I}-\mathbf{B})^\top=\mathbf{M}$. Since right multiplication of $\mathbf{\Omega}^{-\frac{1}{2}}\mathbf{D}$ does not affect the support of $\mathbf{I}-\mathbf{B}$, by Eqs. \eqref{eq:proof_mixing_matrix_directed_graph_distribution_equivalent_1} and \eqref{eq:proof_mixing_matrix_directed_graph_distribution_equivalent_3}, we have
\[
\supp(\mathbf{I})\cup\supp(\mathbf{B})=\supp(\mathbf{I}-\mathbf{B})=\supp(\mathbf{A})\subseteq\supp(\boldsymbol{\xi})=\supp(\mathbf{I})\cup\supp(\operatorname{off}(\boldsymbol{\xi}')).
\]
Note that $\supp(\mathbf{I})$ and $\supp(\mathbf{B})$ are disjoint. Furthermore, $\supp(\mathbf{I})$ and  $\supp(\operatorname{off}(\boldsymbol{\xi}'))$ are disjoint.
Therefore, we have $\supp(\mathbf{B})\subseteq\supp(\operatorname{off}(\boldsymbol{\xi}'))$ and thus $\mathbf{M}=(\mathbf{I}-\mathbf{B})\mathbf{\Omega}^{-1}(\mathbf{I}-\mathbf{B})^\top\in\mathbf{\Theta}(\mathcal{G})$.
\end{proof}

\begin{proposition}\label{proposition:same_hard_constraints_imply_distribution_equivalent}
Let $\boldsymbol{\xi}_1$ and $\boldsymbol{\xi}_2$ be non-singular support matrices that satisfy Assumption \ref{assumption:lower_triangular}. If they have the same set of equality constraints, i.e., $H(\boldsymbol{\xi}_2)=H(\boldsymbol{\xi}_2)$, then they are covariance equivalent.
\end{proposition}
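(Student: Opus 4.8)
The plan is to prove the equivalent statement $\mathbf{\Sigma}(\boldsymbol{\xi}_1)=\mathbf{\Sigma}(\boldsymbol{\xi}_2)$ by routing the argument through the DAG reformulation supplied by Lemma \ref{lemma:mixing_matrix_directed_graph_distribution_equivalent} and then appealing to the characterization of distribution equivalence for directed acyclic graphs due to \citet{ghassami2020characterizing}. A reduction of this sort is genuinely needed: for an arbitrary semialgebraic set the equality constraints $H(\cdot)$ do not by themselves pin the set down, since the inequality constraints also matter. Assumption \ref{assumption:lower_triangular} is precisely what rules out ``nontrivial'' inequalities — it lets us identify $\mathbf{\Sigma}(\boldsymbol{\xi}_i)$ with the covariance set $\mathbf{\Theta}(\mathcal{G}_i)$ of a DAG, and DAG covariance sets are cut out, inside the cone of positive definite matrices, by their equality constraints alone.

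First I would invoke Lemma \ref{lemma:mixing_matrix_directed_graph_distribution_equivalent}: since $\boldsymbol{\xi}_1$ and $\boldsymbol{\xi}_2$ are non-singular and satisfy Assumption \ref{assumption:lower_triangular}, there exist DAGs $\mathcal{G}_1$ and $\mathcal{G}_2$ with $\mathbf{\Sigma}(\boldsymbol{\xi}_i)=\mathbf{\Theta}(\mathcal{G}_i)$ for $i\in\{1,2\}$. Because the set of equality constraints satisfied by every member of a semialgebraic set is an intrinsic feature of that set (it is the vanishing ideal of the set), the identity $\mathbf{\Sigma}(\boldsymbol{\xi}_i)=\mathbf{\Theta}(\mathcal{G}_i)$ forces $H(\boldsymbol{\xi}_i)=H(\mathcal{G}_i)$. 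Combining this with the hypothesis $H(\boldsymbol{\xi}_1)=H(\boldsymbol{\xi}_2)$ yields $H(\mathcal{G}_1)=H(\mathcal{G}_2)$; that is, the two DAGs impose exactly the same equality constraints on the matrices of the form $(\mathbf{I}-\mathbf{B})\mathbf{\Omega}^{-1}(\mathbf{I}-\mathbf{B})^\top$.

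Next I would apply the result of \citet{ghassami2020characterizing} establishing that, for directed acyclic graphs, agreeing on the set of equality constraints is equivalent to distribution equivalence, i.e.\ to equality of the covariance sets: $H(\mathcal{G}_1)=H(\mathcal{G}_2)$ implies $\mathbf{\Theta}(\mathcal{G}_1)=\mathbf{\Theta}(\mathcal{G}_2)$. Chaining the three equalities then gives $\mathbf{\Sigma}(\boldsymbol{\xi}_1)=\mathbf{\Theta}(\mathcal{G}_1)=\mathbf{\Theta}(\mathcal{G}_2)=\mathbf{\Sigma}(\boldsymbol{\xi}_2)$, so $\boldsymbol{\xi}_1$ and $\boldsymbol{\xi}_2$ are covariance equivalent, as claimed.

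The main obstacle is the middle step — the fact that a DAG covariance set is determined by its equality constraints up to the ambient positive-definiteness inequality. Unlike a generic support matrix (e.g.\ $\boldsymbol{\xi}_2$ in Example \ref{example:distributional_constraint}, which carries a genuine inequality constraint of positive Lebesgue measure), a matrix satisfying Assumption \ref{assumption:lower_triangular} yields, after reading off the associated DAG, a model whose only inequalities are the ``trivial'' ones shared by all covariance matrices. Making this precise — and in particular confirming that the equality constraints genuinely entailed by a support coincide with those satisfied by the entailed covariances rather than arising from accidental parameter cancellations — is exactly the content of the distribution-equivalence machinery of \citet{ghassami2020characterizing}, which is why I would lean on their results rather than attempt a direct semialgebraic computation.
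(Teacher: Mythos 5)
Your proposal is correct and follows essentially the same route as the paper's proof: reduce each support matrix to a DAG via Lemma \ref{lemma:mixing_matrix_directed_graph_distribution_equivalent}, observe that $\mathbf{\Sigma}(\boldsymbol{\xi}_i)=\mathbf{\Theta}(\mathcal{G}_i)$ forces $H(\boldsymbol{\xi}_i)=H(\mathcal{G}_i)$, and then invoke the characterization of \citet{ghassami2020characterizing} that DAG covariance sets are determined by their equality constraints. The only cosmetic difference is that the paper phrases the final step as a contrapositive (covariance inequivalence $\Rightarrow$ Markov inequivalence $\Rightarrow$ differing conditional-independence, hence equality, constraints), whereas you state the implication directly.
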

\begin{proof}
Since matrices $\boldsymbol{\xi}_1$ and $\boldsymbol{\xi}_2$ are non-singular and satisfy Assumption \ref{assumption:lower_triangular}, by Lemma \ref{lemma:mixing_matrix_directed_graph_distribution_equivalent}, there exist DAGs $\mathcal{G}_1$ and $\mathcal{G}_2$ such that
\begin{equation}\label{eq:proof_distribution_equivalent}
\mathbf{\Sigma}(\boldsymbol{\xi}_1)=\mathbf{\Theta}(\mathcal{G}_1) \quad\text{and}\quad\mathbf{\Sigma}(\boldsymbol{\xi}_2)=\mathbf{\Theta}(\mathcal{G}_2),
\end{equation}
which imply
\begin{equation}\label{eq:proof_same_hard_constraints}
H(\boldsymbol{\xi}_1)=H(\mathcal{G}_1)\quad\text{and}\quad
H(\boldsymbol{\xi}_2)=H(\mathcal{G}_2).
\end{equation}

We now provide a proof by contrapositive. Suppose that $\boldsymbol{\xi}_1$ and $\boldsymbol{\xi}_2$ are not covariance equivalent, i.e., $\mathbf{\Sigma}(\boldsymbol{\xi}_1)\neq \mathbf{\Sigma}(\boldsymbol{\xi}_2)$. By Eq. \eqref{eq:proof_distribution_equivalent}, we have $\mathbf{\Theta}(\mathcal{G}_1)\neq\mathbf{\Theta}(\mathcal{G}_2)$, i.e., DAGs $\mathcal{G}_1$ and $\mathcal{G}_2$ are not covariance equivalent. By \citet[Proposition 1]{ghassami2020characterizing}, DAGs $\mathcal{G}_1$ and $\mathcal{G}_2$ are not Markov equivalent, which indicates that they do not have the same skeleton and v-structures \citep{verma1990equivalence}. This implies that they lead to different sets of conditional independence constraints, which in this case correspond to different sets of polynomial equality constraints \citep{spirtes2001causation}, i.e., $H(\mathcal{G}_1)\neq H(\mathcal{G}_2)$. By Eq. \eqref{eq:proof_same_hard_constraints}, we have $H(\boldsymbol{\xi}_1)\neq H(\boldsymbol{\xi}_2)$.
\end{proof}

\subsection{Identifiability of Parameters from Support}\label{sec:proof_cor_permuted_support_implies_permuted_parameter}
In the following, we provide a result that establish the identifiability of the parameters in mixing matrix from its support.
\begin{proposition}\label{proposition:same_support_implies_same_parameter}
Consider two non-singular matrices $\mathbf{A}_1$ and $\mathbf{A}_2$ that satisfy Assumption \ref{assumption:lower_triangular} and that entail the same covariance matrix, i.e., $\mathbf{\Sigma}=\mathbf{A}_1\mathbf{A}_1^\top=\mathbf{A}_2\mathbf{A}_2^\top$. If matrices $\mathbf{A}_1$ and $\mathbf{A}_2$ have the same support, then they differ only in sign changes of columns.
\end{proposition}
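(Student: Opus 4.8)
The plan is to exploit the orthogonal relation between $\mathbf{A}_1$ and $\mathbf{A}_2$ forced by the shared covariance matrix, and then use Assumption \ref{assumption:lower_triangular} to pass to a triangular form in which an orthogonal matrix is necessarily a signed diagonal matrix. First I would observe that, since $\mathbf{A}_1$ is non-singular and $\mathbf{A}_1\mathbf{A}_1^\top=\mathbf{A}_2\mathbf{A}_2^\top$, the matrix $\mathbf{U}\coloneqq\mathbf{A}_1^{-1}\mathbf{A}_2$ satisfies $\mathbf{U}\mathbf{U}^\top=\mathbf{A}_1^{-1}\mathbf{A}_2\mathbf{A}_2^\top\mathbf{A}_1^{-\top}=\mathbf{A}_1^{-1}\mathbf{A}_1\mathbf{A}_1^\top\mathbf{A}_1^{-\top}=\mathbf{I}$, so $\mathbf{U}$ is orthogonal and $\mathbf{A}_2=\mathbf{A}_1\mathbf{U}$. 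It then suffices to show that $\mathbf{U}$ is a diagonal matrix with entries $\pm 1$, since this is exactly the statement that $\mathbf{A}_1$ and $\mathbf{A}_2$ differ only in sign changes of columns.

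Next I would invoke Assumption \ref{assumption:lower_triangular}. Since $\mathbf{A}_1$ and $\mathbf{A}_2$ have the same support and the assumption depends only on the support pattern, there exist permutation matrices $\mathbf{P}_1,\mathbf{P}_2$ such that both $\mathbf{A}_1'\coloneqq\mathbf{P}_1^\top\mathbf{A}_1\mathbf{P}_2$ and $\mathbf{A}_2'\coloneqq\mathbf{P}_1^\top\mathbf{A}_2\mathbf{P}_2$ are lower triangular. Conjugating the relation $\mathbf{A}_2=\mathbf{A}_1\mathbf{U}$ gives $\mathbf{A}_2'=\mathbf{A}_1'\mathbf{V}$ with $\mathbf{V}\coloneqq\mathbf{P}_2^\top\mathbf{U}\mathbf{P}_2$, which is again orthogonal. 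Both $\mathbf{A}_1'$ and $\mathbf{A}_2'$ are non-singular (their determinants equal those of $\mathbf{A}_1,\mathbf{A}_2$ up to sign), so $\mathbf{V}=(\mathbf{A}_1')^{-1}\mathbf{A}_2'$; since the inverse of a lower triangular matrix is lower triangular and the product of lower triangular matrices is lower triangular, $\mathbf{V}$ is lower triangular.

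To finish, I would argue that an orthogonal lower triangular matrix must be a signed diagonal matrix: $\mathbf{V}^\top=\mathbf{V}^{-1}$ is lower triangular, while $\mathbf{V}^\top$ is simultaneously upper triangular, so $\mathbf{V}$ is diagonal, and $\mathbf{V}^\top\mathbf{V}=\mathbf{V}^2=\mathbf{I}$ forces each diagonal entry to be $\pm 1$. Translating back, $\mathbf{U}=\mathbf{P}_2\mathbf{V}\mathbf{P}_2^\top$ is a permutation-conjugate of a signed diagonal matrix, hence itself a diagonal matrix with entries $\pm 1$, so $\mathbf{A}_2=\mathbf{A}_1\mathbf{U}$ differs from $\mathbf{A}_1$ only in sign changes of columns. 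The argument is essentially routine; the only point demanding care is the bookkeeping of permutations — namely that the \emph{same} pair $(\mathbf{P}_1,\mathbf{P}_2)$ triangularizes both matrices precisely because they share a support, and that conjugation by $\mathbf{P}_2$ carries signed diagonal matrices to signed diagonal matrices, so that the conclusion about $\mathbf{V}$ transfers back to $\mathbf{U}$ and hence to the original $\mathbf{A}_1,\mathbf{A}_2$.
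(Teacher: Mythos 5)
Your proof is correct. It shares the paper's key structural step — since $\mathbf{A}_1$ and $\mathbf{A}_2$ have the same support, a single pair of permutations $(\mathbf{P}_1,\mathbf{P}_2)$ brings both to lower triangular form — but the mechanism you use afterward differs. The paper normalizes signs with diagonal matrices $\mathbf{D}_1,\mathbf{D}_2$ so that both triangular factors have positive diagonal, then invokes the uniqueness of the Cholesky factor of $\mathbf{P}_1^\top\mathbf{\Sigma}\mathbf{P}_1$ as a black box to conclude the two normalized factors coincide. You instead work with the transfer matrix $\mathbf{U}=\mathbf{A}_1^{-1}\mathbf{A}_2$, show it is orthogonal from the shared covariance, push it through the permutations to get a lower triangular orthogonal matrix $\mathbf{V}$, and observe directly that such a matrix must be a signed diagonal matrix (since $\mathbf{V}^{-1}=\mathbf{V}^\top$ is simultaneously lower and upper triangular). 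This is essentially the argument that underlies Cholesky uniqueness, so the two proofs are close cousins; yours is more self-contained and avoids the sign-normalization bookkeeping, while the paper's is shorter by citing a standard result. Your final bookkeeping is also sound: conjugating a signed diagonal matrix by a permutation merely permutes its diagonal entries, so the conclusion about $\mathbf{V}$ does transfer back to $\mathbf{U}$, and $\mathbf{A}_2=\mathbf{A}_1\mathbf{U}$ gives exactly the claimed column sign changes.
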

\begin{proof}
Since matrix $\mathbf{A}_1$ satisfies Assumption \ref{assumption:lower_triangular}, there exist permutation matrices $\mathbf{P}_1$ and $\mathbf{P}_2$ such that $\mathbf{P}_1^\top\mathbf{A}_1\mathbf{P}_2$ is lower triangular. Clearly, $\mathbf{P}_1^\top\mathbf{A}_2\mathbf{P}_2$ is also lower triangular, because matrices $\mathbf{A}_1$ and $\mathbf{A}_2$ have the same support. Since matrices $\mathbf{P}_1^\top\mathbf{A}_1\mathbf{P}_2$ and $\mathbf{P}_1^\top\mathbf{A}_2\mathbf{P}_2$ are non-singular, all diagonal entries of these two matrices must be nonzero, because otherwise the corresponding determinant will be zero, which contradict their non-singularity. Let $\mathbf{D}_1$ and $\mathbf{D}_2$ be diagonal matrices with diagonal entries being $\pm 1$ such that the diagonal entries of $\mathbf{P}_1^\top\mathbf{A}_1\mathbf{D}_1\mathbf{P}_2$ and $\mathbf{P}_1^\top\mathbf{A}_2\mathbf{D}_2\mathbf{P}_2$ are positive. (The procedure for constructing such diagonal matrices $\mathbf{D}_1$ and $\mathbf{D}_2$ is straightforward and omitted here.)

Furthermore, we have
\begin{equation}\label{eq:cholesky_decomposition}
(\mathbf{P}_1^\top\mathbf{A}_1\mathbf{D}_1\mathbf{P}_2)(\mathbf{P}_1^\top\mathbf{A}_1\mathbf{D}_1\mathbf{P}_2)^\top=(\mathbf{P}_1^\top\mathbf{A}_2\mathbf{D}_2\mathbf{P}_2)(\mathbf{P}_1^\top\mathbf{A}_2\mathbf{D}_2\mathbf{P}_2)^\top=\mathbf{P}_1^\top\mathbf{\Sigma}\mathbf{P}_1.
\end{equation}
Since matrix $\mathbf{A}_1$ is non-singular, matrices $\mathbf{\Sigma}$, and thus $\mathbf{P}_1^\top\mathbf{\Sigma}\mathbf{P}_1$, are symmetric positive definite. Here, Eq. \eqref{eq:cholesky_decomposition} can be viewed as the Cholesky decomposition of $\mathbf{P}_1^\top\mathbf{\Sigma}\mathbf{P}_1$, where $\mathbf{P}_1^\top\mathbf{A}_1\mathbf{D}_1\mathbf{P}_2$ and $\mathbf{P}_1^\top\mathbf{A}_2\mathbf{D}_2\mathbf{P}_2$ are the Cholesky factors. Recall that they are lower triangular matrices with all diagonal entries being  positive; in this case, it is known that such Cholesky factor is unique \citep{trefethen1997numerical}. Therefore, we have
\[
\mathbf{P}_1^\top\mathbf{A}_1\mathbf{D}_1\mathbf{P}_2=\mathbf{P}_1^\top\mathbf{A}_2\mathbf{D}_2\mathbf{P}_2,
\]
which implies
\[
\mathbf{A}_1=\mathbf{A}_2\mathbf{D}_2\mathbf{D}_1^{-1}=\mathbf{A}_2\mathbf{D}_2\mathbf{D}_1.
\]
Since $\mathbf{D}_2\mathbf{D}_1$ is a diagonal matrix with diagonal entries being $\pm 1$, we conclude that matrices $\mathbf{A}_1$ and $\mathbf{A}_2$ differ only in sign changes of columns.
\end{proof}

\begin{corollary}[Identifiability of Parameters from Support]\label{cor:permuted_support_implies_permuted_parameter}
Consider two non-singular matrices $\mathbf{A}_1$ and $\mathbf{A}_2$ that satisfy Assumption \ref{assumption:lower_triangular} and that entail the same covariance matrix, i.e., $\mathbf{\Sigma}=\mathbf{A}_1\mathbf{A}_1^\top=\mathbf{A}_2\mathbf{A}_2^\top$. If the columns of $\boldsymbol{\xi}_{\mathbf{A}_1}$ are a permutation of those of $\boldsymbol{\xi}_{\mathbf{A}_2}$, then we have $\mathbf{A}_1\sim\mathbf{A}_2$.
\end{corollary}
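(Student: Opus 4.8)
The plan is to reduce the statement to Proposition~\ref{proposition:same_support_implies_same_parameter} by absorbing the column permutation of supports into one of the matrices, and then argue that ``same support'' plus ``same covariance'' gives equality up to sign changes of columns. Concretely, since the columns of $\boldsymbol{\xi}_{\mathbf{A}_1}$ are a permutation of those of $\boldsymbol{\xi}_{\mathbf{A}_2}$, there is a permutation matrix $\mathbf{P}$ such that $\boldsymbol{\xi}_{\mathbf{A}_1\mathbf{P}}=\boldsymbol{\xi}_{\mathbf{A}_1}\mathbf{P}=\boldsymbol{\xi}_{\mathbf{A}_2}$, i.e., $\mathbf{A}_1\mathbf{P}$ and $\mathbf{A}_2$ have exactly the same support.

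The next step is to check that $\mathbf{A}_1\mathbf{P}$ still fits the hypotheses of Proposition~\ref{proposition:same_support_implies_same_parameter}. First, $\mathbf{A}_1\mathbf{P}$ is non-singular because $\mathbf{A}_1$ is and $\mathbf{P}$ is invertible. Second, $\mathbf{A}_1\mathbf{P}$ entails the same covariance matrix: since $\mathbf{P}$ is orthogonal, $(\mathbf{A}_1\mathbf{P})(\mathbf{A}_1\mathbf{P})^\top=\mathbf{A}_1\mathbf{P}\mathbf{P}^\top\mathbf{A}_1^\top=\mathbf{A}_1\mathbf{A}_1^\top=\mathbf{\Sigma}=\mathbf{A}_2\mathbf{A}_2^\top$. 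Third, $\mathbf{A}_1\mathbf{P}$ satisfies Assumption~\ref{assumption:lower_triangular}: if $\mathbf{P}_1^\top\mathbf{A}_1\mathbf{P}_2$ is lower triangular, then $\mathbf{P}_1^\top(\mathbf{A}_1\mathbf{P})(\mathbf{P}^\top\mathbf{P}_2)$ is the same lower triangular matrix, and $\mathbf{P}^\top\mathbf{P}_2$ is again a permutation matrix, so the assumption is invariant under right multiplication by a permutation matrix. With these verified, Proposition~\ref{proposition:same_support_implies_same_parameter} applies to $\mathbf{A}_1\mathbf{P}$ and $\mathbf{A}_2$, yielding a diagonal matrix $\mathbf{D}$ with entries $\pm 1$ such that $\mathbf{A}_1\mathbf{P}=\mathbf{A}_2\mathbf{D}$.

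Finally, I would rearrange to $\mathbf{A}_1=\mathbf{A}_2\mathbf{D}\mathbf{P}^\top$. Since $\mathbf{D}\mathbf{P}^\top$ is a product of a $\pm 1$ diagonal matrix and a permutation matrix, it is a signed permutation matrix, so $\mathbf{A}_1$ differs from $\mathbf{A}_2$ only by column permutations and sign changes of columns, i.e., $\mathbf{A}_1\sim\mathbf{A}_2$, as desired. This corollary is essentially bookkeeping on top of Proposition~\ref{proposition:same_support_implies_same_parameter}; the only point needing genuine (if brief) care is the invariance of Assumption~\ref{assumption:lower_triangular} and of the covariance matrix under the column permutation $\mathbf{P}$, which I would spell out explicitly as above.
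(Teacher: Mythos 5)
Your proposal is correct and follows essentially the same route as the paper: both reduce the statement to Proposition~\ref{proposition:same_support_implies_same_parameter} by absorbing the column permutation of supports into one of the matrices (the paper phrases this as a one-line contradiction, you argue it directly). If anything, your version is slightly more complete, since you explicitly verify that the permuted matrix still satisfies Assumption~\ref{assumption:lower_triangular} and entails the same covariance, which the paper's proof leaves implicit.
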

\begin{proof}
Suppose by contradiction that $\mathbf{A}_1\not\sim\mathbf{A}_2$. This implies that, for every permutation matrix $\mathbf{P}$ such that $\boldsymbol{\xi}_{\mathbf{A}_1}=\boldsymbol{\xi}_{\mathbf{A}_2\mathbf{P}}$, matrices $\mathbf{A}_1$ and $\mathbf{A}_2\mathbf{P}$ differ in more than sign changes of columns. By Proposition \ref{proposition:same_support_implies_same_parameter}, this cannot happen.
\end{proof}

\section{Proofs of Other Results}\label{app:proofs}
\subsection{Proof of Example \ref{example:distributional_constraint}}\label{app:proof_distributional_constraint}
\ExampleDistributionalConstraint*

\begin{proof}
We first consider matrix $\mathbf{A}_1$ with the support $\boldsymbol{\xi}_1$. That is, matrix $\mathbf{A}_1$ is of the form
\[
\mathbf{A}_1=\begin{bmatrix}
a_{1,1} & 0 & 0 \\
a_{2,1} & a_{2,2} & 0 \\
a_{3,1} & 0 & a_{3,3} \\
\end{bmatrix}.
\]
The entries of the resulting covariance matrix $\mathbf{\Sigma}=\mathbf{A}_1\mathbf{A}_1^\top$ are then given by
\begin{align*}
&\Sigma_{1,1}=a_{1,1}^2,\quad\quad\quad~~ \Sigma_{1,2}=a_{1,1}a_{2,1},\\	
&\Sigma_{2,2}=a_{2,1}^2+a_{2,2}^2,\quad \Sigma_{1,3}=a_{1,1}a_{3,1},\\
&\Sigma_{3,3}=a_{3,1}^2+a_{3,3}^2,\quad \Sigma_{2,3}=a_{2,1}a_{3,1},
\end{align*}
which imply 
\[
\Sigma_{1,1}\Sigma_{2,3}-\Sigma_{1,2}\Sigma_{1,3}=0.
\]
We now consider matrix $\mathbf{A}_2$ with the support $\boldsymbol{\xi}_2$. That is, matrix $\mathbf{A}_2$ is of the form
\[
\mathbf{A}_2=\begin{bmatrix}
a_{1,1} & 0 & a_{1,3} \\
a_{2,1} & a_{2,2} & 0 \\
0 & a_{3,2} & a_{3,3} \\
\end{bmatrix}.
\]
The entries of the resulting covariance matrix $\mathbf{\Sigma}=\mathbf{A}_2\mathbf{A}_2^\top$ are then given by
\begin{align*}
&\Sigma_{1,1}=a_{1,1}^2+a_{1,3}^2,\quad \Sigma_{1,2}=a_{1,1}a_{2,1},\\	
&\Sigma_{2,2}=a_{2,1}^2+a_{2,2}^2,\quad \Sigma_{1,3}=a_{1,3}a_{3,3},\\
&\Sigma_{3,3}=a_{3,2}^2+a_{3,3}^2,\quad \Sigma_{2,3}=a_{2,2}a_{3,2}.
\end{align*}
Suppose we fix the value of $a_{3,2}$. This leads to
\[
a_{3,2}^2=\Sigma_{3,3}-\frac{\Sigma_{1,3}^2}{\Sigma_{1,1}-\frac{\Sigma_{1,2}^2}{\Sigma_{2,2}-\frac{\Sigma_{2,3}^2}{a_{3,2}^2}}},
\]
which can be rewritten as
\[
(\Sigma_{1,1}\Sigma_{2,2}-\Sigma_{1,2}^2)a_{3,2}^4+(-\Sigma_{1,1}\Sigma_{2,2}\Sigma_{3,3}-\Sigma_{1,1}\Sigma_{2,3}^2+\Sigma_{2,2}\Sigma_{1,3}^2+\Sigma_{3,3}\Sigma_{1,2}^2)a_{3,2}^2+(\Sigma_{1,1}\Sigma_{3,3}\Sigma_{2,3}^2-\Sigma_{1,3}^2\Sigma_{2,3}^2)=0.
\]
Since the value of $a_{3,2}$ is a real number, we have
\[
(\Sigma_{1,1}\Sigma_{2,2}\Sigma_{3,3}+\Sigma_{1,1}\Sigma_{23}^2-\Sigma_{2,2}\Sigma_{1,3}^2-\Sigma_{3,3}\Sigma_{1,2}^2)^2-4(\Sigma_{1,1}\Sigma_{2,2}-\Sigma_{1,2}^2)(\Sigma_{1,1}\Sigma_{3,3}\Sigma_{2,3}^2-\Sigma_{1,3}^2\Sigma_{2,3}^2)\geq 0.
\]
\end{proof}

\subsection{Proof of Proposition \ref{proposition:necessary_condition}}
\PropositionNecessaryCondition*

\begin{proof}
Since the true mixing matrix $\tilde{\mathbf{A}}$ does not satisfy Assumption \ref{assumption:structural_variability}, there exist $i, j \in [n]$, $i \neq j$ such that 
\[
    |\supp(\tilde{\mathbf{a}}_i)\cup \supp(\tilde{\mathbf{a}}_j)|-|\supp(\tilde{\mathbf{a}}_i)\cap \supp(\tilde{\mathbf{a}}_j)| \leq 1.
\]
This leads to the following two cases:
\begin{itemize}[leftmargin=2em]
    \item \textbf{Case 1:} $|\supp(\tilde{\mathbf{a}}_i)\cup \supp(\tilde{\mathbf{a}}_j)|-|\supp(\tilde{\mathbf{a}}_i)\cap \supp(\tilde{\mathbf{a}}_j)| = 1$. In this case, since the mixing matrix is of full column rank, there must exist a $k \in [n]$ such that ${(\boldsymbol{\xi}_{\tilde{\mathbf{A}}})}_{k, i} = {(\boldsymbol{\xi}_{\tilde{\mathbf{A}}})}_{k, j} = \times$. Thus, we can always apply a reversible acute rotation (see Remark \ref{remark:givens_rotation}) to the $i$-th and $j$-th columns of matrix $\tilde{\mathbf{A}}$. This operation leads to another matrix $\ddot{\mathbf{A}}$ with $\|\ddot{\mathbf{A}}\|_0 = \|\tilde{\mathbf{A}}\|_0$ and $\ddot{\mathbf{A}}\ddot{\mathbf{A}}^\top=\tilde{\mathbf{A}}\tilde{\mathbf{A}}^\top$. In the reversible acute rotation, we can set either ${(\boldsymbol{\xi}_{\tilde{\mathbf{A}}})}_{k, i}$ or ${(\boldsymbol{\xi}_{\tilde{\mathbf{A}}})}_{k, j}$ to $0$. This implies $\|\ddot{\mathbf{a}}_{k,:}\|_0 < \|\tilde{\mathbf{a}}_{k,:}\|_0$, and therefore $\ddot{\mathbf{A}}\not\sim\tilde{\mathbf{A}}$. Now, suppose that the true mixing matrix $\tilde{\mathbf{A}}$ is not a solution to Problem \eqref{eq:sparsity_optimization_high_level}. Clearly, there must exist a solution $\hat{\mathbf{A}}$ to Problem \eqref{eq:sparsity_optimization_high_level} such that $\|\hat{\mathbf{A}}\|_0 < \|\tilde{\mathbf{A}}\|_0$, which indicates $\hat{\mathbf{A}}\not\sim\tilde{\mathbf{A}}$. It remains to consider the case where matrix $\tilde{\mathbf{A}}$ is a solution to Problem \eqref{eq:sparsity_optimization_high_level}. In this case, since $\|\ddot{\mathbf{A}}\|_0 = \|\tilde{\mathbf{A}}\|_0$, matrix $\ddot{\mathbf{A}}$ is also a solution to Problem \eqref{eq:sparsity_optimization_high_level}, and we have shown that $\ddot{\mathbf{A}}\not\sim\tilde{\mathbf{A}}$.
   \item \textbf{Case 2:} $|\supp(\tilde{\mathbf{a}}_i)\cup \supp(\tilde{\mathbf{a}}_j)|-|\supp(\tilde{\mathbf{a}}_i)\cap \supp(\tilde{\mathbf{a}}_j)| = 0$. In this case, we can always apply a reduction (see Remark \ref{remark:givens_rotation}) to the $i$-th and $j$-th columns of matrix $\tilde{\mathbf{A}}$. This operation leads to another matrix $\ddot{\mathbf{A}}$ with $\|\ddot{\mathbf{A}}\|_0 < \|\tilde{\mathbf{A}}\|_0$ and $\ddot{\mathbf{A}}\ddot{\mathbf{A}}^\top=\tilde{\mathbf{A}}\tilde{\mathbf{A}}^\top$. Therefore, there must exist a solution $\hat{\mathbf{A}}$ to Problem \eqref{eq:sparsity_optimization_high_level} such that $\|\hat{\mathbf{A}}\|_0\leq \|\ddot{\mathbf{A}}\|_0 < \|\tilde{\mathbf{A}}\|_0$, which indicates $\hat{\mathbf{A}}\not\sim\tilde{\mathbf{A}}$.
\end{itemize}
In either case, there exists a solution to Problem \eqref{eq:sparsity_optimization_high_level} whose columns are not signed permutations of the columns of $\tilde{\mathbf{A}}$.
\end{proof}

\subsection{Proof of Example \ref{example:polytree}}
\ExamplePolytree*
\begin{proof}
Suppose that the matrix $\mathbf{A}$ does not satisfy Assumption \ref{assumption:lower_triangular}. This means that there does not exist permutation matrices $\mathbf{P}_1$ and $\mathbf{P}_2$ such that $\mathbf{P}_1^\top \mathbf{A} \mathbf{P}_2$ is lower triangular, which implies that, in the connective structure $\mathcal{G}_\mathbf{A}$, there exists a path that alternates between source and observed variable nodes, i.e., a sequence of nodes $\{s_{i_1}, x_{j_1}, s_{i_2}, x_{j_2}, ..., s_{i_k}, x_{j_k}, s_{i_1}\}$ where each pair $(s_{i_t}, x_{j_t})$ corresponds to a nonzero entry $a_{j_t, i_t}$ in $\mathbf{A}$, for $t=1,\dots,k$. Thus, by replacing the directed edges on this path with undirected edges, we obtain a cycle. Therefore, $\mathcal{G}_\mathbf{A}$ cannot be a polytree.
\end{proof}

\subsection{Proof of Proposition \ref{proposition:generic_property}}
The proof of the following proposition is adapted from that of \citet[Proposition~8]{ghassami2020characterizing}.
\PropositionGenericProperty*
\begin{proof}
Let $\phi$ be the set of possible equality constraints of any covariance matrix with the same size as $\mathbf{\Sigma}$, which is a finite set because the number of variables is finite. Consider a matrix $\hat{\mathbf{A}}$ with the same support as $\mathbf{A}$ (i.e., $\boldsymbol{\xi}_{\hat{\mathbf{A}}}=\boldsymbol{\xi}_\mathbf{A}$) that violates Assumption \ref{assumption:faithfulness}, where the corresponding covariance matrix  is $\hat{\mathbf{\Sigma}}$. To violate Assumption \ref{assumption:faithfulness}, $\hat{\mathbf{\Sigma}}$ has to satisfy an equality constraint $\kappa\in\phi\setminus H(\boldsymbol{\xi}_\mathbf{A})$. Thus, the set of possible matrices with the same support as $\mathbf{A}$ that violate Assumption \ref{assumption:faithfulness} is a subset of
\[
\bigcup_{\kappa\in\phi\setminus H(\boldsymbol{\xi}_\mathbf{A})}\left\{\hat{\mathbf{A}}:\boldsymbol{\xi}_{\hat{\mathbf{A}}}=\boldsymbol{\xi}_\mathbf{A} \text{~~and~~} \hat{\mathbf{\Sigma}} \text{~satisfies equality constraint~}\kappa\right\}.
\]
By the definition of equality constraint, each set in the union above has zero Lebesgue measure, and thus the finite union above also has zero Lebesgue measure. This implies that the set of possible matrices with the same support as $\mathbf{A}$ that violate Assumption \ref{assumption:faithfulness} has zero Lebesgue measure. Therefore, Assumption \ref{assumption:faithfulness} is satisfied with probability one.
\end{proof}

\subsection{Proof of Theorem \ref{thm:connection_1}}\label{app:proof_thm_connection_1}

We first prove the following proposition that will be used to prove both Theorems \ref{thm:connection_1} and \ref{thm:connection_2}.
\begin{proposition}\label{proposition:column_subset_implies_structural_variability}
If mixing matrix $\mathbf{A}$ satisfies Assumption \ref{assumption:column_subset}, then it satisfies Assumption \ref{assumption:structural_variability}.
\end{proposition}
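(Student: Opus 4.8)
The plan is to argue by contradiction, exploiting the observation that the quantity appearing in Assumption~\ref{assumption:structural_variability} is precisely the cardinality of the symmetric difference of the two supports: for any $i\neq j$,
\[
|\supp(\mathbf{a}_i)\cup \supp(\mathbf{a}_j)|-|\supp(\mathbf{a}_i)\cap \supp(\mathbf{a}_j)|
= |\supp(\mathbf{a}_i)\,\triangle\,\supp(\mathbf{a}_j)|
= |\supp(\mathbf{a}_i)\setminus\supp(\mathbf{a}_j)| + |\supp(\mathbf{a}_j)\setminus\supp(\mathbf{a}_i)|.
\]
So it suffices to show that, under Assumption~\ref{assumption:column_subset}, this symmetric difference has at least two elements for every pair $i\neq j$.

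Suppose not: fix $i\neq j$ with $|\supp(\mathbf{a}_i)\,\triangle\,\supp(\mathbf{a}_j)|\le 1$. I would split into the two possible values. If the symmetric difference is empty, then $\supp(\mathbf{a}_i)=\supp(\mathbf{a}_j)$, so each column's support is (trivially) a subset of the other, contradicting Assumption~\ref{assumption:column_subset}. If the symmetric difference is a singleton $\{k\}$, then exactly one of the two set differences $\supp(\mathbf{a}_i)\setminus\supp(\mathbf{a}_j)$ and $\supp(\mathbf{a}_j)\setminus\supp(\mathbf{a}_i)$ equals $\{k\}$ and the other is empty; an empty set difference $\supp(\mathbf{a}_i)\setminus\supp(\mathbf{a}_j)=\emptyset$ means $\supp(\mathbf{a}_i)\subseteq\supp(\mathbf{a}_j)$ (and symmetrically in the other case), again contradicting Assumption~\ref{assumption:column_subset}. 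Hence in all cases the symmetric difference has size at least $2$, which is exactly the inequality in Assumption~\ref{assumption:structural_variability}.

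This argument is entirely elementary set theory, so there is no real obstacle; the only point worth stating carefully is the identity rewriting the union-minus-intersection expression as a sum of the two set differences, which makes the singleton case transparent. (If one prefers to avoid the contradiction phrasing, the same case split can be run directly as a contrapositive of Assumption~\ref{assumption:column_subset}, but the contradiction form reads most cleanly.)
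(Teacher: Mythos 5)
Your proof is correct and follows essentially the same route as the paper's: a contrapositive/contradiction argument splitting on whether the union-minus-intersection quantity (i.e., the symmetric difference) is $0$ or $1$, with each case yielding a containment that violates Assumption~\ref{assumption:column_subset}. Your explicit rewriting of the quantity as $|\supp(\mathbf{a}_i)\setminus\supp(\mathbf{a}_j)| + |\supp(\mathbf{a}_j)\setminus\supp(\mathbf{a}_i)|$ just makes the singleton case slightly more transparent than the paper's phrasing, but the argument is the same.
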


\begin{proof}
We provide a proof by contrapositive. Suppose $\mathbf{A}$ does not satisfy Assumption \ref{assumption:structural_variability}. This means that there exist some $i, j \in [n]$ with $i \neq j$ such that
\begin{equation*}
    |\supp(\mathbf{a}_i)\cup \supp(\mathbf{a}_j)|-|\supp(\mathbf{a}_i)\cap \supp(\mathbf{a}_j)|\leq 1.
\end{equation*}
The difference can be either $0$ or $1$.

\begin{itemize}[leftmargin=2em]
    \item \textbf{Case 1:} $|\supp(\mathbf{a}_i)\cup \supp(\mathbf{a}_j)|-|\supp(\mathbf{a}_i)\cap \supp(\mathbf{a}_j)| = 0.$ This implies $\supp(\mathbf{a}_i) = \supp(\mathbf{a}_j)$, i.e., the supports of $\mathbf{a}_i$ and $\mathbf{a}_j$ are identical. In this case, Assumption \ref{assumption:column_subset} is clearly violated, as $\supp(\mathbf{a}_i)$ is a subset of $\supp(\mathbf{a}_j)$ and vice versa.
    \item \textbf{Case 2:} $|\supp(\mathbf{a}_i)\cup \supp(\mathbf{a}_j)|-|\supp(\mathbf{a}_i)\cap \supp(\mathbf{a}_j)| = 1.$ This implies that one of the columns is a proper subset of the other, meaning either $\supp(\mathbf{a}_i) \subset \supp(\mathbf{a}_j)$ or $\supp(\mathbf{a}_j) \subset \supp(\mathbf{a}_i)$. Again, this violates Assumption \ref{assumption:column_subset}.
\end{itemize}

Hence, in either case, if $\mathbf{A}$ does not satisfy Assumption \ref{assumption:structural_variability}, then it does not satisfy Assumption \ref{assumption:column_subset}.
\end{proof}

We now prove the following theorem.
\ThmConnectionOne*

\begin{proof}
    We first prove $\text{Assumption \ref{assumption:structural_sparsity_1}} \implies \text{Assumption \ref{assumption:column_subset}}$. 

    In Assumption \ref{assumption:structural_sparsity_1}, Eq. \eqref{eq:structural_sparsity} is assumed to be satsified for all $\mathcal{I}\subseteq [n]$ where $|\mathcal{I}|>1$. Thus, in order to prove that Assumption \ref{assumption:structural_sparsity_1} implies Assumption \ref{assumption:column_subset}, it is sufficient to only consider the case where $|\mathcal{I}|=2$. That is, for every $i,j\in [n]$ and $i\neq j$, we have
    \[
        |\supp(\mathbf{a}_i)\cup \supp(\mathbf{a}_j)| > \operatorname{rank}(\operatorname{overlap}(\mathbf{A}_{\{i,j\}})) + \left|\operatorname{supp}(\mathbf{a}_{i})\right|,
    \]
    where we set $i$ as the target index without loss of generality.

    Because $\operatorname{rank}(\operatorname{overlap}(\mathbf{A}_{\{i,j\}})) \geq 1$, we have
    \begin{equation}\label{eq:4_implies_6}
        |\supp(\mathbf{a}_i)\cup \supp(\mathbf{a}_j)| > 1 + \left|\operatorname{supp}(\mathbf{a}_{i})\right|.
    \end{equation}
    Suppose $|\supp(\mathbf{a}_i)| \geq |\supp(\mathbf{a}_j)|$. Eq. \eqref{eq:4_implies_6} implies that $\supp(\mathbf{a}_j)$ is not a subset of $\supp(\mathbf{a}_i)$. Similarly, if $|\supp(\mathbf{a}_j)| \geq |\supp(\mathbf{a}_i)|$, we could also show that $\supp(\mathbf{a}_i)$ is not a subset of $\supp(\mathbf{a}_j)$. Thus, Assumption \ref{assumption:column_subset} is satisfied.

    According to Proposition \ref{proposition:column_subset_implies_structural_variability}, we have $\text{Assumption \ref{assumption:column_subset}} \implies \text{Assumption \ref{assumption:structural_variability}}$, which completes the proof of the first part, i.e., $\text{Assumption \ref{assumption:structural_sparsity_1}} \implies \text{Assumption \ref{assumption:column_subset}} \implies \text{Assumption \ref{assumption:structural_variability}}$.

    We now provide an example of mixing matrix satisfying the Assumption \ref{assumption:structural_variability} that does not satisfy Assumption \ref{assumption:structural_sparsity_1}. Suppose the mixing matrix $\tilde{\mathbf{A}}$ has a support as follows:
    \[
        \boldsymbol{\xi}_{\tilde{\mathbf{A}}}=\begin{bmatrix}
        \times & 0 & 0 \\
        \times & \times & 0 \\
        \times & 0 & \times \\
        \end{bmatrix}
    \]
    Clearly, Assumption \ref{assumption:structural_variability} is satisfied since every pair of columns differ on more than one entry. However, for $k = 1$ and $\mathcal{I} = \{1,2\}$, we have
    \[
        \left|\underset{k^{\prime} \in \mathcal{I}}{\bigcup}\operatorname{supp}(\mathbf{a}_{k^{\prime}})\right| -  \operatorname{rank}(\operatorname{overlap}(\mathbf{A}_{\mathcal{I}})) \leq \left|\underset{k^{\prime} \in \mathcal{I}}{\bigcup}\operatorname{supp}(\mathbf{a}_{k^{\prime}})\right| = 3.
    \]
    Since $\left|\operatorname{supp}(\mathbf{a}_{k})\right|=\left|\operatorname{supp}(\mathbf{a}_{1})\right| = 3$, it is not possible for Eq. \eqref{eq:structural_sparsity} to hold for $\tilde{\mathbf{A}}$ when $k = 1$ and $\mathcal{I} = \{1,2\}$. Thus, Assumption \ref{assumption:structural_sparsity_1} is violated. The proof of part (b) is finished.
\end{proof}

\subsection{Proof of Theorem \ref{thm:connection_2}}
\ThmConnectionTwo*

\begin{proof}
We first prove the contrapositive of $\text{Assumption \ref{assumption:structural_sparsity_2}} \implies \text{Assumption \ref{assumption:column_subset}}$. Suppose that Assumption \ref{assumption:column_subset} does not hold. This means that there exist distinct indices $i, j \in [n]$, such that $\supp(\mathbf{a}_i)$ is a subset of $\supp(\mathbf{a}_j)$.

Now, for any set of row indices $\mathcal{I}\subset [n]$, consider the intersection over the supports of all rows $j \in \mathcal{I}$, denoted by $\bigcap_{j \in \mathcal{I}} \operatorname{supp}(\mathbf{a}_{j, :})$. Because $\supp(\mathbf{a}_i)$ is a subset of $\supp(\mathbf{a}_j)$, it is clear that $i$ cannot be the only element in this intersection. Therefore, it is impossible to satisfy $\bigcap_{j \in \mathcal{I}} \operatorname{supp}(\mathbf{a}_{j, :}) = \{i\}$ for any choice of $\mathcal{I}$, which indicates that Assumption \ref{assumption:structural_sparsity_2} is violated.

According to Proposition \ref{proposition:column_subset_implies_structural_variability}, we have $\text{Assumption \ref{assumption:column_subset}} \implies \text{Assumption \ref{assumption:structural_variability}}$, which completes the proof of the first part, i.e., $\text{Assumption \ref{assumption:structural_sparsity_2}} \implies \text{Assumption \ref{assumption:column_subset}} \implies \text{Assumption \ref{assumption:structural_variability}}$.

We now provide an example of mixing matrix satisfying the Assumption \ref{assumption:structural_variability} that does not satisfy Assumption \ref{assumption:structural_sparsity_2}. Suppose the mixing matrix $\tilde{\mathbf{A}}$ has a support as follows:
\[
    \boldsymbol{\xi}_{\tilde{\mathbf{A}}}=\begin{bmatrix}
    \times & 0 & 0 \\
    \times & \times & 0 \\
    \times & 0 & \times \\
    \end{bmatrix}
\]
Clearly, Assumption \ref{assumption:structural_variability} is satisfied since the supports of each pair of columns differ on more than one entry. However, Assumption \ref{assumption:structural_sparsity_2} is violated since there does not exist any set of rows such that the intersection of their nonzero indices is $2$ or $3$.
\end{proof}

\subsection{Proof of Theorem \ref{thm:equivalent_formulations}}
\ThmEquivalentFormulations*

\begin{proof}
We consider both parts of the statements.

\textbf{Part (a):}

We provide a proof by contrapositive. For matrices $\hat{\mathbf{B}}$ and $\hat{\mathbf{\Omega}}$, suppose
$
\hat{\mathbf{A}}\coloneqq(\mathbf{I}-\hat{\mathbf{B}})\hat{\mathbf{\Omega}}^{-\frac{1}{2}}
$
is not a solution to Problem \eqref{eq:sparsity_optimization_high_level}. That is, there exists matrix $\ddot{\mathbf{A}}$ such that
\begin{equation}\label{eq:proof_equivalent_formulations_smaller_norm}
\|\ddot{\mathbf{A}}\|_0<\|\hat{\mathbf{A}}\|_0
\end{equation}
and 
\begin{equation}\label{eq:proof_equivalent_formulations_solution}
\ddot{\mathbf{A}}\ddot{\mathbf{A}}^\top=\tilde{\mathbf{A}}\tilde{\mathbf{A}}^\top.
\end{equation}
By Lemma \ref{lemma:nonsingular_solution}, matrix $\ddot{\mathbf{A}}$ is non-singular, and thus, by Lemma \ref{lemma:nonsingular_exists_relation}, there exist matrices $\ddot{\mathbf{B}}\in\mathbb{R}_{\operatorname{off}}^{n\times n}$ and $\ddot{\mathbf{\Omega}}\in\operatorname{diag}(\mathbb{R}_{> 0}^n)$ such that 
\[
\ddot{\mathbf{A}}\sim(\mathbf{I}-\ddot{\mathbf{B}})\ddot{\mathbf{\Omega}}^{-\frac{1}{2}}.
\]
Lemma \ref{lemma:A_B_zero_norm} implies 
\[\|\hat{\mathbf{A}}\|_0=\|\hat{\mathbf{B}}\|_0+n \quad \text{and} \quad \|\ddot{\mathbf{A}}\|_0=\|\ddot{\mathbf{B}}\|_0+n, 
\]
which, with Inequality \eqref{eq:proof_equivalent_formulations_smaller_norm}, indicate $\|\ddot{\mathbf{B}}\|_0<\|\hat{\mathbf{B}}\|_0$. Furthermore, using Eq. \eqref{eq:proof_equivalent_formulations_solution} and the assumption $\tilde{\mathbf{A}}=(\mathbf{I}-\tilde{\mathbf{B}})\tilde{\mathbf{\Omega}}^{-\frac{1}{2}}$, we have
\[
(\mathbf{I}-\ddot{\mathbf{B}})\ddot{\mathbf{\Omega}}^{-1}(\mathbf{I}-\ddot{\mathbf{B}})^\top=(\mathbf{I}-\tilde{\mathbf{B}})\tilde{\mathbf{\Omega}}^{-1}(\mathbf{I}-\tilde{\mathbf{B}})^\top.
\]
Therefore, $(\ddot{\mathbf{B}},\ddot{\mathbf{\Omega}})$ satisfies the constraint of Problem \eqref{eq:sparsity_optimization_causal} and leads to a smaller zero norm for the objective function, and thus $(\hat{\mathbf{B}},\hat{\mathbf{\Omega}})$ will never be a solution of Problem \eqref{eq:sparsity_optimization_causal}.

\textbf{Part (b):}

Let $\hat{\mathbf{A}}$ be a solution to Problem \eqref{eq:sparsity_optimization_high_level}. By Lemma \ref{lemma:nonsingular_solution}, matrix $\hat{\mathbf{A}}$ is non-singular, and thus, by Lemma \ref{lemma:nonsingular_exists_relation}, there exist matrices $\hat{\mathbf{B}}\in\mathbb{R}_{\operatorname{off}}^{n\times n}$ and $\hat{\mathbf{\Omega}}\in\operatorname{diag}(\mathbb{R}_{> 0}^n)$ such that 
\[
\hat{\mathbf{A}}\sim(\mathbf{I}-\hat{\mathbf{B}})\hat{\mathbf{\Omega}}^{-\frac{1}{2}}.
\]
It then remains to prove that $(\hat{\mathbf{B}},\hat{\mathbf{\Omega}})$ is a solution to Problem \eqref{eq:sparsity_optimization_causal}, which we do so by contradiction. Suppose that $(\hat{\mathbf{B}},\hat{\mathbf{\Omega}})$ is not a solution to Problem \eqref{eq:sparsity_optimization_causal}. That is, there exists solution $(\ddot{\mathbf{B}},\ddot{\mathbf{\Omega}})$ such that 
\begin{equation}\label{eq:proof_equivalent_formulations_smaller_norm_2}
\|\ddot{\mathbf{B}}\|_0<\|\hat{\mathbf{B}}\|_0
\end{equation}
and
\begin{equation}\label{eq:proof_equivalent_formulations_solution_2}
(\mathbf{I}-\ddot{\mathbf{B}})\ddot{\mathbf{\Omega}}^{-1}(\mathbf{I}-\ddot{\mathbf{B}})^\top=(\mathbf{I}-\tilde{\mathbf{B}})\tilde{\mathbf{\Omega}}^{-1}(\mathbf{I}-\tilde{\mathbf{B}})^\top.
\end{equation}
Define $\ddot{\mathbf{A}}\coloneqq(\mathbf{I}-\ddot{\mathbf{B}})\tilde{\mathbf{\Omega}}^{-\frac{1}{2}}$. Lemma \ref{lemma:A_B_zero_norm} implies $\|\hat{\mathbf{A}}\|_0=\|\hat{\mathbf{B}}\|_0+n$ and $\|\ddot{\mathbf{A}}\|_0=\|\ddot{\mathbf{B}}\|_0+n$, which, with Inequality \eqref{eq:proof_equivalent_formulations_smaller_norm_2}, indicates $\|\ddot{\mathbf{A}}\|_0<\|\hat{\mathbf{A}}\|_0$. Furthermore, using Eq. \eqref{eq:proof_equivalent_formulations_solution_2} and the assumption $\tilde{\mathbf{A}}=(\mathbf{I}-\tilde{\mathbf{B}})\tilde{\mathbf{\Omega}}^{-\frac{1}{2}}$, we have 
\[
\ddot{\mathbf{A}}\ddot{\mathbf{A}}^\top=\tilde{\mathbf{A}}\tilde{\mathbf{A}}^\top.
\]
Therefore, $\ddot{\mathbf{A}}$ satisfies the constraint of Problem \eqref{eq:sparsity_optimization_high_level} and leads to a smaller zero norm for the objective function, and thus $\hat{\mathbf{A}}$ will never be a solution of Problem \eqref{eq:sparsity_optimization_high_level}, which is a contradiction.
\end{proof}

\subsection{Proof of Theorem \ref{thm:dag_mec_singleton}}\label{app:proof_dag_mec_singleton}
In this section, we first provide the proofs of Propositions \ref{proposition:connection_dag} and \ref{proposition:mec_singleton}, which together straightforwardly imply Theorem \ref{thm:dag_mec_singleton}. Before proving Proposition \ref{proposition:connection_dag}, we state the following lemma from \citet{shimizu2006lingam} that is useful for the proof.
\begin{lemma}[{{\citet[Lemma~1]{shimizu2006lingam}}}]\label{lemma:equal_row_column_permutations}
Let $\mathbf{A}$ be a lower triangular matrix with all diagonal entries being nonzero. Let $\mathbf{P}_1$ and $\mathbf{P}_2$ be two permutation matrices. Then, a permutation of rows and columns of $\mathbf{A}$, i.e., $\mathbf{P}_1^\top\mathbf{A}\mathbf{P}_2$, has only nonzero entries in the diagonal if and only if the row and column permutations are equal, i.e., $\mathbf{P}_1=\mathbf{P}_2$.
\end{lemma}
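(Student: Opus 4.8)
The plan is to reduce the entrywise condition on $\mathbf{P}_1^\top\mathbf{A}\mathbf{P}_2$ to a statement about the two underlying permutations, and then combine lower triangularity with a one-line counting identity. First I would fix notation: let $\sigma_1,\sigma_2$ be the permutations of $[n]$ with $(\mathbf{P}_t)_{i,j}=1$ precisely when $i=\sigma_t(j)$, for $t\in\{1,2\}$. A direct expansion of the triple product then yields $(\mathbf{P}_1^\top\mathbf{A}\mathbf{P}_2)_{i,i}=a_{\sigma_1(i),\sigma_2(i)}$ for every $i\in[n]$; this identity drives both implications.

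For the ``if'' direction, setting $\mathbf{P}_1=\mathbf{P}_2$ gives $\sigma_1=\sigma_2=:\sigma$, so the $i$-th diagonal entry equals $a_{\sigma(i),\sigma(i)}$; since $\sigma$ is a bijection and every diagonal entry of $\mathbf{A}$ is nonzero by hypothesis, all diagonal entries of $\mathbf{P}_1^\top\mathbf{A}\mathbf{P}_2$ are nonzero. For the ``only if'' direction, suppose $a_{\sigma_1(i),\sigma_2(i)}\neq 0$ for all $i$. Lower triangularity of $\mathbf{A}$ means $a_{k,l}\neq 0$ implies $k\ge l$, hence $\sigma_1(i)\ge\sigma_2(i)$ for every $i$. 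Summing over $i$, both $\sum_i\sigma_1(i)$ and $\sum_i\sigma_2(i)$ equal $n(n+1)/2$ because $\sigma_1,\sigma_2$ are permutations of $[n]$; therefore the inequalities $\sigma_1(i)\ge\sigma_2(i)$ are all equalities, so $\sigma_1=\sigma_2$ and $\mathbf{P}_1=\mathbf{P}_2$.

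I do not anticipate a real obstacle: the lemma is elementary once the product formula is written out. The only point requiring care is the bookkeeping of permutation-matrix conventions (left versus right multiplication, and whether $\mathbf{P}_t$ realizes $\sigma_t$ or $\sigma_t^{-1}$), since an inconsistent choice would transpose the roles of $\sigma_1$ and $\sigma_2$ or flip the inequality direction; pinning down one convention at the start and using it throughout eliminates this. If one prefers to avoid the summation trick, an induction on $n$ that removes the row and column of the largest diagonal index also works, but the counting argument is cleaner and I would present that.
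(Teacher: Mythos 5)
Your proof is correct. Note that the paper itself gives no proof of this lemma --- it is quoted verbatim from \citet[Lemma~1]{shimizu2006lingam} --- so there is no in-paper argument to compare against; what you have written is a valid, self-contained derivation. The key identity $(\mathbf{P}_1^\top\mathbf{A}\mathbf{P}_2)_{i,i}=a_{\sigma_1(i),\sigma_2(i)}$ is right under your stated convention $(\mathbf{P}_t)_{i,j}=1 \iff i=\sigma_t(j)$, the ``if'' direction is immediate, and the ``only if'' direction via $\sigma_1(i)\ge\sigma_2(i)$ for all $i$ plus $\sum_i\sigma_1(i)=\sum_i\sigma_2(i)=n(n+1)/2$ forces equality everywhere. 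This counting argument is cleaner than the usual inductive proof (peeling off the column whose only admissible nonzero sits in the last row, as in the original LiNGAM appendix), and you correctly resolve the one ambiguity in the statement by reading ``has only nonzero entries in the diagonal'' as ``all diagonal entries are nonzero,'' which is how the lemma is invoked in the proof of Proposition~\ref{proposition:connection_dag}.
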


We now provide the proof of Proposition \ref{proposition:connection_dag}.
\begin{proposition}\label{proposition:connection_dag}
Suppose $\mathbf{A}\sim(\mathbf{I}-\mathbf{B})\mathbf{\Omega}^{-\frac{1}{2}}$ for matrices $\mathbf{A}\in\mathbb{R}^{n\times n}$, $\mathbf{B}\in\mathbb{R}_{\operatorname{off}}^{n\times n}$, and $\mathbf{\Omega}\in\operatorname{diag}(\mathbb{R}_{> 0}^n)$. Then, 
$\mathbf{A}$ satisfies Assumption \ref{assumption:lower_triangular} if and only if matrix $\mathbf{B}$ represents a DAG.
\end{proposition}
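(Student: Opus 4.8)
The plan is to first reduce the statement about $\mathbf{A}$ to an equivalent statement about $\mathbf{I}-\mathbf{B}$, and then connect the latter to acyclicity of $\mathbf{B}$ via \emph{simultaneous} row and column permutations. For the reduction, observe that Assumption~\ref{assumption:lower_triangular} depends only on the support matrix, since lower triangularity of $\mathbf{P}_1^\top\mathbf{A}\mathbf{P}_2$ is purely a statement about which entries vanish. Because $\mathbf{A}\sim(\mathbf{I}-\mathbf{B})\mathbf{\Omega}^{-\frac{1}{2}}$, the matrix $\mathbf{A}$ differs from $(\mathbf{I}-\mathbf{B})\mathbf{\Omega}^{-\frac{1}{2}}$ only by a signed column permutation; column permutations can be absorbed into $\mathbf{P}_2$ and sign changes leave the support unchanged, so $\mathbf{A}$ satisfies Assumption~\ref{assumption:lower_triangular} if and only if $(\mathbf{I}-\mathbf{B})\mathbf{\Omega}^{-\frac{1}{2}}$ does. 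Moreover, right multiplication by the positive diagonal matrix $\mathbf{\Omega}^{-\frac{1}{2}}$ merely rescales columns and hence does not change the support, so $(\mathbf{I}-\mathbf{B})\mathbf{\Omega}^{-\frac{1}{2}}$ satisfies Assumption~\ref{assumption:lower_triangular} if and only if $\mathbf{I}-\mathbf{B}$ does. It therefore suffices to prove that $\mathbf{I}-\mathbf{B}$ satisfies Assumption~\ref{assumption:lower_triangular} if and only if $\mathbf{B}$ represents a DAG.

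For the ``if'' direction, suppose $\mathbf{B}$ represents a DAG. Then there is a permutation matrix $\mathbf{P}$ realizing a topological order, i.e., $\mathbf{P}^\top\mathbf{B}\mathbf{P}$ is strictly lower triangular. Since conjugation by a permutation maps the diagonal to itself and $\mathbf{B}$ has zero diagonal, $\mathbf{P}^\top(\mathbf{I}-\mathbf{B})\mathbf{P}=\mathbf{I}-\mathbf{P}^\top\mathbf{B}\mathbf{P}$ is lower triangular with unit diagonal, so $\mathbf{I}-\mathbf{B}$ satisfies Assumption~\ref{assumption:lower_triangular} with $\mathbf{P}_1=\mathbf{P}_2=\mathbf{P}$. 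For the ``only if'' direction, suppose $\mathbf{I}-\mathbf{B}$ satisfies Assumption~\ref{assumption:lower_triangular}, so there exist permutation matrices $\mathbf{P}_1,\mathbf{P}_2$ with $\mathbf{P}_1^\top(\mathbf{I}-\mathbf{B})\mathbf{P}_2=\mathbf{P}_1^\top\mathbf{P}_2-\mathbf{P}_1^\top\mathbf{B}\mathbf{P}_2$ lower triangular. The supports of $\mathbf{I}$ and $\mathbf{B}$ are disjoint (as $\mathbf{B}$ has zero diagonal), hence so are the supports of $\mathbf{P}_1^\top\mathbf{P}_2$ and $\mathbf{P}_1^\top\mathbf{B}\mathbf{P}_2$; therefore every nonzero entry of the permutation matrix $\mathbf{P}_1^\top\mathbf{P}_2$ survives in the sum, which forces $\mathbf{P}_1^\top\mathbf{P}_2$ itself to be lower triangular and hence equal to $\mathbf{I}$. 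By Lemma~\ref{lemma:lower_triangular_implies_equal_permutation} we get $\mathbf{P}_1=\mathbf{P}_2$, so $\mathbf{I}-\mathbf{P}_1^\top\mathbf{B}\mathbf{P}_1$ is lower triangular with unit diagonal; since $\mathbf{P}_1^\top\mathbf{B}\mathbf{P}_1$ has zero diagonal, it is strictly lower triangular, i.e., the graph of $\mathbf{B}$ admits a topological order and is thus a DAG.

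The crux is the ``only if'' direction, specifically the step showing that the separate row and column permutations $\mathbf{P}_1,\mathbf{P}_2$ witnessing Assumption~\ref{assumption:lower_triangular} can be taken equal; without this one would only conclude that $\mathbf{B}$ can be made triangular by \emph{distinct} row and column relabelings, which is strictly weaker than acyclicity. This step is exactly the support-disjointness argument already carried out in the proof of Lemma~\ref{lemma:lower_triangular_implies_nonsingular}, which we re-invoke above; the remaining steps are routine bookkeeping about permutations and supports.
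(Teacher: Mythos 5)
Your proof is correct, and while the ``if'' direction coincides with the paper's, your ``only if'' direction takes a genuinely cleaner route. The paper keeps working with $\mathbf{A}=(\mathbf{I}-\mathbf{B})\mathbf{\Omega}^{-\frac{1}{2}}\mathbf{P}_1$ directly: it first invokes Lemma~\ref{lemma:lower_triangular_implies_nonsingular} only to get non-singularity of $\mathbf{I}-\mathbf{B}$, then uses a determinant argument to show that the triangularized $\mathbf{A}$ has nonzero diagonal, and finally applies \citet{shimizu2006lingam}'s Lemma~\ref{lemma:equal_row_column_permutations} together with some permutation bookkeeping ($\mathbf{P}_4=\mathbf{P}_2^{-1}$, $\mathbf{P}_5=\mathbf{P}_3^{-1}\mathbf{P}_1^{-1}$) to force the row and column permutations to match. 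You instead reduce the whole statement to $\mathbf{I}-\mathbf{B}$ up front --- legitimate, since Assumption~\ref{assumption:lower_triangular} is a pure support condition and both the signed column permutation and the right-multiplication by $\mathbf{\Omega}^{-\frac{1}{2}}$ preserve supports up to a column permutation that can be absorbed into $\mathbf{P}_2$ --- and then observe that the support-disjointness argument inside the paper's own proof of Lemma~\ref{lemma:lower_triangular_implies_nonsingular} already yields $\mathbf{P}_1=\mathbf{P}_2$ and hence that $\mathbf{P}_1^\top\mathbf{B}\mathbf{P}_1$ is strictly lower triangular. This bypasses the determinant step and the external Shimizu lemma entirely; the paper's version, in exchange, keeps the argument anchored to the original matrix $\mathbf{A}$ and makes the role of the intermediate permutation $\mathbf{P}_1$ explicit, which it reuses in later manipulations. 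You also correctly identify the equal-permutation step as the crux: without it one would only get triangularizability under distinct row and column relabelings, which does not imply acyclicity.
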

\begin{proof}
Without loss of generality, we consider the case in which $\mathbf{A}$ and $(\mathbf{I}-\mathbf{B})\mathbf{\Omega}^{-\frac{1}{2}}$ differ only in column permutations, instead of signed column permutations. This is because, for the latter case, there exists a diagonal matrix $\mathbf{D}$ with diagonal entries being $\pm 1$ such that $\mathbf{A}\mathbf{D}$ and $(\mathbf{I}-\mathbf{B})\mathbf{\Omega}^{-\frac{1}{2}}$ differ only in column permutations, and furthermore, $\mathbf{A}\mathbf{D}$ satisfies Assumption \ref{assumption:lower_triangular} if and only if $\mathbf{A}$ satisfies Assumption \ref{assumption:lower_triangular}.

Therefore, suppose there exists a permutation matrix $\mathbf{P}_1$ such that 
\begin{equation}\label{eq:connection_dag_given}
\mathbf{A}=(\mathbf{I}-\mathbf{B})\mathbf{\Omega}^{-\frac{1}{2}}\mathbf{P}_1.
\end{equation}
We now consider both parts of the statements.

\textbf{If part:}

Suppose that matrix $\mathbf{B}$ represents a DAG. Then, there exists permutation matrix $\mathbf{P}_2$ such that $\mathbf{P}_2^\top \mathbf{B}\mathbf{P}_2$ is strictly lower triangular. Therefore, $\mathbf{P}_2^\top (\mathbf{I}-\mathbf{B})\mathbf{P}_2= \mathbf{I}-\mathbf{P}_2^\top \mathbf{B}\mathbf{P}_2$ is lower triangular. This implies that $\mathbf{P}_2^\top (\mathbf{I}-\mathbf{B})\mathbf{\Omega}^{-\frac{1}{2}}\mathbf{P}_2$ is lower triangular because $\mathbf{\Omega}^{-\frac{1}{2}}$ is a diagonal matrix and does not affect the support. By substituting Eq. \eqref{eq:connection_dag_given}, $\mathbf{P}_2^\top \mathbf{A}\mathbf{P}_1^{-1}\mathbf{P}_2$ is lower triangular. Clearly, $\mathbf{P}_1^{-1}\mathbf{P}_2$ is also a permutation matrix. Therefore, matrix $\mathbf{A}$ can be permuted by row and column permutations to be lower triangular, and thus satisfies Assumption \ref{assumption:lower_triangular}.

\textbf{Only if part:}

Suppose matrix $\mathbf{A}$ satisfies Assumption \ref{assumption:lower_triangular}. Then, there exist permutation matrices $\mathbf{P}_2$ and $\mathbf{P}_3$ such that $\mathbf{P}_2^\top \mathbf{A}\mathbf{P}_3$ is lower triangular. Substituting Eq. \eqref{eq:connection_dag_given}, $\mathbf{P}_2^\top (\mathbf{I}-\mathbf{B})\mathbf{\Omega}^{-\frac{1}{2}}\mathbf{P}_1\mathbf{P}_3$ is lower triangular. Since $\mathbf{P}_1\mathbf{P}_3$ is also permutation matrix, this indicates that $(\mathbf{I}-\mathbf{B})\mathbf{\Omega}^{-\frac{1}{2}}$, and thus $\mathbf{I}-\mathbf{B}$, satisfy Assumption \ref{assumption:lower_triangular}. By Lemma \ref{lemma:lower_triangular_implies_nonsingular}, $\mathbf{I}-\mathbf{B}$ is non-singular, which indicates that
\[
\det(\mathbf{I}-\mathbf{B})\neq0.
\]
Note that
\[
\det(\mathbf{\Omega}^{-\frac{1}{2}})>0 \quad\text{and}\quad \det(\mathbf{P}_1)=1.
\]
With Eq. \eqref{eq:connection_dag_given}, we have
\[
\det(\mathbf{A})\neq 0,
\]
and therefore
\[
\det(\mathbf{P}_2^\top \mathbf{A}\mathbf{P}_3)\neq 0.
\]
It is known that the determinant of a lower triangular matrix is the product of its diagonal entries. Since $\mathbf{P}_2^\top \mathbf{A}\mathbf{P}_3$ is lower triangular and $\det(\mathbf{P}_2^\top \mathbf{A}\mathbf{P}_3)\neq 0$, all diagonal entries of $\mathbf{P}_2^\top \mathbf{A}\mathbf{P}_3$ must be nonzero. Now define
\begin{equation}\label{eq:proof_connection_dag_p4p5}
\mathbf{P}_4\coloneqq \mathbf{P}_2^{-1}\quad\text{and}\quad\mathbf{P}_5\coloneqq \mathbf{P}_3^{-1}\mathbf{P}_1^{-1},
\end{equation}
both of which are permutation matrices. By some algebraic manipulations of Eq. \eqref{eq:connection_dag_given} and further substituting the above definitions, we have
\[
\mathbf{P}_4^\top(\mathbf{P}_2^\top \mathbf{A}\mathbf{P}_3)\mathbf{P}_5=(\mathbf{I}-\mathbf{B})\mathbf{\Omega}^{-\frac{1}{2}},
\]
where all diagonal entries are nonzeros. Applying Lemma \ref{lemma:equal_row_column_permutations} w.r.t. matrix $\mathbf{P}_2^\top \mathbf{A}\mathbf{P}_3$, we have
\[\mathbf{P}_4=\mathbf{P}_5,\]
which, by plugging into Eq. \eqref{eq:proof_connection_dag_p4p5}, implies
\[
\mathbf{P}_2=\mathbf{P}_1\mathbf{P}_3.
\]
Since we have shown that $\mathbf{P}_2^\top (\mathbf{I}-\mathbf{B})\mathbf{\Omega}^{-\frac{1}{2}}\mathbf{P}_1\mathbf{P}_3$ is lower triangular, further substitution of $\mathbf{P}_2=\mathbf{P}_1\mathbf{P}_3$ indicates that $\mathbf{P}_2^\top (\mathbf{I}-\mathbf{B})\mathbf{\Omega}^{-\frac{1}{2}}\mathbf{P}_2$ is lower triangular. Since right multiplication of $\mathbf{\Omega}^{-\frac{1}{2}}$ does not affect the support of $\mathbf{I}-\mathbf{B}$, we see that
\[
\mathbf{P}_2^\top (\mathbf{I}-\mathbf{B})\mathbf{P}_2=\mathbf{I}-\mathbf{P}_2^\top\mathbf{B}\mathbf{P}_2
\]
is also lower triangular. This indicates that $\mathbf{P}_2^\top\mathbf{B}\mathbf{P}_2$ is lower triangular, which, with the assumption that the diagonal entries of $\mathbf{B}$ are zeros, imply that $\mathbf{P}_2^\top\mathbf{B}\mathbf{P}_2$ is strictly lower triangular. Therefore, matrix $\mathbf{B}$ represents a DAG.
\end{proof}

After proving Proposition \ref{proposition:connection_dag}, we now consider Proposition \ref{proposition:mec_singleton}. Before that, we provide a result by \citet{ghassami2020characterizing} that is useful for the proof. We first describe the notion of parent exchange by \citet{ghassami2020characterizing}. Let $\triangle$ be the symmetric difference operator that identifies the elements present in either of the sets but not in the intersection. For DAG $\mathcal{G}$ with weighted adjacency matrix $\mathbf{B}$, its vertices $x_i$ and $x_j$ are said to be \emph{parent exchangeable} if $|\supp((\mathbf{I}-\mathbf{B})_i)\triangle \supp((\mathbf{I}-\mathbf{B})_j)|=1$, i.e., there exists $k\in[n]$ such that $\supp((\mathbf{I}-\mathbf{B})_i)\triangle \supp((\mathbf{I}-\mathbf{B})_j)=\{k\}$. In such case, a support rotation can be performed on columns $(\boldsymbol{\xi}_{\mathbf{I}-\mathbf{B}})_{i}$ and $(\boldsymbol{\xi}_{\mathbf{I}-\mathbf{B}})_{j}$ that sets a nonzero entry on those columns, except $(\boldsymbol{\xi}_{\mathbf{I}-\mathbf{B}})_{i,i}$ and $(\boldsymbol{\xi}_{\mathbf{I}-\mathbf{B}})_{j,j}$, to zero. In other words, the parent of $x_i$ and $x_j$ that corresponds to the zeroed entry is removed. Furthermore, the entry $(\boldsymbol{\xi}_{\mathbf{I}-\mathbf{B}})_{k,i}$ or $(\boldsymbol{\xi}_{\mathbf{I}-\mathbf{B}})_{k,j}$ is set to $\times$, which corresponds to adding the missing edge $x_k\rightarrow x_i$ or $x_k\rightarrow x_j$. \citet{ghassami2020characterizing} defined such an operation to be a \emph{parent exchange}. We then provide the following corollary that is straightforwardly derived from \citet[Corollary~2~\&~Proposition~1]{ghassami2020characterizing}.
\begin{corollary}[{{\citet[Corollary~2]{ghassami2020characterizing}}}]\label{cor:equivalent_parent_exchanges}
DAGs $\mathcal{G}_1$ and $\mathcal{G}_2$ are Markov equivalent if and only if there exists a sequence of parent exchanges that maps $\mathcal{G}_1$ to $\mathcal{G}_2$, and one that maps $\mathcal{G}_2$ to $\mathcal{G}_1$.
\end{corollary}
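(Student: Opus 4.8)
The plan is to obtain the corollary as a short composition of two results of \citet{ghassami2020characterizing}, using covariance equivalence of DAGs as the bridge. Recall that $\mathcal{G}_1$ and $\mathcal{G}_2$ are covariance equivalent when $\mathbf{\Theta}(\mathcal{G}_1)=\mathbf{\Theta}(\mathcal{G}_2)$, i.e., when they entail the same set of matrices of the form $(\mathbf{I}-\mathbf{B})\mathbf{\Omega}^{-1}(\mathbf{I}-\mathbf{B})^\top$. The first ingredient is \citet[Proposition~1]{ghassami2020characterizing}, which states that two DAGs are Markov equivalent if and only if they are covariance equivalent --- the same fact already invoked in the proof of Proposition~\ref{proposition:same_hard_constraints_imply_distribution_equivalent}. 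Hence it suffices to prove the corollary with ``Markov equivalent'' replaced by ``covariance equivalent''.

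The second ingredient is \citet[Corollary~2]{ghassami2020characterizing}, which characterizes covariance equivalence of DAGs precisely in terms of mutual reachability by parent exchanges. One direction is the elementary one: a parent exchange on an exchangeable pair $x_i,x_j$ is, by construction, realized by the Givens rotation appearing in the definition of a support rotation, and such a rotation leaves $(\mathbf{I}-\mathbf{B})\mathbf{\Omega}^{-1}(\mathbf{I}-\mathbf{B})^\top$ unchanged; so if some sequence of parent exchanges maps $\mathcal{G}_1$ to $\mathcal{G}_2$ then $\mathbf{\Theta}(\mathcal{G}_1)\subseteq\mathbf{\Theta}(\mathcal{G}_2)$, and a sequence in the other direction gives the reverse inclusion, hence equality. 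The converse --- that covariance-equivalent DAGs can always be connected by such a sequence in each direction --- is the substantive content of \citet{ghassami2020characterizing}, built on their case analysis of the effects of support rotations (Remark~\ref{remark:givens_rotation}, in particular the reversible acute rotation case). Chaining the two biconditionals yields the stated corollary.

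The only thing that genuinely requires care --- and where I would spend the effort --- is lining up the definitions in the second step: checking that the parent-exchange move as described (deleting the shared parent corresponding to the zeroed entry of $(\boldsymbol{\xi}_{\mathbf{I}-\mathbf{B}})_i$ and $(\boldsymbol{\xi}_{\mathbf{I}-\mathbf{B}})_j$, and adding whichever of $x_k\to x_i$ or $x_k\to x_j$ is missing) keeps the graph acyclic so that we stay within the class of DAGs throughout, and confirming that reachability is genuinely needed in \emph{both} directions: the reverse of a parent exchange is again a parent exchange, but only after re-verifying the parent-exchangeability condition on the resulting graph, so the relation ``reachable from'' is not symmetric a priori, and it is exactly this asymmetry that forces the two-directional statement. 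Since all of this bookkeeping is already carried out in \citet{ghassami2020characterizing}, the derivation here is immediate; the substance lives in their Corollary~2 and Proposition~1.
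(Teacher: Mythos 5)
Your proposal matches the paper exactly: the paper offers no independent proof of this corollary and simply states that it is straightforwardly derived by chaining \citet[Proposition~1]{ghassami2020characterizing} (Markov equivalence $\Leftrightarrow$ covariance equivalence) with \citet[Corollary~2]{ghassami2020characterizing} (covariance equivalence $\Leftrightarrow$ mutual reachability by parent exchanges), which is precisely your argument. Your additional remarks on acyclicity preservation and the asymmetry of the reachability relation are sensible bookkeeping but do not change the route.
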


We now provide the proof of Proposition \ref{proposition:mec_singleton}.
\begin{proposition}\label{proposition:mec_singleton}
Let $\mathcal{G}$ be a DAG with weighted adjacency matrix $\mathbf{B}$. Then, matrix $\mathbf{I}-\mathbf{B}$ satisfies Assumption \ref{assumption:structural_variability} if and only if the Markov equivalence class of $\mathcal{G}$ is a singleton.
\end{proposition}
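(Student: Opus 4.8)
The plan is to reduce the statement to a combinatorial fact about the columns of $\mathbf{I}-\mathbf{B}$ and then appeal to Corollary \ref{cor:equivalent_parent_exchanges}. Write $\mathbf{A}\coloneqq\mathbf{I}-\mathbf{B}$. Since $\mathbf{B}$ has zero diagonal and $b_{k,i}\neq 0$ encodes the edge $x_k\to x_i$, the $i$-th column of $\mathbf{A}$ has support $\supp(\mathbf{a}_i)=\{i\}\cup\operatorname{Pa}(x_i)$, where $\operatorname{Pa}(x_i)$ denotes the parents of $x_i$ in $\mathcal{G}$. Using $|S\cup T|-|S\cap T|=|S\triangle T|$, Assumption \ref{assumption:structural_variability} for $\mathbf{A}$ says exactly that $|\supp(\mathbf{a}_i)\triangle\supp(\mathbf{a}_j)|>1$ for every $i\neq j$. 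The first step I would record is that $|\supp(\mathbf{a}_i)\triangle\supp(\mathbf{a}_j)|\geq 1$ always holds: if this symmetric difference were empty then $j\in\supp(\mathbf{a}_i)$ and $i\in\supp(\mathbf{a}_j)$, which would force both $x_i\to x_j$ and $x_j\to x_i$, contradicting that $\mathbf{B}$ represents a DAG. Consequently, $\mathbf{A}$ \emph{violates} Assumption \ref{assumption:structural_variability} if and only if some pair $i\neq j$ has $|\supp(\mathbf{a}_i)\triangle\supp(\mathbf{a}_j)|=1$, i.e.\ if and only if $\mathcal{G}$ contains a pair of parent-exchangeable vertices in the sense of \citet{ghassami2020characterizing}.

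It then remains to show that $\mathcal{G}$ has a parent-exchangeable pair if and only if its Markov equivalence class is not a singleton; together with the first paragraph this is equivalent to the proposition. For the ``only if'' direction I would take such a pair $x_i,x_j$ and perform the corresponding parent exchange; by the construction and the results of \citet{ghassami2020characterizing} this produces a DAG $\mathcal{G}'$ that is Markov equivalent to $\mathcal{G}$, and $\mathcal{G}'\neq\mathcal{G}$ because the exchange deletes one edge and creates a previously absent one, so $\supp(\mathbf{B})$ changes; hence the equivalence class has at least two members. For the converse, if the class is not a singleton, pick $\mathcal{G}'\neq\mathcal{G}$ in it; Corollary \ref{cor:equivalent_parent_exchanges} yields a sequence of parent exchanges mapping $\mathcal{G}$ to $\mathcal{G}'$, and this sequence is nonempty because $\mathcal{G}\neq\mathcal{G}'$, so its first step is a parent exchange applicable to $\mathcal{G}$, which by definition requires a parent-exchangeable pair in $\mathcal{G}$. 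Combining these two implications with the reduction from the first paragraph proves the proposition.

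The genuinely delicate point is the ``only if'' step above: that a single parent exchange of a parent-exchangeable pair stays within the class of DAGs and lands in the same Markov equivalence class. This should follow from \citet{ghassami2020characterizing}, since parent exchanges preserve Markov equivalence and are reversible, so that Corollary \ref{cor:equivalent_parent_exchanges} applies with a length-one sequence in either direction. Concretely, for a parent-exchangeable pair the two vertices are adjacent in $\mathcal{G}$ and the edge between them is covered, and the reversal of a covered edge is again a covered-edge reversal; but I would still verify explicitly that the exchanged graph is acyclic, as that is the only place where the DAG hypothesis on $\mathbf{B}$ is invoked a second time. The remaining ingredients---the column-support identity and the inclusion--exclusion rewriting---are routine.
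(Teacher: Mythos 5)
Your proposal is correct and follows essentially the same route as the paper's proof: translate Assumption \ref{assumption:structural_variability} for $\mathbf{I}-\mathbf{B}$ into the condition that no pair of columns has symmetric difference exactly one (ruling out symmetric difference zero via the two-cycle argument), identify that condition with the absence of parent-exchangeable pairs, and conclude via Corollary \ref{cor:equivalent_parent_exchanges}. The only cosmetic difference is at the step you flag as delicate: the paper verifies acyclicity of the exchanged graph by explicitly tracing a would-be cycle back to a cycle in $\mathcal{G}$, whereas you invoke the covered-edge-reversal characterization, which is a valid shortcut since a parent-exchangeable pair is precisely a covered edge.
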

\begin{proof}
We consider both parts of the statements.

\clearpage
\textbf{If part:}

We provide a proof by contrapositive. Suppose that matrix $\mathbf{I}-\mathbf{B}$ does not satisfy Assumption \ref{assumption:lower_triangular}. That is, there exist $i,j\in[n]$ and $i\neq j$ such that
\[
|\supp((\mathbf{I}-\mathbf{B})_i)\triangle \supp((\mathbf{I}-\mathbf{B})_j)|<1.
\]
Clearly, we have $\supp((\mathbf{I}-\mathbf{B})_i)\neq \supp((\mathbf{I}-\mathbf{B})_j)$, because otherwise there will be a cycle with length of two over variables $x_i$ and $x_j$, which contradicts the assumption that $\mathcal{G}$ is a acyclic. This implies 
\begin{equation}\label{eq:mec_singleton_diff_one}
|\supp((\mathbf{I}-\mathbf{B})_i)\triangle \supp((\mathbf{I}-\mathbf{B})_j)|=1.
\end{equation}
By definition, $x_i$ and $x_j$ are parent exchangeable, and therefore there exists a parent exchange that maps DAG $\mathcal{G}$ to another directed graph $\mathcal{G}'$ where $\mathcal{G}'\neq\mathcal{G}$. We now show by contradiction that $\mathcal{G}'$ is a DAG. Suppose that $\mathcal{G}'$ is a not DAG. By Eq. \eqref{eq:mec_singleton_diff_one}, there exists variable $x_k$ such that
\[
\supp((\mathbf{I}-\mathbf{B})_i)\triangle \supp((\mathbf{I}-\mathbf{B})_j)=\{k\}.
\]
Here, we must have $k=i$ or $k=j$, because otherwise we have
\[
(\boldsymbol{\xi}_{\mathbf{I}-\mathbf{B}})_{i,i}=(\boldsymbol{\xi}_{\mathbf{I}-\mathbf{B}})_{j,j}=(\boldsymbol{\xi}_{\mathbf{I}-\mathbf{B}})_{i,j}=(\boldsymbol{\xi}_{\mathbf{I}-\mathbf{B}})_{j,i}=\times,
\]
which leads to a cycle over variables $x_i$ and $x_j$. Without loss of generality, we consider the case of $k=j$, which implies
\begin{equation}\label{eq:mec_singleton_diff_j}
\supp((\mathbf{I}-\mathbf{B})_i)\triangle \supp((\mathbf{I}-\mathbf{B})_j)=\{j\}
\end{equation}
and $(\boldsymbol{\xi}_{\mathbf{I}-\mathbf{B}})_{j,i}=0$. This indicates that the entry $(\boldsymbol{\xi}_{\mathbf{I}-\mathbf{B}})_{j,i}$ is set to $\times$ (i.e., the edge $x_j\rightarrow x_i$ is added) after the parent exchange to DAG $\mathcal{G}'$, which subsequently leads to a cycle in $\mathcal{G}'$. In this case, there must exist a path $x_i\rightarrow x_{l_1}\rightarrow \dots \rightarrow x_{l_m} \rightarrow x_j$ in DAG $\mathcal{G}$, where $m<n-2$ and $l_1,\dots,l_m\in[n]\setminus\{i,j\}$, to which adding the edge $x_j\rightarrow x_i$ leads to a cycle in DAG $\mathcal{G}'$. By Eq. \eqref{eq:mec_singleton_diff_j}, $x_{l_m}$ is also a parent of $x_i$, indicating that there exists a cycle $x_i\rightarrow x_{l_1}\rightarrow \dots \rightarrow x_{l_m} \rightarrow x_i$ in DAG $\mathcal{G}$, which contradicts the assumption that $\mathcal{G}$ is acyclic. Therefore, $\mathcal{G}'$ must be a DAG.

Since there exists a parent exchange that maps DAG $\mathcal{G}$ to another DAG $\mathcal{G}'$, clearly there also exists a (reversed) parent exchange that maps DAG $\mathcal{G}'$ back to DAG $\mathcal{G}$. By Corollary \ref{cor:equivalent_parent_exchanges}, DAGs $\mathcal{G}$ and $\mathcal{G}'$ are Markov equivalent. Therefore, the Markov equivalence class of $\mathcal{G}$ contains at least two DAGs and is not a singleton.

\textbf{Only if part:}

Suppose that matrix $\mathbf{I}-\mathbf{B}$ satisfies Assumption \ref{assumption:lower_triangular}. That is, for all $i,j\in[n]$ and $i\neq j$,we have
\[
|\supp((\mathbf{I}-\mathbf{B})_i)\triangle \supp((\mathbf{I}-\mathbf{B})_j)|>1.
\]
In this case, every pair of vertices are not parent exchangeable, and thus parent exchange cannot be applied for any pair of vertices in DAG $\mathcal{G}$. Therefore, for any DAG $\mathcal{G}'\neq \mathcal{G}$, there exists no sequence of parent exchanges that maps $\mathcal{G}$ to $\mathcal{G}'$, implying that they are not Markov equivalent. This indicates that all DAGs are not Markov equivalent to DAG $\mathcal{G}$, except itself, and thus the Markov equivalence class of $\mathcal{G}$ is a singleton.
\end{proof}

With Propositions \ref{proposition:connection_dag} and \ref{proposition:mec_singleton} in place, we provide the proof of Theorem \ref{thm:dag_mec_singleton}.
\ThmDAGMECSingleton*
\begin{proof}
We consider both parts of the statements.

\textbf{If part:}

Suppose matrix $\mathbf{B}$ represents a DAG whose Markov equivalence class is a singleton. By Proposition~\ref{proposition:connection_dag}, matrix $\mathbf{A}$ satisfies Assumption \ref{assumption:lower_triangular}. Furthermore, Proposition \ref{proposition:mec_singleton} implies that $\mathbf{I}-\mathbf{B}$, and thus $(\mathbf{I}-\mathbf{B})\mathbf{\Omega}^{-\frac{1}{2}}$, satisfy Assumption \ref{assumption:structural_variability}. Since $\mathbf{A}$ and $(\mathbf{I}-\mathbf{B})\mathbf{\Omega}^{-\frac{1}{2}}$ differ only in signed column permutations, and Assumption \ref{assumption:structural_variability} involves only pairwise comparison of the support matrix, $\mathbf{A}$ must also satisfy Assumption \ref{assumption:structural_variability}.

\textbf{Only if part:}

Suppose $\mathbf{A}$ satisfies Assumptions \ref{assumption:structural_variability} and \ref{assumption:lower_triangular}. By Proposition \ref{proposition:connection_dag}, matrix $\mathbf{B}$ represents a DAG. Since $\mathbf{A}$ and $(\mathbf{I}-\mathbf{B})\mathbf{\Omega}^{-\frac{1}{2}}$ differ only in signed column permutations, and Assumption \ref{assumption:structural_variability} involves only pairwise comparison of the support matrix, $\mathbf{I}-\mathbf{B}$ also satisfies Assumption \ref{assumption:structural_variability}. By Proposition \ref{proposition:mec_singleton}, the Markov equivalence class of the DAG represented by $\mathbf{B}$ is a singleton.
\end{proof}

\subsection{Proof of Lemma \ref{lemma:reformulated_solution}}

Before proving Lemma \ref{lemma:reformulated_solution}, we first provide another result that is useful for the proof. To ease further reasoning, we define the function 
\[
f(\mathbf{A})= \tr\left(\sum_{k=1}^n (\mathbf{A}\odot \mathbf{A})^k\right),
\]
and clearly we have
\begin{equation}\label{eq:equivalence_h_and_g}
g(\mathbf{A})\equiv f(\operatorname{off}(\mathbf{A})).
\end{equation}
\citet{zheng2018notears,zhang2022truncated} have shown that, for any matrix $\mathbf{A}$, $f(\mathbf{A})=0$ if and only if $\mathbf{A}$ represents the weighted adjacency matrix of a DAG. Also, it is known that the weighted adjacency matrix of a directed graph can be permuted via simultaneous equal
row and column permutations to be strictly lower triangular if and only if the graph is a DAG. Therefore, we provide the following corollary that is straightforwardly implied by the results by \citet{zheng2018notears,wei2020nofears}.
\begin{lemma}[{{\citet[Theorem~1]{wei2020nofears}}}]\label{lemma:strictly_lower_triangular}
For any matrix $\mathbf{A}$, $f(\mathbf{A})=0$ if and only if it can be permuted via simultaneous equal row and column permutations to be strictly lower triangular.
\end{lemma}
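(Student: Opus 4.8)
The plan is to reduce the claim to the strictly-lower-triangular characterization already supplied by Lemma \ref{lemma:strictly_lower_triangular}, using the identity \eqref{eq:equivalence_h_and_g}, and then to bridge the gap between ``strictly lower triangular'' and ``lower triangular'' by exploiting the fact that a \emph{simultaneous equal} row and column permutation maps the diagonal to the diagonal.

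First I would invoke \eqref{eq:equivalence_h_and_g}, which gives $g(\mathbf{A}) = f(\operatorname{off}(\mathbf{A}))$, so that $g(\mathbf{A}) = 0$ if and only if $f(\operatorname{off}(\mathbf{A})) = 0$. Applying Lemma \ref{lemma:strictly_lower_triangular} to the matrix $\operatorname{off}(\mathbf{A})$, this holds if and only if there exists a permutation matrix $\mathbf{P}$ such that $\mathbf{P}^\top \operatorname{off}(\mathbf{A}) \mathbf{P}$ is strictly lower triangular. It then remains to establish the bridging equivalence: for a fixed permutation matrix $\mathbf{P}$, the matrix $\mathbf{P}^\top \operatorname{off}(\mathbf{A}) \mathbf{P}$ is strictly lower triangular if and only if $\mathbf{P}^\top \mathbf{A} \mathbf{P}$ is lower triangular.

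The key observation is that conjugation by a permutation matrix commutes with the operation $\operatorname{off}(\cdot)$. Writing $\mathbf{A} = \operatorname{off}(\mathbf{A}) + \operatorname{diag}(\mathbf{A})$, where $\operatorname{diag}(\mathbf{A})$ denotes the diagonal part of $\mathbf{A}$, the term $\mathbf{P}^\top \operatorname{diag}(\mathbf{A}) \mathbf{P}$ remains diagonal, so that $\operatorname{off}(\mathbf{P}^\top \mathbf{A} \mathbf{P}) = \mathbf{P}^\top \operatorname{off}(\mathbf{A}) \mathbf{P}$. Hence $\mathbf{P}^\top \mathbf{A} \mathbf{P}$ and $\mathbf{P}^\top \operatorname{off}(\mathbf{A}) \mathbf{P}$ agree in every off-diagonal entry, and in particular in every strictly-upper-triangular entry. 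Now $\mathbf{P}^\top \mathbf{A} \mathbf{P}$ is lower triangular precisely when all its strictly-upper-triangular entries vanish; by the agreement just noted, this occurs precisely when all strictly-upper-triangular entries of $\mathbf{P}^\top \operatorname{off}(\mathbf{A}) \mathbf{P}$ vanish; and since $\mathbf{P}^\top \operatorname{off}(\mathbf{A}) \mathbf{P}$ already has zero diagonal, this is exactly the condition that it be strictly lower triangular. Chaining these equivalences with the reduction above yields the lemma.

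I expect the main subtlety—rather than a genuine difficulty—to reside in this bridging step: one must carefully track that ``lower triangular'' permits arbitrary diagonal entries (which is why $\operatorname{diag}(\mathbf{A})$ may be nonzero), whereas ``strictly lower triangular'' forces the diagonal to vanish, and that it is precisely the structure-preserving nature of \emph{simultaneous equal} row and column permutations that makes $\operatorname{off}(\cdot)$ commute with conjugation by $\mathbf{P}$. Were the rows and columns permuted by different permutations, the diagonal of $\mathbf{A}$ could be scattered into off-diagonal positions and the reduction would collapse; this is exactly the distinction that the surrounding discussion draws between our characterization and that of \citet{zheng2018notears,zhang2022truncated}.
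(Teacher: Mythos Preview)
You have proved the wrong statement. The lemma you were asked to establish is Lemma~\ref{lemma:strictly_lower_triangular}, which concerns the function $f(\mathbf{A})=\tr\big(\sum_{k=1}^{n}(\mathbf{A}\odot\mathbf{A})^{k}\big)$ and the \emph{strictly} lower triangular form. Your argument instead establishes Lemma~\ref{lemma:reformulated_solution}, which concerns $g(\mathbf{A})$ and the (not necessarily strict) lower triangular form. Worse, your very first step is to \emph{invoke} Lemma~\ref{lemma:strictly_lower_triangular} as a black box (``Applying Lemma~\ref{lemma:strictly_lower_triangular} to the matrix $\operatorname{off}(\mathbf{A})$''), so with respect to the stated target your reasoning is circular.

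For the record, the paper does not supply an original proof of Lemma~\ref{lemma:strictly_lower_triangular} either; it is quoted as a known consequence of \citet{zheng2018notears,wei2020nofears,zhang2022truncated}. The justification the paper offers is the two-step chain: (i) $f(\mathbf{A})=0$ if and only if $\mathbf{A}$ is the weighted adjacency matrix of a DAG (because $\tr((\mathbf{A}\odot\mathbf{A})^{k})$ counts weighted closed walks of length $k$, and vanishing for all $k\ge 1$ forces no directed cycles), and (ii) a weighted adjacency matrix corresponds to a DAG if and only if some simultaneous equal row/column permutation renders it strictly lower triangular (topological ordering). Your write-up contains none of this content; what you wrote is essentially the paper's proof of Lemma~\ref{lemma:reformulated_solution}, transplanted under the wrong label.
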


We now provide the proof for Lemma \ref{lemma:reformulated_solution}.
\LemmaReformulatedSolution*
\begin{proof}
We first define $\mathbf{D}_\mathbf{A}$ as a diagonal matrix of the same size as matrix $\mathbf{A}$, where its diagonal entries are equal to those of $\mathbf{A}$ and its non-diagonal entries are zero. Clearly, we have $\mathbf{A}=\operatorname{off}(\mathbf{A})+\mathbf{D}_\mathbf{A}$. For any matrix $\mathbf{P}$, This implies
\begin{equation}\label{eq:equal_permutation}
\mathbf{P}^\top\mathbf{A}\mathbf{P}=\mathbf{P}^\top\operatorname{off}(\mathbf{A})\mathbf{P} + \mathbf{P}^\top\mathbf{D}_\mathbf{A}\mathbf{P}.
\end{equation}

\textbf{If part:}

Suppose that there exists permutation matrix $\mathbf{P}$ such that $\mathbf{P}^\top\mathbf{A}\mathbf{P}$ is lower triangular. By Eq. \eqref{eq:equal_permutation}, we have $\mathbf{P}^\top\operatorname{off}(\mathbf{A})\mathbf{P}=\mathbf{P}^\top\mathbf{A}\mathbf{P}-\mathbf{P}^\top\mathbf{D}_\mathbf{A}\mathbf{P}$. Clearly, the diagonal entries of $\mathbf{P}^\top\mathbf{A}\mathbf{P}$ are exactly the same as those of $\mathbf{P}^\top\mathbf{D}_\mathbf{A}\mathbf{P}$, which cancel out each other. Therefore, the diagonal entries of $\mathbf{P}^\top\operatorname{off}(\mathbf{A})\mathbf{P}$ are zeros, indicating that it is strictly lower triangular. By Lemma \ref{lemma:strictly_lower_triangular}, this implies $f(\operatorname{off}(\mathbf{A}))=0$, and thus $g(\mathbf{A})=0$ by Eq.~\eqref{eq:equivalence_h_and_g}.

\textbf{Only if part:}

Suppose $g(\mathbf{A})=f(\operatorname{off}(\mathbf{A}))=0$. By Lemma \ref{lemma:strictly_lower_triangular}, there exists permutation matrix $\mathbf{P}$ such that $\mathbf{P}^\top\operatorname{off}(\mathbf{A})\mathbf{P}$ is strictly lower triangular. Clearly, $\mathbf{P}^\top\mathbf{D}_\mathbf{A}\mathbf{P}$ is a diagonal matrix. By Eq. \eqref{eq:equal_permutation}, $\mathbf{P}^\top\mathbf{A}\mathbf{P}$ is lower triangular, i.e., $\mathbf{A}$ can be permuted via simultaneous equal row and column permutations to be lower triangular.
\end{proof}

\subsection{Proof of Proposition \ref{proposition:reformulated_solution}}
\PropositionReformulatedSolution*

\begin{proof}
We consider both parts of the statements.

\textbf{If part:}

Suppose that there exists matrix $\hat{\mathbf{A}}$ such that it is a column permutation of $\mathbf{A}$ and that $g(\hat{\mathbf{A}})=0$. By definition, there exists permutation matrix $\mathbf{P}_2$ such that $\hat{\mathbf{A}}=\mathbf{A}\mathbf{P}_2$. Also, by Lemma \ref{lemma:reformulated_solution}, there exists permutation matrix $\mathbf{P}_1$ such that $\mathbf{P}_1^\top\hat{\mathbf{A}}\mathbf{P}_1$ is lower triangular, which implies that $\mathbf{P}_1^\top\mathbf{A}\mathbf{P}_2\mathbf{P}_1$ is lower triangular. Clearly, $\mathbf{P}_2\mathbf{P}_1$ is also a permutation matrix. Therefore, matrix $\mathbf{A}$ can be permuted by  row and column permutations to be lower triangular, indicating that it satisfies Assumption \ref{assumption:lower_triangular}.

\textbf{Only if part:}

Suppose that matrix $\mathbf{A}$ satisfies Assumption \ref{assumption:lower_triangular}, i.e., there exist permutation matrices $\mathbf{P}_1$ and $\mathbf{P}_2$ such that $\mathbf{P}_1^\top \mathbf{A} \mathbf{P}_2$ is lower triangular. Defining $\mathbf{P}_3\coloneqq \mathbf{P}_2\mathbf{P}_1^{-1}$, which is also a permutation matrix, and substituting it into the previous statement, we see that $\mathbf{P}_1^\top \mathbf{A} \mathbf{P}_3\mathbf{P}_1$ is lower triangular. Further substitution of $\hat{\mathbf{A}}\coloneqq \mathbf{A} \mathbf{P}_3$ implies that $\mathbf{P}_1^\top \hat{\mathbf{A}}\mathbf{P}_1$ is lower triangular, which, by Lemma \ref{lemma:reformulated_solution}, indicates $g(\hat{\mathbf{A}})=0$. Clearly, $\hat{\mathbf{A}}$ is a column permutation of $\mathbf{A}$.
\end{proof}

\subsection{Proof of Theorem \ref{thm:reformulated_identifiability}}
\TheoremReformulatedIdentifiability*

\begin{proof}
Let $\hat{\mathbf{A}}$ be a solution to Problem \eqref{eq:reformulated_identifiability}. Suppose by contradiction that it is not a solution to Problem \eqref{eq:identifiability_ica}.
That is, there exists matrix $\ddot{\mathbf{A}}$ satisfying Assumption \ref{assumption:lower_triangular} such that
\begin{equation}\label{eq:proof_reformulated_identifiability_other_solution}
\|\ddot{\mathbf{A}}\|_0<\|\hat{\mathbf{A}}\|_0\quad\text{and}\quad \ddot{\mathbf{A}}\ddot{\mathbf{A}}^\top=\tilde{\mathbf{A}}\tilde{\mathbf{A}}^\top.
\end{equation}
By Proposition \ref{proposition:reformulated_solution}, there exists permutation matrix $\mathbf{P}$ such that $g(\ddot{\mathbf{A}}\mathbf{P})=0$. Furthermore, by Eq. \eqref{eq:proof_reformulated_identifiability_other_solution}, we have
\[
\|\ddot{\mathbf{A}}\mathbf{P}\|_0<\|\hat{\mathbf{A}}\|_0\quad\text{and}\quad (\ddot{\mathbf{A}}\mathbf{P})(\ddot{\mathbf{A}}\mathbf{P})^\top=\tilde{\mathbf{A}}\tilde{\mathbf{A}}^\top.
\]
Therefore, $\ddot{\mathbf{A}}\mathbf{P}$ satisfies the constraint of Problem \eqref{eq:reformulated_identifiability} and leads to a smaller zero norm for the objective function, and thus $\hat{\mathbf{A}}$ will never be a solution of Problem \eqref{eq:reformulated_identifiability}, which is a contradiction. Therefore, $\hat{\mathbf{A}}$ must be a solution to Problem \eqref{eq:identifiability_ica}. Since matrix $\tilde{\mathbf{A}}$ also satisfies Assumptions \ref{assumption:structural_variability}, \ref{assumption:lower_triangular}, and~\ref{assumption:faithfulness}, applying Theorem \ref{thm:identifiability_ica} completes the proof.
\end{proof}

\subsection{Proof of Theorem \ref{thm:likelihood_based_estimation}}
\TheoremLikelihoodMethod*

\begin{proof}
First, we have $g(\hat{\mathbf{A}})=0$ and, in the large sample limit, $\bar{\mathbf{\Sigma}}=\tilde{\mathbf{\Sigma}}$. Similar to BIC \citep{schwarz1978estimating}, the likelihood term dominates in the large sample limit as the weight of the likelihood function increases much faster than that of the sparsity regularizer. Therefore, in the large sample limit, we have $\hat{\mathbf{A}}\hat{\mathbf{A}}=\tilde{\mathbf{\Sigma}}=\tilde{\mathbf{A}}\tilde{\mathbf{A}}$. Also, for any matrix $\mathbf{A}$ that satisfies $\mathbf{A}\mathbf{A}=\tilde{\mathbf{A}}\tilde{\mathbf{A}}$ and $g(\mathbf{A})=0$, the sparsity regularizer indicates $\|\hat{\mathbf{A}}\|_0\leq \|\mathbf{A}\|_0$, because otherwise $\hat{\mathbf{A}}$ will never be a solution of Problem \eqref{eq:likelihood_method}. This implies that $\hat{\mathbf{A}}$ is also a solution to Problem \eqref{eq:reformulated_identifiability}. Since matrix $\tilde{\mathbf{A}}$ also satisfies Assumptions \ref{assumption:structural_variability}, \ref{assumption:lower_triangular}, and~\ref{assumption:faithfulness}, applying Theorem \ref{thm:identifiability_ica} completes the proof.
\end{proof}

\section{Supplementary Discussion on Structural Assumptions}
\subsection{Examples Satisfying Structural Variability Assumption}\label{app:examples}
To illustrate the intuition of the proposed assumption of structural variability (i.e., Assumption~\ref{assumption:structural_variability}), we provide several examples on the connective structure from sources to observed variables (which corresponds to the support of mixing matrix) satisfying that assumption, as illustrated in Figure \ref{fig:example}.
\begin{figure}[!ht]
\centering
\subfloat[]{
\begin{tikzpicture}
    \node[latent] (s1) {$s_1$};
    \node[latent,below=of s1,yshift=8mm] (s2) {$s_2$};
    \node[latent,below=of s2,yshift=8mm] (s3) {$s_3$};
    \node[latent,below=of s3,yshift=8mm] (s4) {$s_4$};
    \node[latent,right=of s1,xshift=5mm] (x1) {$x_1$};
    \node[latent,below=of x1,yshift=8mm] (x2) {$x_2$};
    \node[latent,below=of x2,yshift=8mm] (x3) {$x_3$};
    \node[latent,below=of x3,yshift=8mm] (x4) {$x_4$};
    \edge {s1,s3} {x1}
    \edge {s2,s3} {x2}
    \edge {s3,s4} {x3}
    \edge {s4} {x4}
\end{tikzpicture}
}\hspace{2em}
\subfloat[]{
\begin{tikzpicture}
    \node[latent] (s1) {$s_1$};
    \node[latent,below=of s1,yshift=8mm] (s2) {$s_2$};
    \node[latent,below=of s2,yshift=8mm] (s3) {$s_3$};
    \node[latent,below=of s3,yshift=8mm] (s4) {$s_4$};
    \node[latent,right=of s1,xshift=5mm] (x1) {$x_1$};
    \node[latent,below=of x1,yshift=8mm] (x2) {$x_2$};
    \node[latent,below=of x2,yshift=8mm] (x3) {$x_3$};
    \node[latent,below=of x3,yshift=8mm] (x4) {$x_4$};
    \edge {s1} {x1}
    \edge {s1,s2,s4} {x2}
    \edge {s1,s2,s3} {x3}
    \edge {s1,s3,s4} {x4}
\end{tikzpicture}
}\hspace{2em}
\subfloat[]{
\begin{tikzpicture}
    \node[latent] (s1) {$s_1$};
    \node[latent,below=of s1,yshift=8mm] (s2) {$s_2$};
    \node[latent,below=of s2,yshift=8mm] (s3) {$s_3$};
    \node[latent,below=of s3,yshift=8mm] (s4) {$s_4$};
    \node[latent,right=of s1,xshift=5mm] (x1) {$x_1$};
    \node[latent,below=of x1,yshift=8mm] (x2) {$x_2$};
    \node[latent,below=of x2,yshift=8mm] (x3) {$x_3$};
    \node[latent,below=of x3,yshift=8mm] (x4) {$x_4$};
    \edge {s1} {x1}
    \edge {s3} {x2}
    \edge {s1,s2,s3} {x3}
    \edge {s1,s3,s4} {x4}
\end{tikzpicture}
}
\caption{Graphical representations of examples that satisfy Assumption \ref{assumption:structural_variability}.}
\label{fig:example}
\end{figure}
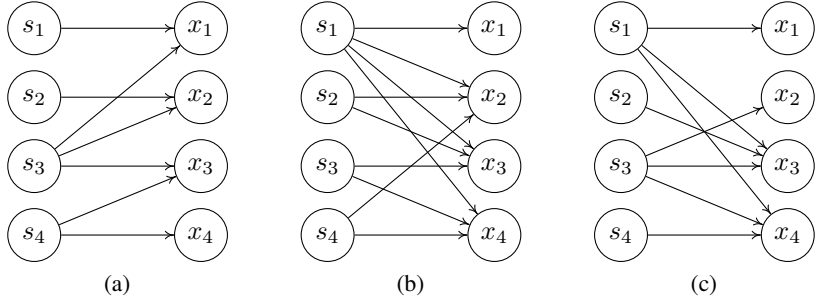

\subsection{Efficient Approach for Verifying Assumption \ref{assumption:lower_triangular}}\label{app:efficient_assumption_verification}
Assumption \ref{assumption:lower_triangular} involves finding a certain combination of row and column permutations for mixing matrix $\mathbf{A}$, which may at first appear inefficient to verify. We provide a more efficient way to do so, by leveraging the interpretation of our assumptions in the context of causal discovery (see Section \ref{sec:connection_causal_discovery} for detailed discussion). Specifically, we provide the following corollary that is a straightforward consequence of Lemma \ref{lemma:nonzero_diagonal_entries} and Proposition \ref{proposition:connection_dag}, whose proof is omitted.
\begin{corollary}[Verification of Assumption \ref{assumption:lower_triangular}]\label{cor:verifying_lower_triangular}
Let $\mathbf{A}$ be a non-singular matrix and $\mathbf{P}$ be a permutation matrix such that the diagonal entries of $\mathbf{A}\mathbf{P}$ are nonzero. Let $\mathcal{G}=(\mathcal{V},\mathcal{E})$ be a directed graph where $\mathcal{V}=\{v_1,\dots,v_n\}$ and $\mathcal{E}=\{v_j\rightarrow v_i:(\mathbf{A}\mathbf{P})_{i,j}\neq 0,i\neq j\}$. Then, directed graph $\mathcal{G}$ is acyclic if and only if matrix $\mathbf{A}$ satisfies Assumption \ref{assumption:lower_triangular}.
\end{corollary}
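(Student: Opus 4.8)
The plan is to reduce Corollary~\ref{cor:verifying_lower_triangular} to Proposition~\ref{proposition:connection_dag}. Since $\mathbf{A}$ is non-singular and, by hypothesis, the diagonal entries of $\mathbf{A}\mathbf{P}$ are nonzero, I would apply Lemma~\ref{lemma:nonzero_diagonal_entries_exists_relation} to the matrix $\mathbf{A}\mathbf{P}$ to obtain $\mathbf{B}\in\mathbb{R}_{\operatorname{off}}^{n\times n}$, $\mathbf{\Omega}\in\operatorname{diag}(\mathbb{R}_{> 0}^n)$, and a diagonal $\pm 1$ matrix $\mathbf{D}$ such that $\mathbf{A}\mathbf{P}=(\mathbf{I}-\mathbf{B})\mathbf{\Omega}^{-\frac{1}{2}}\mathbf{D}$. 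Because right-multiplication by the diagonal matrix $\mathbf{\Omega}^{-\frac{1}{2}}\mathbf{D}$ merely rescales columns and leaves supports unchanged, we get $\supp(\operatorname{off}(\mathbf{A}\mathbf{P}))=\supp(\mathbf{B})$; hence the directed graph $\mathcal{G}$ in the statement is, up to reversing every edge, precisely the graph whose weighted adjacency matrix is $\mathbf{B}$. Since reversing all edges of a directed graph preserves acyclicity, $\mathcal{G}$ is acyclic if and only if $\mathbf{B}$ represents a DAG.

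Next, rewriting the factorization as $\mathbf{A}=(\mathbf{I}-\mathbf{B})\mathbf{\Omega}^{-\frac{1}{2}}(\mathbf{D}\mathbf{P}^{\top})$ and noting that $\mathbf{D}\mathbf{P}^{\top}$ is a signed column permutation (a product of a $\pm1$ diagonal matrix and a permutation matrix), we have $\mathbf{A}\sim(\mathbf{I}-\mathbf{B})\mathbf{\Omega}^{-\frac{1}{2}}$. Applying Proposition~\ref{proposition:connection_dag} to this relation yields that $\mathbf{A}$ satisfies Assumption~\ref{assumption:lower_triangular} if and only if $\mathbf{B}$ represents a DAG. Chaining this with the equivalence from the previous paragraph gives $\mathbf{A}$ satisfies Assumption~\ref{assumption:lower_triangular} $\iff$ $\mathbf{B}$ represents a DAG $\iff$ $\mathcal{G}$ is acyclic, which is the claim. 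Lemma~\ref{lemma:nonzero_diagonal_entries} then enters only to ensure the hypothesis is not vacuous --- it guarantees that a permutation $\mathbf{P}$ with $\mathbf{A}\mathbf{P}$ having nonzero diagonal exists --- and simultaneously supplies the practical recipe: find such a $\mathbf{P}$ (e.g.\ via a perfect matching on the bipartite support graph of $\mathbf{A}$) and test acyclicity of $\mathcal{G}$.

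The only point requiring care is that the corollary is stated for an \emph{arbitrary} admissible $\mathbf{P}$, not a canonically chosen one, so the factorization --- and hence the graph $\mathcal{G}$ --- must be produced from that same $\mathbf{P}$; this is exactly why I invoke Lemma~\ref{lemma:nonzero_diagonal_entries_exists_relation} on $\mathbf{A}\mathbf{P}$ directly rather than routing through Lemma~\ref{lemma:nonsingular_exists_relation}, whose factorization would use its own internally chosen permutation. The remaining bookkeeping --- matching the edge-orientation convention so that ``$\mathcal{G}$ acyclic'' lines up with ``$\mathbf{B}$ represents a DAG'' --- is harmless because acyclicity is invariant under reversing all edges, so I do not expect any genuine obstacle here.
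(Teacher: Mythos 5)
Your proposal is correct and follows essentially the route the paper intends: the paper omits the proof of this corollary, citing it as a straightforward consequence of Lemma~\ref{lemma:nonzero_diagonal_entries} and Proposition~\ref{proposition:connection_dag}, and your argument supplies exactly that reduction (factoring $\mathbf{A}\mathbf{P}$ via Lemma~\ref{lemma:nonzero_diagonal_entries_exists_relation}, identifying $\mathcal{G}$ with the graph of $\mathbf{B}$ up to edge reversal, and invoking Proposition~\ref{proposition:connection_dag}). Your care about using the given $\mathbf{P}$ rather than an internally chosen one, and about the edge-orientation convention, is appropriate and correctly resolved.
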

Specifically, Corollary \ref{cor:verifying_lower_triangular} implies that it suffices to find a column permutation such that the diagonal entries of the permuted matrix are nonzero. Note that such column permutation is guaranteed to exist as indicated by Lemma \ref{lemma:nonzero_diagonal_entries}. We then construct a directed graph $\mathcal{G}$ based on such permuted matrix and check if $\mathcal{G}$ contains cycle, e.g., via depth-first search. Instead of searching for a certain combination of row and column permutations, this procedure may be more efficient because it involves only finding specific column permutations.

\section{Supplementary Estimation Details}\label{app:implementation_details}
We provide the estimation details for the methods described in Section \ref{sec:estimation_method}. In our experiments, we use the average log-likelihood $\frac{1}{T}L(\mathbf{A};\bar{\mathbf{\Sigma}})$ as the objective (instead of $L(\mathbf{A};\bar{\mathbf{\Sigma}})$ in Eq. \eqref{eq:likelihood_method}) for likelihood-based method. For the sparsity term $\rho(\mathbf{A})$, we use MCP with hyperparameters $\lambda=1,\alpha=40$ and $\lambda=0.1,\alpha=10$ for decomposition-based and likelihood-based methods, respectively.

Furthermore, for both methods, we use the L-BFGS algorithm \citep{byrd1995limited} implemented in SciPy \citep{virtanen2020scipy} to solve each unconstrained optimization problem of quadartic penalty method. Since the formulations involve solving nonconvex optimization problems, we run L-BFGS with $30$ random initializations, where each entry of the initial solution $\mathbf{A}_0$ is sampled uniformly at random from $[-0.1, 0.1]$. In this case, the final solution is chosen via model selection. For quadratic
penalty method, we use $c_1=10^{-5}$ and $c_1=10^{-2}$ for decomposition-based and likelihood-based methods, respectively, and use $\beta=1.5$ for both methods.

Lastly, we also use a threshold of $0.01$ to remove small weights in the estimated mixing matrix. We run each of the experiments on $12$ CPUs and $8$ GBs of memory.

{\bf Computational complexity.} \ \ \ 
To compute the constraint term $g(\mathbf{A})$, a straightforward approach is to compute each matrix power in $g(\mathbf{A})$ and then sum their traces up, which requires $O(n)$ matrix multiplications. In our implementation, we adopt a more efficient approach with computational complexity of $O(\log n)$ matrix multiplications, inspired by Zhang et al. (2020). The rough idea is to perform exponentiation by squaring (i.e., a procedure similar to binary search) and recursively compute the term $g(\mathbf{A})$.

Furthermore, each L-BFGS run has a computational complexity of $O(m^2 n^2 + m^3 + m n^2 t)$, where $m\ll n^2$ is the memory size and $t$ is the number of inner iterations of the L-BFGS run. Typically, we have $t=250$ for each L-BFGS run, and $125$ iterations for the quadratic penalty method.

\section{Supplementary Experimental Results}\label{app:numerical_evaluations}
In addition to the MCC reported in Section \ref{sec:numerical_evaluations}, we report the Amari distance to evaluate the identification performance in Figures \ref{fig:different_num_samples_amari_distance} and \ref{fig:different_gaussian_ratios_amari_distance}, respectively.

\begin{figure}[p]
\centering
\subfloat{
    \includegraphics[width=0.69\textwidth]{figures/legend.pdf}
}\\ \vspace{-1.3em}
\setcounter{subfigure}{0}
\subfloat[Likelihood-based method.]{
    \includegraphics[width=0.45\textwidth]{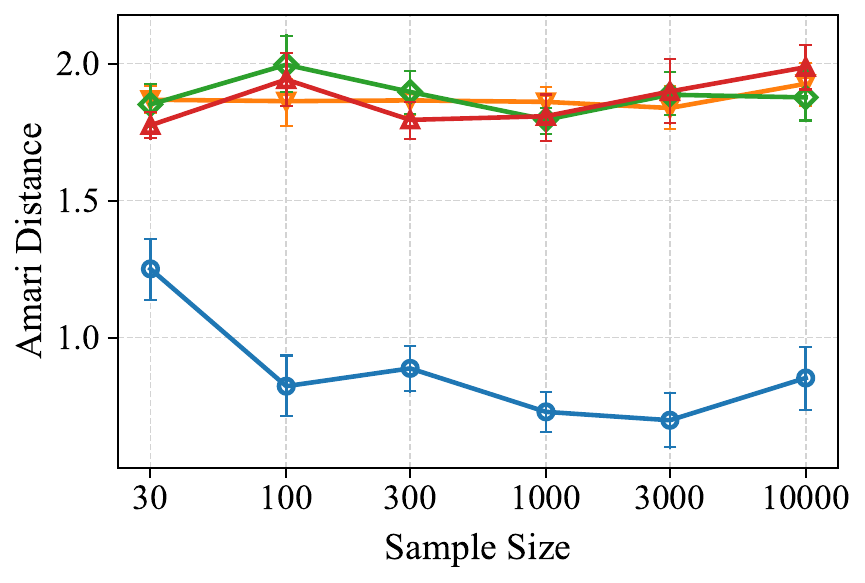}
    \label{fig:gaussian_likelihood_different_num_samples_amari_distance}
}
\subfloat[Decomposition-based method.]{
    \includegraphics[width=0.44\textwidth]{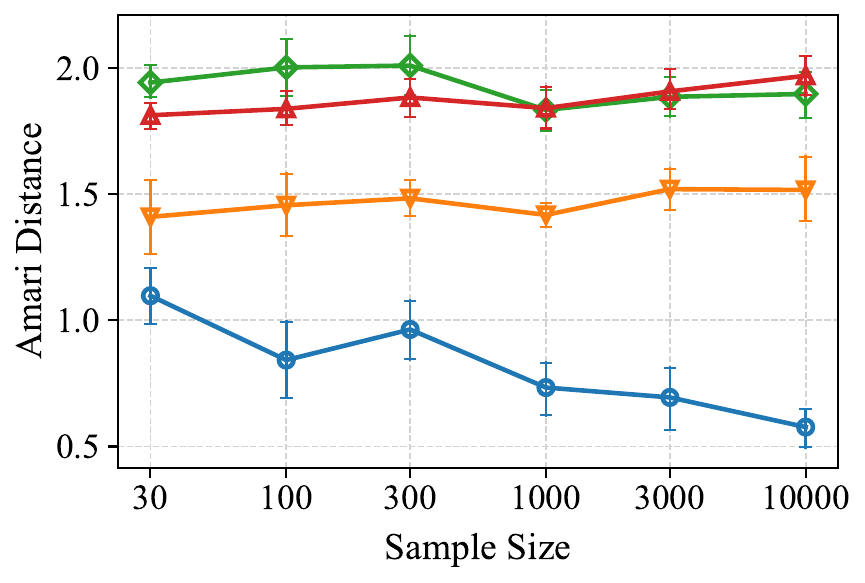}
    \label{fig:gaussian_decomposition_different_num_samples_amari_distance}
}
\caption{Empirical results of Amari distance across different sample sizes. Error bars indicate the standard errors calculated based on $10$ random trials.}
\label{fig:different_num_samples_amari_distance}
\end{figure}

\begin{figure}[p]
\centering
\subfloat{
    \includegraphics[width=0.69\textwidth]{figures/legend.pdf}
}\\ \vspace{-1.3em}
\setcounter{subfigure}{0}
\subfloat[Likelihood-based method.]{
    \includegraphics[width=0.44\textwidth]{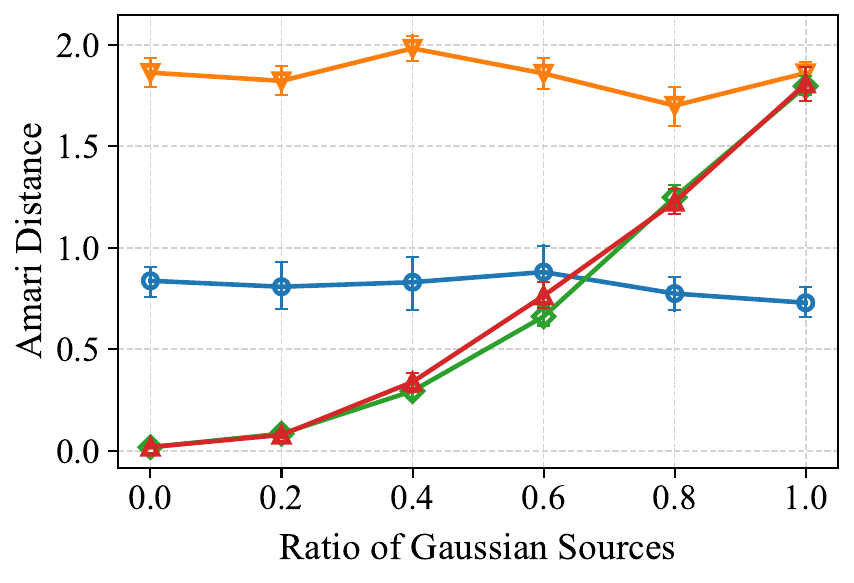}
    \label{fig:likelihood_different_gaussian_ratios_amari_distance}
}
\subfloat[Decomposition-based method.]{
    \includegraphics[width=0.44\textwidth]{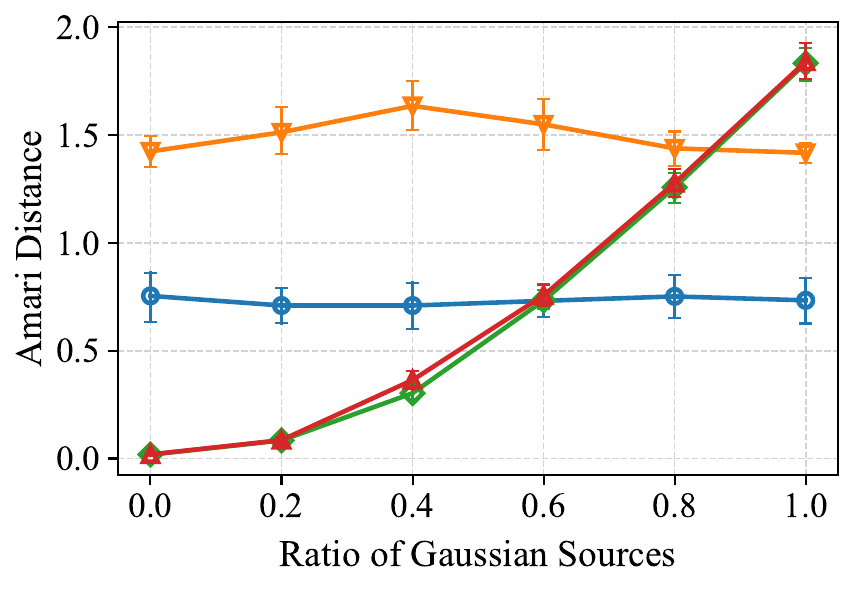}
    \label{fig:decomposition_different_gaussian_ratios_amari_distance}
}
\caption{Empirical results of Amari distance across different ratios of Gaussian sources. Error bars indicate the standard errors calculated based on $10$ random trials.}
\label{fig:different_gaussian_ratios_amari_distance}
\end{figure}
\end{document}